\def\poly{\mathrm{poly}}
\def\polylog{\mathrm{polylog}}
\def\cN{\mathcal{N}}
\def\BB{}
\newcommand{\la}{\langle}
\newcommand{\ra}{\rangle}
\definecolor{LightCyan}{rgb}{0.8, 0.9, 1}
\newcommand{\todoq}[2][]{\todo[size=\scriptsize,color=orange!20!white,#1]{Quanquan: #2}}
\newcommand{\todoy}[2][]{\todo[size=\scriptsize,color=green!20!white,#1]{Yihe: #2}}
\title{Understanding Transferable Representation Learning and Zero-shot Transfer in CLIP}
\author{Zixiang Chen$^{\ddagger}$\footnotemark[1]\ \ , 
Yihe Deng$^{\ddagger}$\footnotemark[1]\ \ , 
 Yuanzhi Li$^{\diamond}$, 
Quanquan Gu$^{\ddagger}$\\
\textsuperscript{$\ddagger$}Department of Computer Science, University of California, Los Angeles\\
\textsuperscript{$\diamond$}Machine Learning Department, Carnegie Mellon University, Pittsburgh\\
\texttt{\{chenzx19,  yihedeng\}@cs.ucla.edu}\\
\texttt{yuanzhil@andrew.cmu.edu, qgu@cs.ucla.edu}
}
\begin{document}

\maketitle
\renewcommand{\thefootnote}{\fnsymbol{footnote}}
\footnotetext[1]{Equal contribution.}
\begin{abstract}%
 %Multi-modal learning has gained great popularity as it harnesses the benefits of diverse data modalities.
 Multi-modal learning has become increasingly popular due to its ability to leverage information from different data sources (e.g., text and images) to improve the model performance. Recently, CLIP has emerged as an effective approach that employs vision-language contrastive pretraining to learn joint image and text representations and exhibits remarkable performance in zero-shot learning and text-guided natural image generation. Despite the huge practical success of CLIP, its theoretical understanding 
 remains elusive. In this paper, we formally study transferrable representation learning underlying CLIP and demonstrate how features from different modalities get aligned. We also analyze its zero-shot transfer performance on the downstream tasks. %In addition, we conduct empirical evaluations on real data to back up our theory. 
 Inspired by our analysis, we propose a new CLIP-type approach, which achieves better performance than CLIP and other state-of-the-art methods on benchmark datasets. %Our results provide substantial insights into the mechanisms underlying the transferrable representation learning of CLIP and its application on zero-shot transfer.

\end{abstract}

\section{Introduction}
Multi-modal learning~\citep{ngiam2011multimodal} integrates information from a variety of data types, resulting in AI systems that are both robust and precise. 
% These systems have consistently exhibited exceptional performance in numerous domains. 
% , including speech recognition, semantic segmentation, and visual question-answering (VQA). 
% However, learning general-purpose representations from diverse data modalities remains a challenging and long-standing goal in artificial intelligence (AI). 
Recently, CLIP \citep{radford2021learning} emerged as a milestone work that leverages vision-language contrastive pretraining to jointly learn image and text embeddings, using the vast amounts of image-text data available on the web. During the training process, CLIP considers image-text data that appear together as positive pairs and other combinations as negative pairs. The goal is to maximize the embedding similarity for the positive pairs while minimizing it for the negative pairs. 
Remarkably, this approach has achieved significant success in zero-shot transfer~\citep{lei2015predicting}, indicating the model's ability to handle a great variety of tasks without prior exposure to any of their training data. 
Inspired by CLIP's groundbreaking zero-shot capabilities, subsequent studies~\citep{yao2022filip,li2022supervision,mu2022slip,goel2022cyclip,zhai2022lit,alayrac2022flamingo} emerged with the primary objective of further enhancing CLIP's zero-shot performance. 
Despite the empirical success of CLIP in zero-shot transfer, the theoretical understanding of how it works remains elusive.
% Across a great variety of image classification tasks, CLIP has demonstrated excellent zero-shot capabilities without any fine-tuning on labeled data. 
An intriguing inquiry is thus: \textit{How} does CLIP learn representations that are transferable to the various downstream tasks?
% and text-guided natural image generation. 
% Extensions of CLIP have also been employed to extract multimodal representations from video \citep{luo2022clip4clip} and HTML \citep{aghajanyan2022cm3}.

% Zero-shot transfer is a concept that refers to the ability of a model to perform tasks for which it has not seen any training data. 

This paper delves into the mechanisms through which CLIP learns transferable representations (i.e., embeddings) and demonstrates how such representations ensure successful zero-shot transfer for downstream tasks.
We begin with identifying several challenges associated with the theoretical analysis of the transfer mechanism in CLIP: (1) alignment between different modalities, (2) unique features in different feature domains, and (3) sparsity of shared features across domains. 
In particular, unlike unimodal contrastive learning where the embedding function is shared, CLIP employs different embedding functions $f$ and $g$ for different modalities. 
This difference poses the alignment challenge specific to multi-modal learning.
% Therefore, the alignment issue becomes significant. 
Secondly, the feature domains lie in different spaces and may lack a one-to-one mapping. Some features are shared, while others are unique. Take Figure~\ref{fig:illustration} as an example. The attribute ``stop sign'' is a shared feature in both the image and the text. However, the ``blue sky'' and ``white cloud'' are examples of unique features in the images that are not evident in the caption. 
This misalignment causes bad alignment at initialization. 
Lastly, the shared features in multi-modal contrastive learning (e.g., objects) can be sparse, compared to the unique features (e.g., textures, colors). 
Consequently, certain image-text combinations, despite not being paired, may still have shared features, suggesting they should be treated as positive pairs. 
This challenges the traditional view of considering image-text data not paired together as negative pairs.

% \begin{itemize}[leftmargin=*,nosep]
% \item The feature domains lie on different spaces and may not have a one-to-one mapping. Some features are shared, while others are unique. This misalignment causes bad alignment at initialization.

% \item The shared features in multi-modal contrastive learning (e.g., objective) can be sparse compared to unique features (e.g., textures, colors). Therefore, the image-text data not showing up together may also have similar shared features. This challenges the methods that treat the image-text data not showing up together as all negative pairs.
% \end{itemize}

\begin{figure}[!t]
    \centering
    \includegraphics[width=0.88\textwidth]{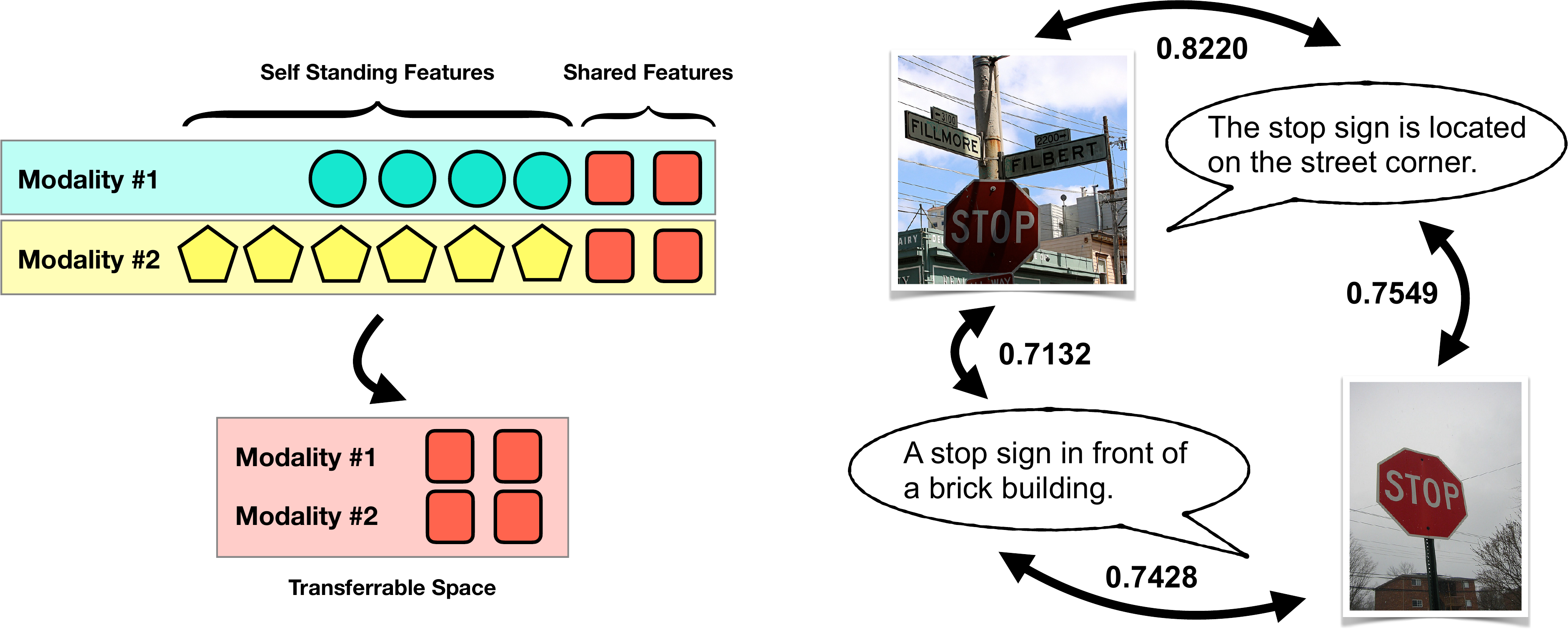}
    \caption{\textbf{Illustration of the Challenges.} Left: The feature domains are different and not \BB{one-to-one mapping}. We need to learn transferrable features while preserving the shared features. Right: The image-text data show in the same batch can have similar shared features since the shared features are sparse (here is ``stop sign''). The learned similarities between each image-text pair are very close.}
    \label{fig:illustration}
\end{figure}

To tackle the above challenges, we present our theoretical result for transferable representation learning in CLIP and summarize our contributions as follows. 
\begin{itemize}[leftmargin=*,nosep]
\item We theoretically examine transferable representation learning in CLIP. 
Our analysis shows that if a near-optimal network is obtained on the training data, features from different modalities become aligned, enabling zero-shot learning if appropriate prompts are issued. We also demonstrate that, interestingly, contrastive learning with sparse features may lead to unexpected positive pairs. Therefore, we need to take it into careful consideration. Moreover, while previous studies typically require a very large batch size for training, our theoretical framework applies to small batches.

\item Building upon our general theoretical findings, we delve deeper into specific cases, providing more comprehensive theoretical insights. We illustrate how multi-modal learning aligns different features and reveal when the learned features obtained by CLIP can outperform those obtained through naive square loss. By comparing CLIP loss and square loss, we formally established that CLIP is an effective learning objective for zero-shot transfer tasks, whereas square loss does not.

\item We conduct experiments on real data to confirm our theoretical predictions. Furthermore, inspired by our theoretical findings, we propose a new regularization technique for CLIP that effectively leads to improved zero-shot performance. Empirical results confirm that the proposed regularization can effectively improve the zero-shot performance across various tasks. 
\end{itemize}

\noindent\textbf{Notation.}
We use lowercase letters, lowercase boldface letters, and uppercase boldface letters to denote scalars, vectors, and matrices, respectively. For a vector $\xb$, we use $\|\xb\|_2$ to denote its Euclidean norm. For a matrix $\Wb$, we use $\|\Wb\|_F$ to denote its Frobenius norm. Given two sequences $\{x_n\}$ and $\{y_n\}$, we denote $x_n = \cO(y_n)$ if $|x_n|\le C_1 |y_n|$ for some absolute positive constant $C_1$, $x_n = \Omega(y_n)$ if $|x_n|\ge C_2 |y_n|$ for some absolute positive constant $C_2$, and $x_n = \Theta(y_n)$ if $C_3|y_n|\le|x_n|\le C_4 |y_n|$ for some absolute constants $C_3,C_4>0$. We also use $\tilde \cO(\cdot)$ to hide logarithmic factors of $d$  in $\cO(\cdot)$. 
Additionally, we denote $x_n=\poly(y_n)$ if $x_n=\cO( y_n^{D})$ for some positive constant $D$, and $x_n = \polylog(y_n)$ if $x_n= \poly( \log (y_n))$. We also denote by $x_{n}=o(y_{n})$ if $\lim_{n\rightarrow \infty}x_{n}/y_{n} = 0$. Finally we use $[N]$ to denote the index set $\{1, \dots, N\}$. In the function space, let $B_{r}(f)$ denote the ball of radius $r$ centered at $f$, with the metrics $\|\cdot\|_{\infty}$. A set $C$ is the covering of function class $\cF$ with radius $r$, if and only if $\cF \subseteq \cup_{f\in C}B_{r}(f)$. The covering number of $\cF$ with radius $r$ is the minimum cardinality of any covering of $\cF$, denoted as $\cN(\cF, r)$.

% \section{Preliminaries}

\section{Related Work}
% \textbf{Contrastive Learning.} Contrastive learning $\ldots$ \ldots The standard contrastive learning objective aims to
% maximize the similarity between matched image-text pairs (``positives'') against all the mismatched
% image-text pairs (``negatives''). \ldots  Many works have studied the generalization ability and representability of contrastive learning with infinite data \citep{wang2020understanding, wang2021understanding}. 

\textbf{Vision-Language Pre-Training.} While labeled data are expensive and relatively scarce, images paired with text descriptions are available in much larger volumes~\citep{thomee2016yfcc100m}. 
Consequently, numerous studies~\citep{gomez2017self,sariyildiz2020learning,desai2021virtex,zhang2022contrastive,liang2023factorized} have focused on leveraging free-form natural language supervision to learn visual representations. Recently, 
CLIP~\citep{radford2021learning} and ALIGN~\citep{jia2021scaling} have emerged as prominent works extending contrastive learning to the vision-language pre-training framework. 
Built upon CLIP's success, several studies~\citep{pham2021combined,gao2022pyramidclip,saito2022prefix} have refined CLIP's contrastive methodology to better learn from web-scale image-text data. 
\BB{Notably, UniCL~\citep{yang2022unified} additionally incorporates image-label data, enabling  the identification of a broader range of positive pairs.}
FILIP~\citep{yao2022filip} introduces a fine-grained contrastive loss tailored for transformer architectures. 
DeCLIP~\citep{li2022supervision} and SLIP~\citep{mu2022slip} additionally incorporate single-modality self-supervised learning. 
CyCLIP~\citep{goel2022cyclip} introduces two regularizing terms enforcing cross-modal and in-modal consistency. 
LiT~\citep{zhai2022lit} and Flamingo~\citep{alayrac2022flamingo} consider training from pre-trained single-modality models. 
In our empirical validation of theoretical findings, we employ the same setting and train from pre-trained image and text encoders.
% \CC{Comment on Oct 5: Do we need to discuss more about the tasks with small batch sizes?}
% In this work, we also aim to propose a novel CLIP-like training objective that improves upon the original objective. 
% Lastly, the aforementioned methods for enhancing CLIP are orthogonal to our proposed regularization technique and therefore they can be combined.

\textbf{Theory of self-supervised learning.}
In unimodal setting, numerous studies have been conducted to understand self-supervised learning approaches \citep{saunshi2019theoretical,tsai2020demystifying,mitrovic2020representation,tian2020understanding,wang2020understanding,chen2021large,wang2021understanding,tosh2021contrastive2,tosh2021contrastive,haochen2021provable,wen2021toward,saunshi2022understanding}. For classification problems, \citet{galanti2022generalization} provided a theoretical explanation of transfer learning using pre-trained classifiers in few-shot tasks. In multimodal learning, theoretical explanations have also been explored in several studies \citep{zadeh2020foundations, huang2021makes, lee2020predicting, nakada2023understanding}. These works have established that multimodal learning can surpass unimodal learning in terms of performance. For instance, \citet{lee2020predicting} employed square loss prediction to learn image representations under certain conditional independence assumptions, offering generalization performance guarantees. Meanwhile, \citet{nakada2023understanding} examined CLIP within specific linear representation settings and emphasized its correlation with singular value decomposition (SVD). We note that these related works have not considered the zero-shot transfer mechanism and thus cannot adequately explain the zero-shot transfer capability of CLIP.

\section{Problem Setting and Preliminaries}\label{section:problemsetting}
% In this section, we introduce our data model and explain the details of the training algorithm. 

\subsection{Data Distribution}
In our paper, we focus on the setting where the image $\xb$ and the text $\yb$ are conditionally independent given the shared feature $\zb$. 

\begin{assumption}\label{assm:independent}
Let $(\xb, \yb)$ be generated from the joint distribution $\cD_{\xb \times \yb}$. We assume $\zb$ to be a shared feature of $\xb, \yb$ satisfying $\xb \perp \yb | \zb$, and further denote $(\xb, \yb, \zb)$ that follows the joint distribution $\cD_{\xb \times \yb \times \zb}$ with marginal distributions $\cD_{\xb \times \zb}, \cD_{\yb \times \zb}$. We further assume $\zb$ to be a discrete and sparse random variable $\zb \in \cV =  \{\vb_{1}, \ldots, \vb_{K}\}$ with $p_k := \mathbb{P}(\zb = \vb_{k})$. 
\end{assumption}

%\begin{example}
Intuitively speaking, the shared feature $\zb$ in the above assumption may denote a set of shared topics or keywords underlying image $\xb$ and text $\yb$. We can consider the following simple example to understand it. Let $\zb=[0,1,0,1]^\top$ represent the existence of topics ``chair'' and ``table'' and the absence of topics ``car'' and ``train''. Then, $\xb$ and $\yb$ are generated given $\zb$ such that they both include ``chair'' and ``table'', yet with different unique features and noises. 
%\end{example}

\begin{remark}
The assumption of conditional independence is frequently made in the analysis of self-supervised learning \citep{saunshi2019theoretical, lee2021predicting} and dimension reduction algorithms \citep{fukumizu2004dimensionality, fukumizu2009kernel}. Under the premise that $\xb, \yb$ are conditionally independent (CI) given $\zb$, it can be posited that any additional patterns found within $\xb|\zb$ and $\yb|\zb$ should be interpreted as unique features. Notably, in the absence of discrete and sparse constraints, a suitable $\zb$ can always be found, given that one could simply assign $\zb = \xb$ or $\zb = \yb$. From the generative model's point of view, Assumption~\ref{assm:independent} naively holds when the data are from some generator with $\xb = T_{1}(\zb, \bxi)$ and $\yb = T_{2}(\zb, \bzeta)$ where $\bxi \perp \bzeta | \zb$.
\end{remark}

\subsection{Learning via Contrastive Loss}
CLIP is trained on millions of image and text pairs. Formally, we assume the data set $S$ is drawn from the distribution $\cD_{\xb \times \yb}$ defined in Assumption~\ref{assm:independent}. The CLIP architecture has three main components: (i) an image encoder network $\gb$ that can encode the image $\xb$ into the embedding $\gb(\xb) \in \RR^{d}$; (ii) a text encoder network  $\hb$ that can encode the text $\yb$ into an embedding vector $\hb(\yb) \in \RR^{d}$;  and (iii) a score function $f(\xb, \yb) = \textbf{sim}(\gb,\hb)$ that measures the similarity between the image $\xb$ and the text $\yb$ given their embeddings $\gb, \hb$ \big(e.g., $f(\xb, \yb) = \la \gb(\xb), \hb(\yb) \ra$\big). 

During the training, we will sample a batch of image-captions pairs $S' = \{\xb_{i}, \yb_{i}\}_{i=1}^{B} \subseteq S$. The contrastive objective in CLIP aims to align the image representation $\gb(\xb)$ and text representations $\hb(\yb)$ by
minimizing the following loss function,  
\begin{align}
L_{S'}(f, \tau) &= \frac{1}{B}\sum_{i \in S'}-\log\bigg(\frac{\exp\big(f(\xb_{i}, \yb_{i})/\tau\big)}{\sum_{j \in S'}\exp\big(f(\xb_{j}, \yb_{i})/\tau\big)}\bigg)+ \frac{1}{B}\sum_{i \in S'}-\log\bigg(\frac{\exp\big(f(\xb_{i}, \yb_{i})/\tau\big)}{\sum_{j \in S'}\exp\big(f(\xb_{i}, \yb_{j})/\tau\big)}\bigg)\notag\\
&= \frac{1}{B}\sum_{i \in S'}\log\bigg(\sum_{j \in S'}\exp\big(\big[f(\xb_{j}, \yb_{i}) - f(\xb_{i}, \yb_{i})\big]/\tau\big)\bigg) \notag\\
&\qquad + \frac{1}{B}\sum_{i \in S'}\log\bigg(\sum_{j \in S'}\exp\big(\big[f(\xb_{i}, \yb_{j}) - f(\xb_{i}, \yb_{i})\big]/\tau\big)\bigg), \label{eq:contrastivegeneral}
\end{align}
where $\tau>0$ is a temperature parameter. The training loss $L_{S'}$ over a single epoch can be viewed as the empirical version of the following population loss, \todoq{what is the expectation with respect to?}
\begin{align}
L_{\cD^{B}}(f, \tau) 
&=\EE\bigg[\log\bigg(\sum_{t \in [B]}\exp\big(\big[f(\xb_{1}, \yb_{t}) - f(\xb_{1}, \yb_{1})\big]/\tau\big)\bigg)\bigg]\notag\\
&\qquad+  \EE\bigg[\log\bigg(\sum_{t \in [B]}\exp\big(\big[f(\xb_{t}, \yb_{1}) - f(\xb_{1}, \yb_{1})\big]/\tau\big)\bigg)\bigg],\label{eq:contrastivegeneral2}
\end{align}
where the expectation is taken with respect to all $B$ random pairs $(\xb_{t}, \yb_{t})$ i.i.d. sampled from $\cD_{\xb \times \yb}$. %\todoq{how about $x_1$ and $y_1$? the summation is over t?}. 
Therefore, CLIP learns the score function $f$ with the corresponding representations $\gb$ and $\hb$ by minimizing $L_{\cD^{B}}(f, \tau)$. In fact, we can divide the training dataset $S$ into $n$ batches $\cup_{k \in [n]}\cS_{k}$. The following theorem shows that the empirical loss\todoq{can we write the empirical loss as $L_S(f,\tau)$} $\hat{\EE}_{S}(f, \tau)  := (1/n)\sum_{k\in [n]}L_{S_k}(f, \tau)$ concentrates on the population loss when $n$ is large enough.

\begin{theorem}\label{thm:uniform convergence}
Suppose $\delta \in (0,1)$ and $n \geq (8\tau^{-1}\epsilon^{-2}M\log B)\log( 2\cN(\cF, \epsilon/8M)/\delta)$, then with probability at least $1 - \delta$, we have
\begin{align*}
|\hat{L}_{S}(f, \tau) - L_{\cD^{B}}(f, \tau)| \leq  \epsilon 
\end{align*}
for all function $f \in \cF$ and $|f| \leq M$, where $\cN(\cF, \epsilon)$ is the covering number of $\cF$\todoq{we did not define covering number before}.
\end{theorem}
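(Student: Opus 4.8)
The plan is a standard two-step uniform-convergence argument: pointwise concentration via Hoeffding's inequality, followed by a covering-number reduction enabled by the Lipschitz dependence of the contrastive loss on $f$. Write $\hat L_S(f,\tau)=\frac{1}{n}\sum_{k\in[n]}L_{\cS_k}(f,\tau)$. Because $S$ consists of i.i.d.\ image--text pairs and the batches $\cS_1,\dots,\cS_n$ are disjoint, the batch losses $L_{\cS_1}(f,\tau),\dots,L_{\cS_n}(f,\tau)$ are i.i.d., and by exchangeability of the $B$ pairs inside a batch each has mean equal to the population loss $L_{\cD^B}(f,\tau)$ in \eqref{eq:contrastivegeneral2}. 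Hence $\hat L_S(f,\tau)-L_{\cD^B}(f,\tau)$ is a centered average of $n$ i.i.d.\ terms, and it suffices to (i) bound the range of a single batch loss, (ii) concentrate for each fixed $f$ in a cover of $\cF$, and (iii) transfer the bound to all of $\cF$.

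For (i), fix $f$ with $|f|\le M$. Every exponent $[f(\xb_j,\yb_i)-f(\xb_i,\yb_i)]/\tau$ lies in $[-2M/\tau,\,2M/\tau]$, and the diagonal term $j=i$ contributes $\exp(0)=1$ to each inner sum in \eqref{eq:contrastivegeneral}; hence each log-sum-exp term lies in $[0,\,\log B+2M/\tau]$ and $0\le L_{\cS_k}(f,\tau)\le 2(\log B+2M/\tau)$. For (ii), I would apply Hoeffding's inequality to the i.i.d.\ average: with this range, $\mathbb{P}\big(|\hat L_S(f,\tau)-L_{\cD^B}(f,\tau)|>\epsilon/2\big)$ is exponentially small in $n\epsilon^2$ over the squared range. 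Taking a minimal $\epsilon/(8M)$-cover $C$ of $\cF$ in $\|\cdot\|_\infty$, of cardinality $\cN(\cF,\epsilon/(8M))$, union-bounding the above over $f'\in C$, and requiring $n$ to exceed the stated lower bound — the Hoeffding sample size for accuracy $\epsilon/2$, per-element confidence $\delta/\cN(\cF,\epsilon/(8M))$, and this range — makes the total failure probability at most $\delta$.

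For (iii), the contrastive loss is Lipschitz in $f$: if $\|f-f'\|_\infty\le r$, every exponent shifts by at most $2r/\tau$, and since the log-sum-exp map $\mathbf{a}\mapsto\log\sum_j e^{a_j}$ is $1$-Lipschitz with respect to $\|\cdot\|_\infty$ (its gradient is a probability vector), each inner term — and hence $L_{\cS_k}(\cdot,\tau)$, $\hat L_S(\cdot,\tau)$, and $L_{\cD^B}(\cdot,\tau)$ — changes by at most $O(r/\tau)$. For any $f\in\cF$ with $|f|\le M$, choose $f'\in C$ with $\|f-f'\|_\infty\le\epsilon/(8M)$; the triangle inequality gives $|\hat L_S(f,\tau)-L_{\cD^B}(f,\tau)|\le|\hat L_S(f,\tau)-\hat L_S(f',\tau)|+|\hat L_S(f',\tau)-L_{\cD^B}(f',\tau)|+|L_{\cD^B}(f',\tau)-L_{\cD^B}(f,\tau)|$, where the middle term is at most $\epsilon/2$ by (ii) and the two outer terms are lower-order perturbations of size $O(\epsilon/(M\tau))$ absorbed into constants (or eliminated by shrinking the cover radius by the corresponding factor). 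This yields $|\hat L_S(f,\tau)-L_{\cD^B}(f,\tau)|\le\epsilon$ uniformly over all $f\in\cF$ with $|f|\le M$.

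The main obstacle is the temperature bookkeeping: both the range of the batch loss and its Lipschitz constant in $f$ scale like $1/\tau$, so landing on the $\tau^{-1}$ scaling in the stated sample complexity (rather than a cruder power) requires combining these estimates carefully rather than naively squaring the range. Two minor points also warrant attention — the cover must be taken in $\|\cdot\|_\infty$ over the joint domain of $(\xb,\yb)$, and a cover element $f'$ may slightly exceed $M$ in sup-norm, which only inflates the Hoeffding range from $M$ to $M+\epsilon/(8M)$ and is harmless — as does confirming the exchangeability identity $\EE[L_{\cS_k}(f,\tau)]=L_{\cD^B}(f,\tau)$, which holds because within a batch the $B$ pairs are i.i.d.\ and the summands over $i\in\cS_k$ are identically distributed, matching (up to reordering the two symmetric terms) the definition \eqref{eq:contrastivegeneral2}.
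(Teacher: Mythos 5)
Your proposal follows essentially the same route as the paper's proof: bound the per-batch loss, observe that $L_{S'}(\cdot,\tau)$ is Lipschitz in $f$ with constant $\Theta(1/\tau)$, cover $\cF$ in sup-norm, apply Hoeffding plus a union bound over cover centers, and transfer to all of $\cF$ via the Lipschitz bound. Your range bound $0\le L_{\cS_k}(f,\tau)\le 2(\log B+2M/\tau)$ is in fact the correct one; the paper's displayed chain yields exactly $2\log B+4M/\tau$ but then asserts it equals $4M\log B/\tau$, which is an algebra slip (it remains a valid upper bound in the regime $\tau\lesssim M$, but it is not an equality).

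The one place where you stop short, you flag honestly as the main obstacle, and your instinct there is sharper than you may realize. You write that landing on a $\tau^{-1}$ rather than $\tau^{-2}$ dependence in the sample complexity ``requires combining these estimates carefully rather than naively squaring the range'' --- but the paper's proof does exactly the naive thing and still writes down a $\tau^{-1}$ exponent. Hoeffding applied to an average of $n$ i.i.d.\ batch losses with range $c=4M\log B/\tau$ and deviation $\epsilon/2$ gives $2\exp\big(-n\epsilon^2\tau^2/(32M^2\log^2 B)\big)$, whereas the paper states $2\exp\big(-n\epsilon^2\tau/(8M\log B)\big)$; these differ by a factor of $4M\log B/\tau$ in the exponent, so for the usual small-$\tau$ regime the paper's claimed concentration is strictly stronger than Hoeffding actually gives. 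To obtain a genuine $\tau^{-1}$ scaling one would need a variance-sensitive bound (Bernstein-type), which neither you nor the paper supplies. A second, smaller mismatch you also brush against: the Lipschitz transfer at slack $\epsilon/2$ forces a cover radius of order $\tau\epsilon$ (as in the paper's body, $\tau\epsilon/16$), not the $\epsilon/(8M)$ appearing in the theorem statement and in your write-up; your parenthetical ``or eliminated by shrinking the cover radius by the corresponding factor'' is the needed fix, but should be made unconditional rather than optional.
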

Theorem~\ref{thm:uniform convergence} shows that the generalization gap $|\hat{L}_{S}(f, \tau) - L_{\cD^{B}}(f, \tau)|$ approaches zero as the number of batches $n$ increase.\todoy{Could we explain a bit why?} In practice, the batch size is limited by the GPU's memory and is smaller than the number of batches (or the number of training examples). Therefore, instead of letting the batch size $B$ go to infinity like in prior studies \citep{wang2020understanding,pham2021combined}, we keep the batch size $B$ as a constant in \eqref{eq:contrastivegeneral2} and Theorem~\ref{thm:uniform convergence} to enable the analysis of CLIP even for small batches. 
\todoy{Is it possible to explain more about the relation between batch number and batch size? I'm not quite understanding how $n$ is related to the batch size limit from this paragraph.}
\citet{pham2021combined} also provided the generalization gap for CLIP. However, their result is for $B \rightarrow \infty $ and a loss function without the $\log$ term, i.e., $\exp\big(f(\xb_{i}, \yb_{i})/\tau\big)/\Big(\sum_{j \in S'}\exp\big(f(\xb_{j}, \yb_{i})/\tau\big)\Big)$.

\section{Transferrable Representation Learning} \label{sec: main}
The key idea of CLIP is to pull the embeddings of
positive image-text pairs together while pushing the embeddings of negative pairs apart. For the data pair $(\xb, \yb')$ generated with $\xb \sim \cD_{\xb|\zb}, \yb' \sim \cD_{\yb|\zb'}$, $(\xb, \yb')$ is a positive pair if $\zb = \zb'$ and a negative pair if $\zb \not= \zb'$. The reason is that when $\zb = \zb'$, the joint distribution of $(\xb, \yb')$ is the same as the joint distribution of $(\xb, \yb) \sim \cD_{\xb \times \yb |\zb}$ since  $\xb, \yb$ are mutually independent given the latent variable $\zb$. Next, we will show that the learning objective~\eqref{eq:contrastivegeneral2} will lead to the distinguishable representation of different latent variables $\zb$ under certain assumptions. 

\begin{assumption}[$(\alpha, \beta, \gamma)$-Completeness]\label{assm: zero}
There exists a score function $f^{*}$ bounded by $1$ (i.e., $|f^{*}|\leq 1$) with $f^{*} = \textbf{sim}(\gb^{*}, \hb^{*})$ satisfying the following properties,
\begin{itemize}[leftmargin=*,nosep]
\item For any $\zb \not= \zb'$, let $\xb \sim \cD_{\xb|\zb}, \yb \sim \cD_{\yb|\zb}, \xb' \sim \cD_{\xb'|\zb'}, \yb' \sim \cD_{\yb'|\zb'}$. With probability at least $1 - \alpha$, we have  $f^{*}(\xb', \yb) \leq f^{*}(\xb, \yb) - \gamma$ and $f^{*}(\xb, \yb') \leq f^{*}(\xb, \yb) - \gamma$.
\item Let $(\xb, \yb, \zb) \sim \cD_{\xb \times \yb \times \zb}$, assume $\EE_{(\yb,\zb)}\big[\text{Var}_{\xb|\zb}(f^{*}(\xb, \yb))\big], \EE_{(\xb,\zb)}\big[\text{Var}_{\yb|\zb}(f^{*}(\xb, \yb))\big] \leq \beta$. \todoq{what is the expectation w.r.t?}
\end{itemize}

% $f^{*}(T_{1}(\zb, \bxi), T_{2}(\zb, \bzeta)) = [\gamma_{\zb}, \gamma_{\zb} + \tau \epsilon]$. Besides $f^{*}(T_{1}(\zb, \bxi), T_{2}(\zb', \bzeta)) \leq \min\{\gamma_{\zb}, \gamma_{\zb'}\} - \gamma$ for all $\zb' \not= \zb$.  \CC{Change $\gamma$ to $d(z, z')$}
\end{assumption}
In simple terms, Assumption~\ref{assm: zero} is made on the data distribution to allow the \textit{existence} of good encoding functions $\gb^{*}$ and $\hb^{*}$.
Specifically, the first bullet guarantees that the data with different $\zb$, the underlying shared feature, is well distinguishable with margin $\gamma$. If the data from different $\zb$ does not satisfy this condition, the majority of the diagonal term $f(\xb_i, \yb_{i})$ in \eqref{eq:contrastivegeneral} can be smaller than the off-diagonal term $f(\xb_j, \yb_{i})$. In other words, all encoding functions may yield higher similarity score for negative pairs than positive pairs, which is not favored by the mechanism of CLIP. The second bullet requires the similarity score within each underlying shared feature %class 
not vary too much, which is naturally satisfied if the learned embeddings $\gb(\xb), \hb(\yb)$ are consistent and do not vary too much given the same $\zb$. In the following theorem, we establish the result that a CLIP model trained to convergence exhibits desirable properties in representation learning.

% We would encounter the text-image pair with the same feature in the same batch (especially when the batch size is large). If the data with the same 

% \begin{remark}
% Even with layer normalization, one can easily find show that the zero function belongs to $\cF$ since the projection layer can project the presentation of $\xb$ and $\yb$ to orthogonal space.
% \end{remark} 

\begin{theorem}\label{thm:main}
Suppose Assumption \ref{assm: zero} hold and we can find an $\epsilon$ approximate minimum $\hat{f} \in \cF$ \todoq{can we use $\hat f$ instead of $\bar f$?} with respect to the temperature $\tau$ such that $\hat{f}$ is bounded by $M$ and
\begin{align}
L_{\cD^{B}}(\hat{f}, \tau) \leq L_{\cD^{B}}(f^{*}, \tau) + \epsilon. \label{eq: test achievement} 
\end{align}
Then the following results hold:
% \CC{Include Batch Size Here}
\begin{enumerate}[leftmargin=*,nosep]
    \item For $(\xb, \zb )\sim \cD_{\xb \times \zb}$, $\{\yb_{k} \sim \cD_{\yb|\vb_k}, k \in [K]\}$, let $\yb^{*} = \sum_{k \in [K]} \ind(\zb = \vb_k)\yb_k$ \todoq{can we change $\yb^*$ to be $\bar \yb$}, we have \todoq{again, what is the expection w.r.t.? perhaps add subscript to specify each expectation?}
\begin{align}
\EE\bigg[\log\bigg(\sum_{k \in [K]}\exp\big(\big[\hat{f}(\xb, \yb_k) - \hat{f}(\xb, \yb^{*})\big]/\tau\big)\bigg)\bigg] &\leq \epsilon'. \label{eq:margin1}
\end{align}
\item For $(\yb, \zb) \sim \cD_{\yb \times \zb}$,$\{\xb_{k} \sim \cD_{\xb|\vb_k}, k \in [K]\}$, let $\xb^{*} = \sum_{k \in [K]} \ind(\zb = \vb_k)\xb_k$, we have
\begin{align}
\EE\bigg[\log\bigg(\sum_{k \in [K]}\exp\big(\big[\hat{f}(\xb_k, \yb) - \hat{f}(\xb^{*}, \yb)\big]/\tau\big)\bigg)\bigg] &\leq \epsilon'. \label{eq:margin2}
\end{align}
\item  For $(\xb, \yb, \zb) \sim \cD_{\xb \times \yb \times \zb}$, variance $\EE_{(\yb,\zb)}\big[\text{Var}_{\xb|\zb}(\hat{f}(\xb, \yb))\big] +  \EE_{(\xb,\zb)}\big[\text{Var}_{\yb|\zb}(\hat{f}(\xb, \yb))\big] \leq 16M^2\epsilon'$. 
\end{enumerate}
where $\epsilon' = (C_{B} + 2) \cdot \big[ \epsilon + C\tau^{-1}  MB\alpha + C\tau^{-1}(\beta M B)^{1/3} + 2B\exp(-\gamma/\tau)\big]$ and $C =\tilde{O}(1), C_B= \tilde{O}(\max_{k}p_k^{-1}/B)$. 
\end{theorem}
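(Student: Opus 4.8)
The plan is to prove Theorem~\ref{thm:main} in three stages: (i) show that the reference score $f^{*}$ of Assumption~\ref{assm: zero} already attains a small population loss; (ii) transfer this bound to $\hat f$ through the approximate optimality~\eqref{eq: test achievement}; and (iii) lower bound $L_{\cD^{B}}(\hat f,\tau)$ by a $(C_{B}+2)^{-1}$ multiple of the three quantities appearing in \eqref{eq:margin1}, \eqref{eq:margin2} and conclusion~3 (minus the same error terms), so that rearranging gives the claimed inequalities with $\epsilon'$ as defined.

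For stage (i) I would bound the first summand of $L_{\cD^{B}}(f^{*},\tau)$, namely $\EE[\log\sum_{t\in[B]}\exp([f^{*}(\xb_{1},\yb_{t})-f^{*}(\xb_{1},\yb_{1})]/\tau)]$ (the second is symmetric). Conditioning on the batch latents $\zb_{1},\dots,\zb_{B}$, split the candidate indices $t\ge 2$ into the \emph{distinct} ones ($\zb_{t}\neq\zb_{1}$) and the \emph{accidental-positive} ones ($\zb_{t}=\zb_{1}$). For a distinct index the first bullet of Assumption~\ref{assm: zero} gives $f^{*}(\xb_{t},\yb_{1})\le f^{*}(\xb_{1},\yb_{1})-\gamma$ outside an event of probability at most $\alpha$; a union bound over the at most $B-1$ such indices, together with $|f^{*}|\le 1$ on the exceptional event, shows these contribute $O(B\exp(-\gamma/\tau)+\tau^{-1}B\alpha)$ once passed through $\log(1+\cdot)$. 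For an accidental-positive index, conditional independence (Assumption~\ref{assm:independent}) makes $(\xb_{t},\yb_{1})$ equal in law to $(\xb_{1},\yb_{1})$ and, more importantly, forces the exponent $[f^{*}(\xb_{1},\yb_{t})-f^{*}(\xb_{1},\yb_{1})]/\tau$ to have conditional mean zero and conditional variance at most $2\,\text{Var}_{\yb|\zb_{1}}(f^{*}(\xb_{1},\yb))/\tau^{2}$, whose average over $(\xb_{1},\zb_{1})$ is at most $2\beta/\tau^{2}$ by the second bullet. A second-order Taylor estimate truncated at a threshold $s$ -- Chebyshev on $\{|\cdot|>s\}$, boundedness on that event, then optimizing $s$ -- converts this variance bound into the $\tau^{-1}(\beta MB)^{1/3}$ term. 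Summing the pieces yields $L_{\cD^{B}}(f^{*},\tau)\le C\tau^{-1}MB\alpha+C\tau^{-1}(\beta MB)^{1/3}+2B\exp(-\gamma/\tau)$, and stage (ii) is then immediate from \eqref{eq: test achievement}.

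Stage (iii) is the crux. Write $L_{\cD^{B}}(\hat f,\tau)=T_{1}+T_{2}$, where $T_{1}=\EE[\log\sum_{t}\exp([\hat f(\xb_{1},\yb_{t})-\hat f(\xb_{1},\yb_{1})]/\tau)]$ and $T_{2}$ is its mirror image with the roles of $\xb$ and $\yb$ swapped; $T_{1}$ will control \eqref{eq:margin1} together with $\EE_{(\xb,\zb)}[\text{Var}_{\yb|\zb}(\hat f(\xb,\yb))]$, and $T_{2}$ will control \eqref{eq:margin2} together with $\EE_{(\yb,\zb)}[\text{Var}_{\xb|\zb}(\hat f(\xb,\yb))]$. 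Since discarding nonnegative summands inside $\log(1+\cdot)$ only decreases it, I would bound $T_{1}$ from below in two independent ways. First, keeping the index $t=1$ and the distinct indices $t\ge 2$: because each $\yb_{t}$ is marginally distributed as $\sum_{k}p_{k}\cD_{\yb|\vb_{k}}$ and independent of $(\xb_{1},\yb_{1})$, while conditional independence makes $\yb_{1}$ a generic draw from $\cD_{\yb|\zb_{1}}$ -- i.e.\ distributed exactly as the $\yb^{*}$ of \eqref{eq:margin1} -- expanding the expectation over the $B-1$ candidate slots relates $T_{1}$ to the $p_{k}$-weighted quantity $\EE[\sum_{k:\vb_{k}\neq\zb}p_{k}\log(1+\exp([\hat f(\xb,\yb_{k})-\hat f(\xb,\yb^{*})]/\tau))]$ with $(\xb,\zb)\sim\cD_{\xb\times\zb}$, $\yb^{*}\sim\cD_{\yb|\zb}$, $\yb_{k}\sim\cD_{\yb|\vb_{k}}$; bounding $p_{k}$ below by $\min_{k}p_{k}$ and using $\log(1+\sum_{k}x_{k})\le\sum_{k}\log(1+x_{k})$ on the integrand of \eqref{eq:margin1} turns the $p_{k}$-weighting into the unweighted sum over $k\in[K]$ at the cost of the factor $\max_{k}p_{k}^{-1}$, while the $B-1$ slots contribute the $1/B$, producing the stated $C_{B}=\tilde O(\max_{k}p_{k}^{-1}/B)$. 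Second, keeping only the accidental-positive indices: conditional independence makes these $\yb_{t}$ i.i.d.\ copies from $\cD_{\yb|\zb_{1}}$, and a quantitative reverse-Jensen inequality for the bounded variable $\hat f(\xb_{1},\cdot)/\tau$ gives $\EE_{\yb_{t},\yb_{1}}\exp([\hat f(\xb_{1},\yb_{t})-\hat f(\xb_{1},\yb_{1})]/\tau)\ge 1+c\,\text{Var}_{\yb|\zb_{1}}(\hat f(\xb_{1},\yb))/(16M^{2})$ for an absolute $c>0$; passing through $\log$, averaging over $(\xb_{1},\zb_{1})$, and using that a class of probability $p_{k}$ appears among the $B-1$ slots with probability of order $\min(1,(B-1)p_{k})$ shows $T_{1}$ is at least a constant times $\EE_{(\xb,\zb)}[\text{Var}_{\yb|\zb}(\hat f(\xb,\yb))]/((16M^{2})(C_{B}+2))$ minus the same error terms. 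The mirror arguments applied to $T_{2}$ give the corresponding bounds. Since $T_{1}+T_{2}=L_{\cD^{B}}(\hat f,\tau)\le L_{\cD^{B}}(f^{*},\tau)+\epsilon$, substituting the stage-(i) bound and rearranging yields \eqref{eq:margin1}, \eqref{eq:margin2} and conclusion~3 with $\epsilon'=(C_{B}+2)[\epsilon+C\tau^{-1}MB\alpha+C\tau^{-1}(\beta MB)^{1/3}+2B\exp(-\gamma/\tau)]$.

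I expect the main obstacle to be the change of summation in stage (iii): a batch of size $B$ provides only $B-1$ randomly drawn negative candidates, whereas \eqref{eq:margin1}--\eqref{eq:margin2} involve one representative of each of the $K$ latent classes, and $K$ can be much larger than $B$, so there is no deterministic coupling between the two pictures. The resolution is the probabilistic importance-weighting argument above -- couple the first batch occurrence of each class to a pre-drawn representative, then reweight by class-appearance probabilities -- and this is exactly what forces the factor $C_{B}$. A secondary difficulty is that the estimate ``$\log(1+x)\approx x$'' used to convert the $B-1$ slots into a factor $B$ is valid only for $x=O(1)$, so one must separately control batches whose empirical contrastive sum is large; such batches are rare (otherwise the loss would not be small), and bounding their contribution via $|\hat f|\le M$ is what produces the additive $+2$ in $C_{B}+2$ and keeps the $\tau$- and $M$-dependence polynomial. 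The remaining steps are routine bookkeeping of these constants.
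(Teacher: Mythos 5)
Your three-stage plan matches the paper's architecture (upper-bound the loss of $f^*$, use approximate optimality to transfer this to $\hat f$, and lower-bound $L_{\cD^B}(\hat f,\tau)$ so that rearranging extracts the margin and variance quantities), and your treatment of the batch-to-$K$-class conversion via importance weighting over the $p_k$ is essentially the paper's coupon-collector argument producing $C_B=\tilde O(\max_k p_k^{-1}/B)$. Your third bullet argument via a quantitative reverse-Jensen bound on $\log\sum\exp$ is also the same idea as the paper's strong-convexity lemma. So the skeleton is right. However, there is a genuine gap in stage~(i) as you have stated it.

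You claim that stage~(i) yields
$L_{\cD^B}(f^*,\tau)\le C\tau^{-1}MB\alpha + C\tau^{-1}(\beta MB)^{1/3} + 2B\exp(-\gamma/\tau)$,
attributing the accidental-positive contribution entirely to a variance correction of order $\tau^{-1}(\beta MB)^{1/3}$. This is not attainable, and the reason is not technical bookkeeping but a structural obstruction. When $\zb_t=\zb_1$, the summand $\exp\big([f^*(\xb_1,\yb_t)-f^*(\xb_1,\yb_1)]/\tau\big)$ has a conditionally zero-mean exponent, so its expectation is $\ge 1$ by Jensen; the contribution of an accidental positive to the sum inside the logarithm is an irreducible $+1$, not a small variance fluctuation around $0$. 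Consequently $L_{\cD^B}(f^*,\tau)$ carries a term of order
$2\,\EE\big[\log\big(\sum_{t\in[B]}\ind(\zb_t=\zb_1)\big)\big]$,
which is bounded away from $0$ whenever accidental positives have nonnegligible probability (e.g.\ $B$ large or some $p_k$ not small), and which cannot be absorbed into the $\beta$-dependent term. Your $\beta$-Chebyshev-Taylor argument only controls the deviation of $\exp(\cdot)$ around $1$, not the $1$ itself, so it does not produce a bound that decays to zero as $\beta\to 0$.

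The correct resolution, which the paper implements, is to carry this inevitable $\log$-count term symmetrically: Lemma~\ref{lm:optimal bound} shows
$L_{\cD^B}(f^*,\tau)\le 2\,\EE\big[\log\big(\sum_{t\in[B]}\ind(\zb_t=\zb_1)\big)\big] + 6MB\alpha/\tau + 3\sqrt[3]{6MB\beta}/\tau + 2B\exp(-\gamma/\tau)$,
and the stage~(iii) lower bound is proved in the matching form
$L_{\cD^B}(\hat f,\tau)\ge 2\,\EE\big[\log\big(\sum_{t\in[B]}\ind(\zb_t=\zb_1)\big)\big] + [\text{margin/variance terms}]$,
so that the $\log$-count cancels in the inequality $L_{\cD^B}(\hat f,\tau)\le L_{\cD^B}(f^*,\tau)+\epsilon$ and only the margin/variance terms remain. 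Your stage~(iii) lower bound, in its handling of the accidental-positive indices, is in fact close to producing exactly this $\log$-count term, so the fix is local: in stage~(i) retain the $\log$-count instead of trying to drive it to $0$ via $\beta$, and in stage~(iii) make the matching additive $\log$-count term explicit on the other side before subtracting. Without this cancellation, your substitution ``$T_1+T_2\le L_{\cD^B}(f^*,\tau)+\epsilon$'' would assert a bound that is simply false for the left-hand side.
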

\begin{remark}
Theorem~\ref{thm:main} establishes a soft margin between CLIP's learned embeddings on data of different $\zb$'s. For instance, if an image $\xb$ has a shared feature $\zb = \vb_1$, we have its accurate description $\yb^{*} = \sum_{k \in [K]} \ind(\zb = \vb_k)\yb_k = \yb_1$. From $\eqref{eq:margin1}$, it follows that $\log\Big(\sum_{k \in [K]}\exp\big(\big[\hat{f}(\xb, \yb_k) - \hat{f}(\xb, \yb_1)\big]/\tau\big)\Big)$ is small. 
This can only occur when $\hat{f}(\xb, \yb_k) < \hat{f}(\xb, \yb_1)$ for all $k \geq 2$, i.e., the trained model always yield higher similarity score for this image-text pair as compared to all other texts generated on different topics. 
This outcome aligns with the expectation that image-text pairs with the same shared feature will yield the highest similarity score. 
\end{remark}\todoy{I'm breaking the previous remark into three remarks. Might be easier for the readers to focus.}
\begin{remark}[Choice of temperature parameter]
When the data is well separated (i.e., $\alpha, \beta = 0$), a smaller temperature will invariably lead to a smaller $\epsilon'$ and, consequently, better performance. In practice, $\tau$ is typically set to be $0.01$, a sufficiently small value that ensures the term $\exp(-\gamma/\tau)$ is less than $0.0000454$ for $\gamma = 0.1$. However, when the data is nonseparable (i.e., $\alpha$ and $\beta$ exceed 0), a balance must be struck between the terms related to $\tau$. As a consequence, $\tau$ should not be too small. A reasonable choice would be $\tau = O(\gamma/\log(B/\epsilon))$. 
\end{remark}
\begin{remark}[Batch size]
While we do not demand an increasing batch size $B$, our analysis does suggest a preference for larger batch sizes, as they can reduce the constant $C_{B}$ and consequently $\epsilon'$.
\end{remark}

\section{Zero-shot Transfer}\label{Sec: zero shot}
In this section, we will discuss why the embeddings learned by CLIP in Section~\ref{sec: main} enable zero-shot transfer learning tasks. In the zero-shot transfer task, we have $K$ prompts $\{\yb_k, k \in [K]\}$ where $\yb_k \sim \cD_{\yb|\vb_k}$. For a new image $\xb$ generated from $\cD_{\xb}$, we want to predict the label of the shared feature $\zb$ in $\xb$. For example, if $\xb$ has shared feature $\vb_1$, then the label of $\xb$ should be $1$. As suggested by \citet{radford2021learning}, we calculate the similarity score between $\xb$ and the prompts $\yb_k$ and pick the indices for top-$r$ scores as the labels of $\xb$. The following corollary \todoq{maybe present it as a theorem rather than corollary?} provides the guarantee of zero-shot transfer learning for CLIP.
\begin{figure}[ht]
    \centering
    \includegraphics[width=0.75\textwidth]{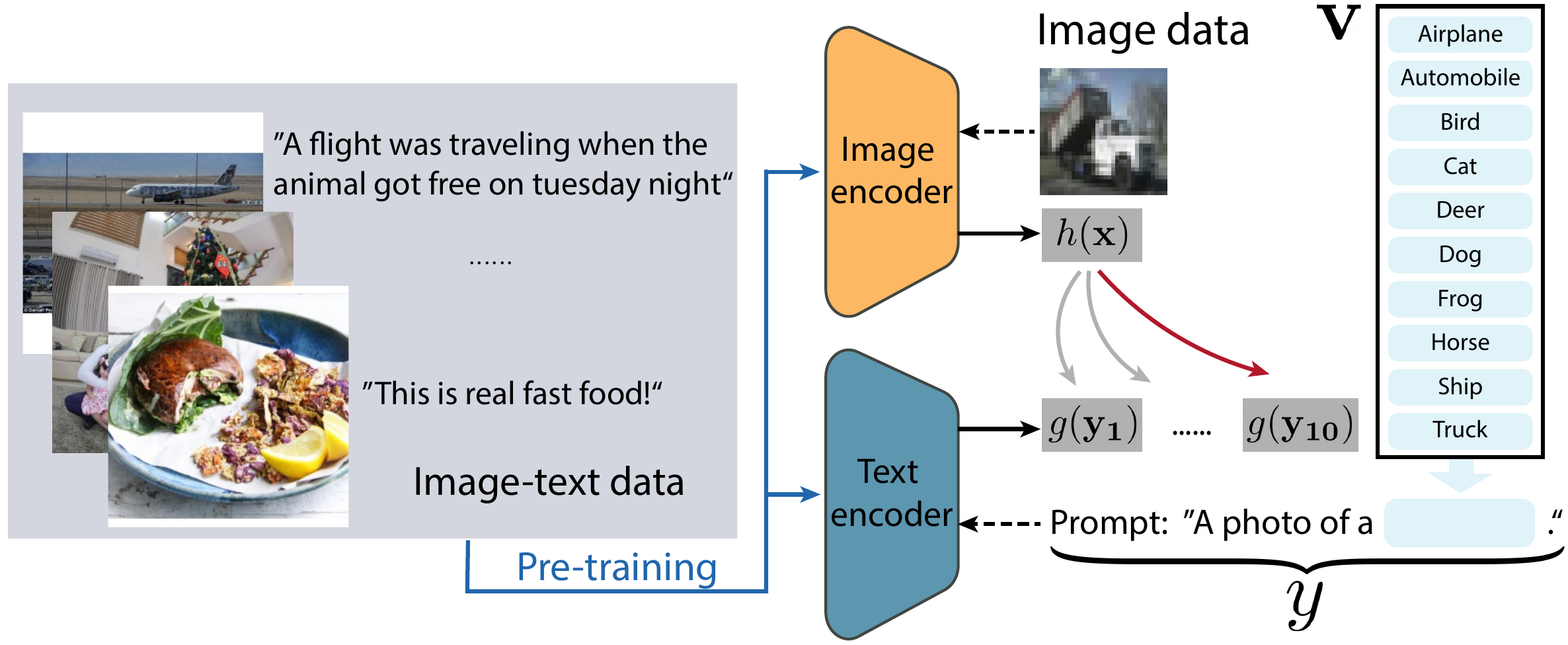}
    \caption{Illustration of zero-shot transfer learning. With the encoders jointly pre-trained on the image-text dataset, zero-shot transfer is done by issuing prompts according to all the potential labels of the task. With similarity score computed between the image embedding and all prompt embeddings, the label that resulted in highest similarity is the prediction.}
    \label{fig:my_label}
\end{figure}

\begin{corollary}\label{thm:main2}
Suppose the result of Theorem~\ref{thm:main} holds for the learned similarity function $\hat{f}$. Then we calculate the similarity score $\hat{f}(\xb, \yb_{k})$ for all $k \in [K]$ and pick the indices \todoq{one index or multiple indices?} of the top-$r$ scores within the set $\{\hat{f}(\xb, \yb_{k})\}$ as the predictions\todoq{label or labels?}\todoy{predictions?} of the image $\xb$. Then the top-$r$ error is bounded by $\epsilon'/\log(1+r)$.
\end{corollary}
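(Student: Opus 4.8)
The plan is to obtain the bound directly from the soft-margin guarantee \eqref{eq:margin1} of Theorem~\ref{thm:main} by a single application of Markov's inequality, after translating the combinatorial event ``the true label is missing from the top-$r$ predictions'' into a pointwise lower bound on the nonnegative random variable
\[
\Phi := \log\bigg(\sum_{k\in[K]}\exp\big(\big[\hat{f}(\xb,\yb_k)-\hat{f}(\xb,\yb^{*})\big]/\tau\big)\bigg),
\]
whose expectation \eqref{eq:margin1} controls by $\epsilon'$. Here $(\xb,\zb)\sim\cD_{\xb\times\zb}$, the prompts $\yb_k\sim\cD_{\yb|\vb_k}$ are drawn independently, and $\yb^{*}=\sum_{k\in[K]}\ind(\zb=\vb_k)\yb_k$; write $k^{*}$ for the index with $\zb=\vb_{k^{*}}$, so that $\yb^{*}=\yb_{k^{*}}$ and the $k=k^{*}$ summand in $\Phi$ equals $\exp(0)=1$. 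In particular the argument of the logarithm is always $\ge 1$, so $\Phi\ge 0$ pointwise, which is what licenses Markov's inequality.

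Next I would analyze the error event. A top-$r$ error occurs exactly when $k^{*}$ is not among the $r$ indices with the largest scores $\hat{f}(\xb,\yb_k)$. I claim that on this event there exist $r$ indices $j\ne k^{*}$ with $\hat{f}(\xb,\yb_j)\ge \hat{f}(\xb,\yb_{k^{*}})$: if fewer than $r$ such indices existed, then $k^{*}$ would lie among the $r$ largest scores and would be selected, a contradiction (ties may be broken by any fixed rule, and this only makes the claim easier). Using $\tau>0$ to preserve inequalities after dividing by $\tau$, it follows that on the error event the sum inside $\Phi$ is at least the $k^{*}$ term, which equals $1$, plus $r$ further terms each $\ge \exp(0)=1$, hence at least $r+1$; therefore $\Phi\ge\log(1+r)$ on the error event.

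Combining the two steps, the top-$r$ error probability is at most $\mathbb{P}(\Phi\ge\log(1+r))$, and since $\Phi\ge 0$, Markov's inequality gives $\mathbb{P}(\Phi\ge\log(1+r))\le \EE[\Phi]/\log(1+r)\le \epsilon'/\log(1+r)$ by \eqref{eq:margin1}, which is the desired bound. The symmetric statement (retrieving the correct image for a given text prompt) follows verbatim from \eqref{eq:margin2}.

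I expect the only genuine subtlety to be the bookkeeping in the middle step — precisely relating ``$k^{*}$ not in the top-$r$'' to the existence of $r$ competing prompts with at-least-as-large scores, and handling ties cleanly — while the nonnegativity observation and the Markov step are immediate. Notably, the proof uses only the conclusions \eqref{eq:margin1}--\eqref{eq:margin2} of Theorem~\ref{thm:main}, not its full strength.
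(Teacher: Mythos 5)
Your argument is correct and is essentially the paper's own proof: the paper lower-bounds $\EE[\Phi]\ge \EE[\ind(\cE)\Phi]\ge \PP(\cE)\log(1+r)$ using that $\Phi\ge 0$ pointwise and $\Phi\ge\log(1+r)$ on the error event $\cE$, which is exactly your Markov-inequality phrasing in disguise. If anything, your counting step is stated a bit more carefully (you note the $k^{*}$ summand equals $1$ exactly and that at least $r$ competitors have scores at least as large, allowing for ties), whereas the paper's one-line justification is slightly loose on this point.
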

In other words, Corollary~\ref{thm:main2} guarantees that a trained CLIP model can achieve small top-$r$ error, where $r$ is an integer usually selected as $1$ or $3$ in real-data experiments.
\begin{remark}
%The task considered in Corollary~\ref{thm:main2} is limited since we do not consider the distribution shift. 
The result in Corollary~\ref{thm:main2} can be generalized to out-of-distribution zero-shot transfer. For example, we can deal with the case where the distribution of the prompts $\cD_{\yb|\vb_k}$ and the image distribution $\cD_{\xb}$ are shifted. As long as the $\chi^{2}$ distance between the shifted distributions is bounded, we can provide a top-$r$ error guarantee (see Appendix~\ref{appendix:5} for a detailed discussion). 
\end{remark}

Next, we will introduce a specific problem to illustrate how CLIP can learn transferable features with distinguishable margins, which is hard to achieve by simple square loss. 
\begin{definition}[A Case Study]\label{def:data}
Let shared feature $\zb \in \RR^{K_1}$ be random variable uniformly drawn from the set $\cV =  \{\vb_{1}, \ldots, \vb_{K}\}$  where $\|\vb_{k}\|_{2} = 1$, $\max_{k\not = k'} \la \vb_k, \vb_k'\ra = 1 - \gamma$. Let $\bxi \in \RR^{K_2}, \bzeta \in \RR^{K_3}$ be unique random features satisfying $\|\bxi\|_{2}, \|\bzeta\|_{2}\leq R$ and are mutually independent given $\zb$. The image-text pair is generated as
\begin{align*}
\xb = \Gb \begin{bmatrix} \zb\\ \bxi\end{bmatrix} = \Gb_1\zb + \Gb_2\bxi, \qquad \yb = \Hb \begin{bmatrix}\zb \\ \bzeta\end{bmatrix} = \Hb_{1}\zb + \Hb_{2}\bzeta,
\end{align*}
where $\Gb \in \RR^{d_{1} \times (K_1+K_2)}$ is the image dictionary with full rank $(K_1+K_2)$, $\Hb \in \RR^{d_{2} \times (K_1+K_3)}$ is the text dictionary with full rank $(K_1+K_3)$.
\end{definition}

For the distribution in Definition \ref{def:data}, locked image-text tuning is enough to learn transferrable features \citep{zhai2022lit}. In particular, we choose the score function as $f_{\Wb} = \la \gb(\xb), \hb(\yb) \ra $ where the embeddings are $\gb(\xb) = \Wb\xb , \hb(\yb) =  \yb$. Next, we verify Assumptions~\ref{assm: zero} for the specified distribution.

% And we will train our model using Loss \eqref{eq:contrastivegeneral} with an early stopping criterion.

\begin{lemma}[Completeness]\label{lm:completeness}
There exist a score function $f^{*}(\xb, \yb) = \la \Wb^{*}\xb, \yb \ra$ with $\Wb^{*} \in \RR^{d_{2}\times d_{1}}$ satisfying 
\begin{itemize}[leftmargin=*,nosep]
\item $|f^{*}| \leq 1$,
\item  For $(\xb, \yb, \zb) \sim \cD_{\xb \times \yb \times \zb}$, variance $\EE_{(\yb, \zb)}\big[\text{Var}_{\xb|\zb}(f^{*}(\xb, \yb))\big] = \EE_{(\xb, \zb)}\big[\text{Var}_{\yb|\zb}(f^{*}(\xb, \yb))\big] = 0$,
\item Let $\xb \sim \cD_{\xb|\zb}, \yb \sim \cD_{\yb|\zb}, \xb' \sim \cD_{\xb'|\zb'}, \yb' \sim \cD_{\yb'|\zb'}$ where $\zb \not = \zb'$. With probability $1$, we have that  $f^{*}(\xb', \yb) \leq f^{*}(\xb, \yb) - \gamma$ and $f^{*}(\xb, \yb') \leq f^{*}(\xb, \yb) - \gamma$.
\end{itemize}
\end{lemma}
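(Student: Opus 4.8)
The plan is to construct $\Wb^{*}$ explicitly so that $f^{*}(\xb,\yb) = \la \Wb^{*}\xb, \yb\ra$ depends only on the shared feature $\zb$ underlying $\xb$ and the shared feature $\zb'$ underlying $\yb$, and in fact equals (a suitably scaled version of) $\la \zb, \zb'\ra$. Concretely, since $\Gb$ has full column rank $K_1+K_2$, there is a left inverse $\Gb^{\dagger}$ with $\Gb^{\dagger}\Gb = \Ib$; let $\Pb_{1}\in\RR^{K_1\times(K_1+K_2)}$ be the projection onto the first $K_1$ coordinates, so that $\Pb_{1}\Gb^{\dagger}\xb = \zb$ whenever $\xb = \Gb[\zb;\bxi]$. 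Similarly, using the full-rank $\Hb$, one can recover $\zb$ from $\yb$ via a linear map; but since $\hb(\yb)=\yb$ is fixed, I instead want a matrix $\Mb$ with $\Mb\yb$ equal to something proportional to $\zb$ — here I use that $\Hb_{1}$ has full column rank $K_1$ (a consequence of $\Hb$ being full rank), so there is $\Hb_{1}^{\dagger}$ with $\Hb_{1}^{\dagger}\Hb_{1} = \Ib_{K_1}$ and $\Hb_{1}^{\dagger}\Hb_{2}$ some fixed matrix. The cleanest route: set $\Wb^{*} = c\,(\Pb_{1}\Gb^{\dagger})^{\top}(\Pb_{1}'\Hb^{\dagger}\cdot(\text{something}))$ — more precisely, pick $\Wb^{*}$ so that $\la \Wb^{*}\xb,\yb\ra = c\,\la \Pb_{1}\Gb^{\dagger}\xb,\; \Pb_{1}\Hb^{\dagger}\yb\ra = c\la\zb,\zb'\ra$ for the appropriate scaling constant $c$. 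The key point is that $\Pb_{1}\Hb^{\dagger}\yb = \zb'$ when $\yb=\Hb[\zb';\bzeta]$, by the same left-inverse argument applied to $\Hb$.

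With this choice, $f^{*}(\xb,\yb) = c\la\zb,\zb'\ra$ is a deterministic function of $(\zb,\zb')$ only. The three bullets then follow quickly. For the variance bullet: conditioned on $\zb$ (and on $\yb$, equivalently on $\zb'$), $f^{*}(\xb,\yb) = c\la\zb,\zb'\ra$ is constant, so $\text{Var}_{\xb|\zb}(f^{*}(\xb,\yb)) = 0$ pointwise, hence the expectation is $0$; symmetrically for the other term. For the boundedness bullet: since $\|\vb_k\|_2 = 1$ for all $k$, Cauchy–Schwarz gives $|\la\zb,\zb'\ra|\le 1$, so choosing $c = 1$ (or $c\le 1$) gives $|f^{*}|\le 1$. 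For the margin bullet: when $\zb=\zb'=\vb_k$ we get $f^{*}(\xb,\yb) = c\|\vb_k\|_2^2 = c$, while when $\zb'=\vb_{k'}\neq\vb_k$ we get $f^{*}(\xb',\yb) = c\la\vb_{k'},\vb_k\ra \le c(1-\gamma) = f^{*}(\xb,\yb) - c\gamma$, using the assumption $\max_{k\neq k'}\la\vb_k,\vb_{k'}\ra = 1-\gamma$; taking $c=1$ yields the stated margin $\gamma$, and this holds with probability $1$ since the bound is deterministic in $\zb,\zb'$. The symmetric statement $f^{*}(\xb,\yb')\le f^{*}(\xb,\yb)-\gamma$ is identical by swapping roles.

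The only mild subtlety — and the step I would be most careful about — is justifying that one can pick a single linear $\Wb^{*}$ realizing $\la\Wb^{*}\xb,\yb\ra = c\la \Pb_1\Gb^{\dagger}\xb, \Pb_1\Hb^{\dagger}\yb\ra$ simultaneously for all $\xb$ in the image support and all $\yb$ in the text support. This is immediate from bilinearity: $\la \Pb_1\Gb^{\dagger}\xb, \Pb_1\Hb^{\dagger}\yb\ra = \xb^{\top}(\Pb_1\Gb^{\dagger})^{\top}(\Pb_1\Hb^{\dagger})\yb$, so $\Wb^{*} := c\,(\Pb_1\Hb^{\dagger})^{\top}(\Pb_1\Gb^{\dagger}) \in \RR^{d_2\times d_1}$ works, and $\la\zb,\zb'\ra$ only uses the shared-feature coordinates because the left-inverse kills the $\bxi,\bzeta$ components exactly. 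One should double-check that $\Gb^{\dagger}\Gb = \Ib$ on the relevant block holds because $\Gb$ has full column rank $K_1+K_2$ (so $\Gb^{\dagger} = (\Gb^{\top}\Gb)^{-1}\Gb^{\top}$ is a genuine left inverse), and likewise for $\Hb$; the sparsity/discreteness of $\zb$ plays no role here, only $\|\vb_k\|_2=1$ and the inner-product gap $\gamma$ do. I would write $c=1$ throughout for simplicity and remark that any $c\in(0,1]$ also works, trading off the margin.
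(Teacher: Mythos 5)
Your construction is exactly the paper's: unwinding $(\Pb_1\Hb^{\dagger})^{\top}(\Pb_1\Gb^{\dagger})$ with $\Gb^{\dagger}=(\Gb^{\top}\Gb)^{-1}\Gb^{\top}$ gives precisely $\Wb^{*}=\Hb(\Hb^{\top}\Hb)^{-1}\Pb(\Gb^{\top}\Gb)^{-1}\Gb^{\top}$ (with $c=1$), and the three bullets are then verified via the same three observations ($f^{*}=\la\zb,\zb'\ra$, Cauchy--Schwarz with $\|\vb_k\|_2=1$, and $\max_{k\neq k'}\la\vb_k,\vb_{k'}\ra=1-\gamma$). The proof is correct and essentially identical to the paper's.
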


Then we can use the standard gradient descent on the empirical loss to learn the score function $f$, i.e.,
\begin{align*}
\Wb^{(t+1)} = \Wb^{(t)} - \eta \nabla_{\Wb}\hat{L}_{S}(f, \tau).    
\end{align*}
The following theorem gives convergence guarantees for CLIP and provides the upper bound of its zero-shot transfer error. 
\begin{theorem}\label{thm:case}
For sufficiently large $n$, set the learning rate $\eta = O(\epsilon \tau^{2}\|\Gb\|^{-2}\|\Hb\|_{2}^{-2}(1+R)^{-4})$, gradient descent can find $\hat{\Wb}$\todoq{change it to $\hat \Wb$?} within $4\|\Wb^{(0)}-\Wb^*\|_{F}^{2}/(\eta \epsilon)$ iterations such that $L_{\cD^{B}}(\hat{f}, \tau) \leq L_{\cD^{B}}(f^{*}, \tau) + \epsilon$ where $\hat{f} = \la \hat{\Wb}\xb, \yb \ra$ \todoq{change it to $f_{\hat \Wb}$?}. %Therefore, by Theorems~\ref{thm:main} and \ref{thm:main2}, 
In addition, the top-$r$ zero-shot transfer error is bounded by $\epsilon'/\log(1+r)$, where $\epsilon' = (C_{B} + 2) \cdot \bigg[ \epsilon + 2B\exp(-\gamma/\tau)\bigg]$ and $C_B= \tilde{O}(K/B)$. 
\end{theorem}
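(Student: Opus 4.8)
The statement splits into two parts: (i) a training guarantee, namely that gradient descent on the empirical loss reaches an $\epsilon$-approximate minimizer of the \emph{population} contrastive loss $L_{\cD^{B}}$ within $4\|\Wb^{(0)}-\Wb^{*}\|_{F}^{2}/(\eta\epsilon)$ iterations; and (ii) a zero-shot transfer bound. My plan is to obtain (i) by a routine convex-optimization argument combined with the uniform-convergence bound of Theorem~\ref{thm:uniform convergence}, and then to get (ii) essentially for free by feeding the output $\hat f=\la\hat\Wb\xb,\yb\ra\in\cF$ of (i) into Theorem~\ref{thm:main} and Corollary~\ref{thm:main2}, after checking via Lemma~\ref{lm:completeness} that Assumption~\ref{assm: zero} holds for the distribution of Definition~\ref{def:data} with $\alpha=\beta=0$ and margin $\gamma$.

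\textbf{Step 1: convexity and Lipschitzness.} Since $f_{\Wb}(\xb,\yb)=\la\Wb\xb,\yb\ra$ is linear in $\Wb$, and each summand in \eqref{eq:contrastivegeneral} is a log-sum-exp of affine functions of the score values, $\hat L_{S}(f_{\Wb},\tau)$ and $L_{\cD^{B}}(f_{\Wb},\tau)$ are convex in $\Wb$. For the Lipschitz constant I would use the data model to bound the inputs: $\|\xb\|_{2}=\|\Gb_{1}\zb+\Gb_{2}\bxi\|_{2}\le\|\Gb\|_{2}(1+R)$ and similarly $\|\yb\|_{2}\le\|\Hb\|_{2}(1+R)$, since $\|\vb_{k}\|_{2}=1$ and $\|\bxi\|_{2},\|\bzeta\|_{2}\le R$. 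Hence $\|\nabla_{\Wb}f_{\Wb}(\xb,\yb)\|_{F}=\|\yb\xb^{\top}\|_{F}=\|\xb\|_{2}\|\yb\|_{2}\le\|\Gb\|_{2}\|\Hb\|_{2}(1+R)^{2}$, and since the softmax coefficients in $\nabla_{\Wb}\hat L_{S}$ form a probability vector and differences of gradients contribute another factor of $2$, one gets $\|\nabla_{\Wb}\hat L_{S}(f_{\Wb},\tau)\|_{F}\le G:=\cO\big(\tau^{-1}\|\Gb\|_{2}\|\Hb\|_{2}(1+R)^{2}\big)$ uniformly in $\Wb$. The same estimate shows that every iterate satisfies $|\hat f(\xb,\yb)|\le M$ for a finite $M$ (needed so Theorem~\ref{thm:uniform convergence} applies); because $\alpha=\beta=0$, the precise value of $M$ drops out of the final bound.

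\textbf{Step 2: descent convergence and the population gap.} With $\eta=\Theta(\epsilon/G^{2})$ — which matches the stated $\eta=\cO(\epsilon\tau^{2}\|\Gb\|^{-2}\|\Hb\|_{2}^{-2}(1+R)^{-4})$ — the standard analysis of gradient descent for convex $G$-Lipschitz objectives, using the reference point $\Wb^{*}$ from Lemma~\ref{lm:completeness} (which is in $\cF$), gives that the best iterate $\hat\Wb$ after $T$ steps obeys $\hat L_{S}(\hat f,\tau)-\hat L_{S}(f^{*},\tau)\le\|\Wb^{(0)}-\Wb^{*}\|_{F}^{2}/(2\eta T)+\eta G^{2}/2$; splitting the $\epsilon/2$ optimization budget evenly between the two terms yields exactly $T=4\|\Wb^{(0)}-\Wb^{*}\|_{F}^{2}/(\eta\epsilon)$. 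I then invoke Theorem~\ref{thm:uniform convergence} at both $\hat f$ and $f^{*}$, so that for $n$ large enough each generalization gap is at most $\epsilon/4$, and chain
\[ L_{\cD^{B}}(\hat f,\tau)\le\hat L_{S}(\hat f,\tau)+\epsilon/4\le\hat L_{S}(f^{*},\tau)+3\epsilon/4\le L_{\cD^{B}}(f^{*},\tau)+\epsilon, \]
which is \eqref{eq: test achievement}.

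\textbf{Step 3: zero-shot bound, and the main obstacle.} Lemma~\ref{lm:completeness} verifies Assumption~\ref{assm: zero} with $\alpha=0$, $\beta=0$ and margin $\gamma$, so Theorem~\ref{thm:main} applies to $\hat f$; setting $\alpha=\beta=0$ in its conclusion eliminates the $C\tau^{-1}MB\alpha$ and $C\tau^{-1}(\beta MB)^{1/3}$ terms, leaving $\epsilon'=(C_{B}+2)\big[\epsilon+2B\exp(-\gamma/\tau)\big]$, and since $\zb$ is uniform on $K$ atoms we have $p_{k}=1/K$, hence $C_{B}=\tilde{O}(\max_{k}p_{k}^{-1}/B)=\tilde{O}(K/B)$. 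Corollary~\ref{thm:main2} then converts this into a top-$r$ error of at most $\epsilon'/\log(1+r)$. The only place demanding genuine care is Steps 1–2: tracking the exact dependence of $G$ (equivalently $\eta$ and the iteration count) on $\tau$, $\|\Gb\|_{2}$, $\|\Hb\|_{2}$, $R$ through the chain rule over the log-sum-exp composed with the bilinear score, and confirming that the iterates never leave the norm ball on which $|\hat f|\le M$ so that uniform convergence is legitimately invoked. All of the representation-learning content is already packaged in Theorem~\ref{thm:main}, which is used here as a black box.
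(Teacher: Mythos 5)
Your proposal matches the paper's proof essentially step for step: convexity and Lipschitzness of the log-sum-exp of a bilinear score, the standard convex-GD telescope giving the stated $T$, uniform convergence (Theorem~\ref{thm:uniform convergence}) to pass from empirical to population loss, and Lemma~\ref{lm:completeness} with $\alpha=\beta=0$ plugged into Theorem~\ref{thm:main}/Corollary~\ref{thm:main2} to close the zero-shot bound. The one point you flag but state too casually is iterate boundedness: Lipschitzness of the gradient alone does not guarantee every iterate satisfies $|\hat f|\le M$ for a $T$-independent $M$, since GD on a flat convex objective can drift; the paper handles this by returning the iterate at the \emph{first} time $\hat T$ the empirical loss drops below $\hat L_S(f^*)+\epsilon/2$ and re-telescoping up to $\hat T$ to conclude $\|\Wb^{(\hat T)}-\Wb^*\|_F\le\|\Wb^{(0)}-\Wb^*\|_F$, which is what actually pins $\hat\Wb$ to a fixed norm ball and makes the covering-number/uniform-convergence step legitimate. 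Aside from spelling that out, your plan is the paper's argument.
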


\subsection{Square Loss Fails Zero-Shot Learning}
Another conceivable method is to use the square loss to align the embeddings of $\xb, \yb$. 
Here, we investigate why such simple loss can not successfully learn transferrable representations and reveal the significance of contrastive loss in multi-modal learning. 
In particular, we use $\EE[\|\gb(\xb) - \yb\|_{2}^{2}]$ to learn the embedding $\gb$. By \citet{lee2021predicting}, we know that the embedding \todoq{should use embedding rather than feature here, as we use feature to refer to $\zb$ in this paper. This issue may exisit in other places. Please check.} $\gb(\xb)$ indeed preserves the information of the shared feature $\zb$ and can be used to predict the label $k$ (the index of $\zb$ in the dictionary) using linear probing with additional $\tilde{O}(K)$ examples $\{(k, \xb), \xb \sim \cD_{\xb|\vb_k}\}$. Given the success of $\gb$ as a representation for the downstream classification problem, a natural question arises: Can the learned embedding be used for the \textit{zero-shot transfer} task, using only $K$ prompts $\yb_k, k \in [K]$ where $\yb_k \sim \cD_{\yb|\vb_k}$?

Surprisingly, the answer is negative. We find that even if we can train with population risk and get the Bayesian optimal predictor, the learned representation $\gb$ is not suitable for the zero-shot transfer.
To make a fair comparison, we also consider the data distribution introduced in Definition~\ref{def:data} and present the following results. 
\begin{theorem}\label{thm:trivial}
The Bayesian optimal representation $\gb$ is $ \gb(\xb) = \Hb\begin{bmatrix}
\zb \\
\EE[\bzeta|\zb]
\end{bmatrix}$. 
\end{theorem}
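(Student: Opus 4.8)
The plan is to compute the Bayesian optimal predictor for the square-loss regression problem $\min_{\gb} \EE_{(\xb,\yb)}[\|\gb(\xb) - \yb\|_2^2]$ directly. It is a standard fact that the minimizer over all measurable functions is the conditional expectation $\gb(\xb) = \EE[\yb \mid \xb]$, so the first step is simply to invoke this and reduce the theorem to evaluating $\EE[\yb \mid \xb]$ under the generative model of Definition~\ref{def:data}.

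Next I would unpack the conditional expectation using the structure $\yb = \Hb_1 \zb + \Hb_2 \bzeta$, so that $\EE[\yb \mid \xb] = \Hb_1 \EE[\zb \mid \xb] + \Hb_2 \EE[\bzeta \mid \xb]$. The key observations are: (i) since $\Gb$ has full column rank $(K_1 + K_2)$, the image $\xb = \Gb_1 \zb + \Gb_2 \bxi$ determines $\begin{bmatrix}\zb \\ \bxi\end{bmatrix}$ exactly via the left inverse $\Gb^{\dagger}$, hence $\zb$ is a deterministic function of $\xb$ and $\EE[\zb \mid \xb] = \zb$; and (ii) by the conditional independence $\bxi \perp \bzeta \mid \zb$ (Definition~\ref{def:data} states $\bxi,\bzeta$ are mutually independent given $\zb$), and since conditioning on $\xb$ is equivalent to conditioning on $(\zb,\bxi)$, we get $\EE[\bzeta \mid \xb] = \EE[\bzeta \mid \zb, \bxi] = \EE[\bzeta \mid \zb]$. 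Combining these gives $\gb(\xb) = \Hb_1 \zb + \Hb_2 \EE[\bzeta \mid \zb] = \Hb \begin{bmatrix}\zb \\ \EE[\bzeta \mid \zb]\end{bmatrix}$, which is exactly the claimed expression.

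The only mildly delicate point — and the one I would be most careful about — is justifying step (ii), namely that the $\sigma$-algebra generated by $\xb$ coincides with (or at least, for the purpose of computing the conditional expectation of $\bzeta$, is equivalent to) that generated by $(\zb, \bxi)$; this rests on $\Gb$ having full column rank so that the map $\begin{bmatrix}\zb \\ \bxi\end{bmatrix} \mapsto \xb$ is injective. Once invertibility of the dictionary is in hand, the conditional independence of $\bzeta$ from $\bxi$ given $\zb$ immediately collapses $\EE[\bzeta \mid \zb,\bxi]$ to $\EE[\bzeta \mid \zb]$, and the rest is linear algebra. I do not anticipate any real obstacle here; the theorem is essentially a direct computation whose role in the paper is to set up the subsequent argument (in the omitted part) that this $\gb$, while informative for linear probing, fails the zero-shot task because the prompt embeddings $\yb_k$ and the image embeddings $\gb(\xb)$ are not aligned in a way that makes nearest-prompt classification correct.
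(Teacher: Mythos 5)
Your proof is correct, but it takes a genuinely different route from the paper's. You invoke the textbook fact that the $L^2$-optimal predictor is $\EE[\yb\mid\xb]$ and then compute it by noting that $\sigma(\xb)=\sigma(\zb,\bxi)$ (full column rank of $\Gb$) together with $\bzeta\perp\bxi\mid\zb$, so $\EE[\bzeta\mid\xb]=\EE[\bzeta\mid\zb]$. The paper instead performs a Pythagorean decomposition around $\EE[\yb\mid\zb]$: it conditions on $\zb$, expands $\EE[\|\gb(\xb)-\yb\|_2^2\mid\zb]$ into the sum of $\EE[\|\gb(\xb)-\EE[\yb\mid\zb]\|_2^2\mid\zb]$ and the irreducible $\EE[\|\yb-\EE[\yb\mid\zb]\|_2^2\mid\zb]$ (the cross term vanishes using $\xb\perp\yb\mid\zb$), and then argues the minimum $\gb(\xb)=\EE[\yb\mid\zb]$ is attainable because $\zb$ is a linear function of $\xb$. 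Both routes rely on the same two structural facts (invertibility of $\Gb$, conditional independence given $\zb$), but yours is a more direct computation of the conditional mean, whereas the paper's decomposition explicitly isolates the irreducible error term and establishes optimality from scratch rather than quoting the Bayes-predictor fact. Neither approach has a gap; they buy essentially the same thing, with the paper's version making the achievability of the minimizer and the residual error a bit more visible, and yours making the role of conditional independence more transparent by applying it directly to $\bzeta$.
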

Since $\EE[\bzeta|\zb]$ lies in the unique feature space, the accuracy of zero-shot learning can be largely determined by the unique features $\bzeta$, i.e., the quality of the prompt. In detail, given a set of prompts $\{ \yb_k \}$, we evaluate the similarity between representations $\gb(\xb)$ and $\hb(\yb_k) = \yb_k$ under different similarity scores, including (1) inner product similarity: $f(\xb, \yb_k) = \la \gb(\xb), \yb_k \ra$; (2) cosine similarity: $f(\xb, \yb_k) = \la \gb(\xb)/\|\gb(\xb)\|_{2}, \yb_k/\|\yb_k\|_{2} \ra$; and (3) $L_2$ similarity: $(-1)\cdot\|\gb(\xb) - \hb(\yb_k)\|_{2}$. The following corollary formally states the negative result.
\begin{corollary}\label{cor:5-7}
For the distribution in Definition~\ref{def:data} with $\Hb = \begin{bmatrix}
\Ib \\
\zero
\end{bmatrix}$ \todoq{do you want to say $\Hb=\Ib$?}, margin $\gamma < 1/3$, text unique feature $\bzeta \in \RR^{K_3}$ drawn from $\{\eb_1, \eb_2\}$ with probability $1/3, 2/3$ respectively. Then, the zero-shot top-$1$ error is at least $1/(3K)$ regardless of the three similarity scores.   
\end{corollary}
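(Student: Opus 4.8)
\textbf{Proof proposal for Corollary~\ref{cor:5-7}.}

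The plan is to exploit the explicit form of the Bayesian optimal representation from Theorem~\ref{thm:trivial} together with the specific structure $\Hb = [\Ib; \zero]$ and $\bzeta \in \{\eb_1, \eb_2\}$, and to show that for a positive-probability set of images the correct prompt cannot possibly win under any of the three similarity scores. With $\Hb = [\Ib;\zero]$ we have $\hb(\yb_k) = \yb_k = [\vb_{j(k)}; \bzeta_k]$ where $\bzeta_k \in \{\eb_1, \eb_2\}$ is the (random) unique feature realized in prompt $k$, and $\gb(\xb) = [\zb; \EE[\bzeta \mid \zb]] = [\vb_{j}; \tfrac{1}{3}\eb_1 + \tfrac{2}{3}\eb_2]$ when $\xb$ has shared feature $\zb = \vb_j$. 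The key observation is that the shared-feature block of every prompt embedding is among the $\vb_k$'s with pairwise inner product at most $1-\gamma < 1$, so the ``signal'' gap between the correct prompt and a wrong prompt in the shared block is only of order $\gamma$, while the unique-feature block can contribute a discrepancy of order $1$ (the difference between $\eb_1$ and $\eb_2$).

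First I would fix an image $\xb$ with shared feature $\vb_j$ and condition on the event — which has probability at least $1/3 \cdot \ldots$, ultimately contributing the $1/(3K)$ — that the correct prompt $\yb_j$ happens to be generated with $\bzeta_j = \eb_1$ (the minority realization, probability $1/3$), while some other prompt $\yb_{k}$, $k \neq j$, is generated with $\bzeta_{k} = \eb_2$. Then I would compute, for each of the three scores, the comparison $f(\xb, \yb_j)$ versus $f(\xb, \yb_{k})$. For the inner product: $\la \gb(\xb), \yb_j\ra - \la \gb(\xb), \yb_{k}\ra = \la \vb_j, \vb_j - \vb_{k}\ra + \la \tfrac13\eb_1 + \tfrac23\eb_2, \eb_1 - \eb_2\ra = (1 - \la\vb_j,\vb_{k}\ra) + (\tfrac13 - \tfrac23) \le \gamma - \tfrac13 < 0$ using $\gamma < 1/3$; so the wrong prompt $\yb_{k}$ strictly beats the correct one. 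For cosine similarity I would note the prompt norms $\|\yb_k\|_2^2 = \|\vb_{j(k)}\|_2^2 + 1 = 2$ are all equal, so normalizing the prompts changes nothing in the comparison, and $\|\gb(\xb)\|_2$ is a common positive factor; hence the same inequality $\gamma - 1/3 < 0$ decides it. For the $L_2$ score, $-\|\gb(\xb) - \yb_j\|_2^2 + \|\gb(\xb) - \yb_{k}\|_2^2$ expands (the $\|\gb(\xb)\|_2^2$ terms cancel) to $2\la\gb(\xb), \yb_j - \yb_{k}\ra + \|\yb_{k}\|_2^2 - \|\yb_j\|_2^2 = 2\la\gb(\xb),\yb_j-\yb_{k}\ra$ since the prompt norms are equal, which is again $2(\gamma - 1/3) < 0$ by the computation above — so $\yb_j$ is strictly farther in $L_2$ than $\yb_{k}$ and loses.

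To convert this into the stated lower bound on the top-$1$ error, I would lower-bound the probability of the bad event: for a uniformly random image the shared feature is some $\vb_j$; the correct prompt $\yb_j$ draws $\bzeta_j = \eb_1$ with probability $1/3$; independently, at least one of the remaining $K-1$ prompts draws $\eb_2$ with probability $1 - (1/3)^{K-1} \ge 2/3$ (or one can just use a single fixed competing prompt, which draws $\eb_2$ with probability $2/3$). On this event the top-$1$ prediction is never $j$, so it is wrong. Multiplying, the error probability is at least $\tfrac13\cdot\tfrac13 \cdot (\text{something}) $; the paper's bound $1/(3K)$ is comfortably implied — in fact one can be crude here, e.g. condition additionally on a specific competitor index to absorb any constants, and the factor $1/K$ is more than generous. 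The main obstacle — really the only subtlety — is bookkeeping the three similarity scores uniformly: the clean reason they all behave identically is that all prompt embeddings have the same norm $\sqrt2$, which collapses cosine and $L_2$ to the inner-product comparison up to positive constants; I would state that normalization fact first as a small lemma and then the three cases become one line each. Everything else is the elementary inequality $\gamma < 1/3 \Rightarrow \gamma - 1/3 < 0$ applied to the decomposition ``shared-block gap $\le \gamma$, unique-block gap $= -1/3$''.
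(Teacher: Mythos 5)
Your overall strategy coincides with the paper's: plug in the Bayesian optimal representation $\gb(\xb)=\Hb[\zb;\EE[\bzeta|\zb]]$ from Theorem~\ref{thm:trivial}, observe that the unique-feature contribution (a constant $1/3$ gap between the $\eb_1$ and $\eb_2$ realizations) can overpower the shared-feature gap, and collapse cosine and $L_2$ to the inner-product comparison because every prompt embedding has the same norm $\sqrt{2}$. The constant-prompt-norm observation is clean and unifies the three scores in the same way the paper does (the paper writes $\|\yb\|_2=1$, a harmless slip, but the point is the norms are equal across prompts). Your $L_2$ computation via the squared distance and cancellation of $\|\gb(\xb)\|_2^2$ is also fine.

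However, there is a genuine gap in how you invoke the margin assumption. You write
\[
\la\gb(\xb),\yb_j\ra - \la\gb(\xb),\yb_k\ra = \bigl(1-\la\vb_j,\vb_k\ra\bigr) + \Bigl(\tfrac13-\tfrac23\Bigr) \le \gamma - \tfrac13 < 0,
\]
which requires $1-\la\vb_j,\vb_k\ra \le \gamma$, i.e.\ $\la\vb_j,\vb_k\ra \ge 1-\gamma$. But Definition~\ref{def:data} only says $\max_{k\ne k'}\la\vb_k,\vb_{k'}\ra = 1-\gamma$; for a \emph{generic} pair this gives $\la\vb_j,\vb_k\ra \le 1-\gamma$, which is the opposite direction, and indeed for well-separated $\vb_j,\vb_k$ (say nearly orthogonal) the correct prompt wins comfortably and your bad event does not occur. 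The inequality you need holds only for the specific pair $(j,k)$ achieving the maximum, where $\la\vb_j,\vb_k\ra = 1-\gamma > 2/3$. The paper's proof makes this explicit: it first fixes such a pair and then restricts to images whose shared feature is $\vb_j$ or $\vb_k$. That restriction is also precisely where the $1/K$ factor enters: conditioning on $\zb\in\{\vb_j,\vb_k\}$ has probability $2/K$, the correct prompt draws $\eb_1$ with probability $1/3$, the competitor draws $\eb_2$ with probability $2/3$, giving $\tfrac{2}{K}\cdot\tfrac13\cdot\tfrac23 = \tfrac{4}{9K}\ge \tfrac{1}{3K}$. Your probability estimate, which treats $j$ as arbitrary and writes $\tfrac13\cdot\tfrac13\cdot(\text{something})$, both leans on an inequality that fails for most pairs and never produces the $1/K$ factor cleanly. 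The fix is local — name the maximizing pair, condition on $\zb\in\{\vb_j,\vb_k\}$, and use the symmetric swap of $\eb_1/\eb_2$ for the $\zb=\vb_k$ case — but without it the argument is not valid.
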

\begin{remark}
By Theorem~\ref{thm:case}, we can achieve arbitrarily small top-$1$ error by CLIP as long as $\epsilon$ and $\tau$ are sufficiently small. However, for the representation learned from the square loss, the top-$1$ \todoq{Top-$1$ or top-$1$? Should T be capitalized? need to be consistent through the paper} error is at least a constant even if we can achieve the Beyasian optimal predictor.
\end{remark}

\section{Learn Better Representation via Regularization}\label{sec:regularization}
In Corollary~\ref{thm:main2}, we know that CLIP can achieve a small error for zero-shot transfer tasks. In this section, we investigate how large the margin can be achieved between different features $\zb$'s. Under the same condition of Corollary~\ref{thm:main2}, we present the following corollary.
\begin{corollary}\label{cor:margin}
Suppose the result of Theorem~\ref{thm:main} holds for the learned similarity function $\hat{f}$. We calculate the similarity score $\hat{f}(\xb, \yb_{k})$ for all $k \in [K]$. Then with probability at least $1- 4\epsilon'$, the top-$1$ result gives the correct answer with a margin $\tau$.
\end{corollary}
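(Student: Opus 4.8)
The plan is to upgrade the in-expectation soft-margin bound \eqref{eq:margin1} of Theorem~\ref{thm:main} to a high-probability event by Markov's inequality, and then to observe that whenever the log-sum-exp quantity in \eqref{eq:margin1} is smaller than a fixed constant, the score of the correct prompt must in fact beat every competing prompt by at least $\tau$. Fix the test image $\xb$ together with its (random) shared feature $\zb$, and let $k^\star \in [K]$ be the index with $\zb = \vb_{k^\star}$, so that the vector $\yb^{*}=\sum_{k\in[K]}\ind(\zb=\vb_k)\yb_k$ appearing in Theorem~\ref{thm:main} is exactly $\yb_{k^\star}$. Define the nonnegative random variable $X := \log\!\big(\sum_{k\in[K]}\exp([\hat f(\xb,\yb_k)-\hat f(\xb,\yb_{k^\star})]/\tau)\big)$, where the randomness is that of $(\xb,\zb)\sim\cD_{\xb\times\zb}$ and the prompts $\yb_k\sim\cD_{\yb|\vb_k}$, $k\in[K]$, precisely as in \eqref{eq:margin1}.

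The steps, in order: (i) Note $X\ge 0$ because the $k=k^\star$ summand equals $\exp(0)=1$, so the inner sum is at least $1$; Theorem~\ref{thm:main} gives $\EE[X]\le\epsilon'$. (ii) Apply Markov's inequality with threshold $1/4$: $\mathbb{P}[X\ge 1/4]\le \EE[X]/(1/4)=4\,\EE[X]\le 4\epsilon'$, so with probability at least $1-4\epsilon'$ we have $X<1/4$, i.e.\ $\sum_{k\in[K]}\exp([\hat f(\xb,\yb_k)-\hat f(\xb,\yb_{k^\star})]/\tau)<e^{1/4}$. (iii) Subtract the $k=k^\star$ term, which is $1$, to obtain $\sum_{k\ne k^\star}\exp([\hat f(\xb,\yb_k)-\hat f(\xb,\yb_{k^\star})]/\tau)<e^{1/4}-1<e^{-1}$, using the numerical fact $e^{1/4}-1\approx 0.284<0.368\approx e^{-1}$. (iv) Since every summand is nonnegative, this forces $\exp([\hat f(\xb,\yb_k)-\hat f(\xb,\yb_{k^\star})]/\tau)<e^{-1}$ for each individual $k\ne k^\star$, which rearranges to $\hat f(\xb,\yb_{k^\star})-\hat f(\xb,\yb_k)>\tau$. (v) Conclude: on this event $k^\star$ attains the strict maximum of $\hat f(\xb,\yb_k)$ over $k\in[K]$, so the top-$1$ prediction equals the true label $k^\star$ and is separated from every other prompt by a margin of at least $\tau$, which is the claim with failure probability $4\epsilon'$.

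I do not expect a substantive obstacle; the only two points that require care are bookkeeping. First, one must check the numerical slack that makes the constants line up: any threshold $a$ with $e^{a}-1<e^{-1}$ works and yields failure probability $\epsilon'/a$, and $a=1/4$ is the clean choice matching the stated bound $4\epsilon'$ (one could instead optimize $a$ to trade a slightly smaller probability constant against the strictness of the margin, e.g.\ $a=\log(1+e^{-1})$). Second, one must be explicit that the probability in the statement is over exactly the randomness of $(\xb,\zb)$ and of the prompt draws $\{\yb_k\sim\cD_{\yb|\vb_k}\}$ used in \eqref{eq:margin1}, so that the Markov step applies verbatim. The analogous margin guarantee for retrieving an image from a text query follows identically from \eqref{eq:margin2}.
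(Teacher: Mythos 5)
Your proof is correct and takes essentially the same route as the paper's: the paper also combines the bound $\EE[X]\le\epsilon'$ from Theorem~\ref{thm:main} with a Markov-style argument, by observing that on the bad event (top-1 wrong or margin $<\tau$) some off-index summand is at least $e^{-1}$, so $X\ge\log(1+e^{-1})$, and hence $\mathbb{P}(\text{bad})\le\epsilon'/\log(1+e^{-1})\le 4\epsilon'$. Your choice of threshold $1/4$ in Markov's inequality, followed by the numerical check $e^{1/4}-1<e^{-1}$, is just a slightly different constant that yields the same $4\epsilon'$.
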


Here, the margin depends on the temperature parameter $\tau$. Note that we only achieve the margin with $\tau$ instead of $\gamma$ guaranteed in the Assumption~\ref{assm: zero}. Therefore, CLIP needs to choose $\tau \ll \gamma$ to ensure a good performance, indicating a theoretical gap for the learned margin. To further investigate this gap, we consider the simple case study in Definition~\ref{def:data} and have the following negative result.

\begin{theorem}\label{thm:margin without reg}
Under the same condition as Theorem~\ref{thm:case}, there exists a special case with initialization $\Wb^{(0)}$, such that when we train the model with polynomial iterations $T = \text{poly}(\eta^{-1}, \epsilon, d_1, d_2)$ \todoq{epochs? which previous theorem involves epochs? I don't remember}, with probability at least $0.99$, the top-$1$ result can only give the correct answer with a margin $\tilde{O}(\tau)$.   
\end{theorem}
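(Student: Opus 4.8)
## Proof Proposal for Theorem~\ref{thm:margin without reg}

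\noindent\textbf{Proof proposal.}
The plan is to exhibit a concrete instance of the case study in Definition~\ref{def:data} --- an orthogonal dictionary, a constant number $K$ of topics with $B\ll K$, and the initialization $\Wb^{(0)}=\zero$ --- on which gradient descent provably increases the top-$1$ margin only at a logarithmic rate, so that after $T=\poly(\eta^{-1},\epsilon,d_1,d_2)$ steps it is still $\tilde O(\tau)$. Concretely I would take $\Gb=[\Gb_1,\Gb_2]$ and $\Hb=[\Hb_1,\Hb_2]$ with orthogonal columns (so $\Gb_1^\top\Gb_2=\zero$ and $\Hb_1^\top\Hb_2=\zero$). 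Since $\hat L_S(f_{\Wb},\tau)$ is a convex function of $\Wb$ (a log-sum-exp of affine maps of $\Wb$), Theorem~\ref{thm:case} already guarantees that gradient descent with the prescribed step size drives the population loss to within $\epsilon$ of $L_{\cD^B}(f^{*},\tau)$, so the hypothesis ``train with polynomially many iterations'' is consistent; the real work is to track \emph{which} near-optimal solution the iterates drift toward.

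The key structural step is a decomposition of the trajectory along the shared-feature block. Writing $\Mb^{(t)}:=\Hb_1^\top\Wb^{(t)}\Gb_1$, the top-$1$ margin of a clean test image with label $1$ against prompt $k$ equals $(\vb_1-\vb_k)^\top\Mb^{(t)}\vb_1$ up to unique-feature cross terms. Each gradient step adds a combination of rank-one matrices $\yb_i(\xb_j-\xb_i)^\top$ weighted by softmax coefficients $w_{ij}\in(0,1)$; since $(\xb_j-\xb_i)^\top\Gb_1=(\zb_j-\zb_i)^\top\Gb_1^\top\Gb_1$ vanishes whenever $\zb_i=\zb_j$ (here orthogonality of the dictionary is used), the update to $\Mb^{(t)}$ is driven \emph{only} by distinct-topic pairs, whose weights obey $w_{ij}\le\exp\big([f(\xb_j,\yb_i)-f(\xb_i,\yb_i)]/\tau\big)=\exp(-\tilde m^{(t)}_{ij}/\tau)$ for the corresponding training margin $\tilde m^{(t)}_{ij}$. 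I would then show $\tilde m^{(t)}_{ij}\ge c_0 m_t-\xi_t$, where $m_t:=\max_k(\vb_1-\vb_k)^\top\Mb^{(t)}\vb_1$ and $\xi_t$ collects unique-feature contributions controlled by $R$ and the norm of the unique block of $\Wb^{(t)}$. Combining these gives the one-step recursion $m_{t+1}\le m_t+\eta\cdot\poly(d_1,d_2,\|\Gb\|,\|\Hb\|_2,R)\cdot\exp(-c_0 m_t/\tau+\xi_t/\tau)$, which, unrolled from $m_0=0$, yields $m_t=O\big(\tau\log(1+\poly\cdot t)+\max_{s\le t}\xi_s\big)$; as $t\le T=\poly(\eta^{-1},\epsilon,d_1,d_2)$ the logarithmic term is $\tilde O(\tau)$.

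The main obstacle is controlling the coupling term $\xi_t$: one must show the unique-feature part of $\Wb^{(t)}$ never contributes more than $\tilde O(\tau)$ to the training margins, even though $\|\Wb^{(t)}\|_F$ itself may grow polynomially over $T=\poly(\eta^{-1})$ iterations. My approach is a bootstrap run jointly on $m_t$ and on the unique block $\Nb^{(t)}:=\Hb_2^\top\Wb^{(t)}\Gb_2$: on a high-probability event over the training batches (for $B\ll K$ the batches contain no same-topic collisions, so $\Nb^{(t)}$ receives no ``collision-fitting'' pressure and its contribution stays $O(R^2)\,\|\Nb^{(t)}\|_F$ with $\|\Nb^{(t)}\|_F$ itself growing only logarithmically, by the same throttling argument), one closes the induction and gets $\xi_t=\tilde O(\tau)$. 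A secondary subtlety is to verify that $\eta=O(\epsilon\tau^2\|\Gb\|^{-2}\|\Hb\|_2^{-2}(1+R)^{-4})$ is small enough that the discrete recursion does not overshoot its continuous-time envelope by more than a constant, which follows from the $(1/\tau)$-smoothness of each per-sample loss. Finally, averaging over the clean test image's unique feature $\bxi$ and taking a union bound with Corollary~\ref{cor:margin} --- which gives the matching lower bound $m_T=\Omega(\tau)$ with probability $1-4\epsilon'$ --- I would conclude that with probability at least $0.99$ the attained top-$1$ margin is $\Theta(\tau)=\tilde O(\tau)$, as claimed.
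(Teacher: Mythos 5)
Both you and the paper rely on the same mechanism: once the learned margin exceeds $\Omega(\tau\log(\cdot))$, the softmax weights on off-diagonal (distinct-topic) terms collapse to $\exp(-\Omega(\log(\cdot)))$, so the gradient shrinks to roughly $1/(\eta T)$ and the iterates effectively freeze. But you exploit this in opposite directions. The paper picks the adversarial initialization $\Wb^{(0)} = 2\log\big(16\|\Gb\|_{2}^2\|\Hb\|_{2}^2(R^{2}+1)^2 B \tau^{-1}\eta T\big)\cdot(\tau/\gamma)\Wb^{*}$, which \emph{already} sits at margin $\approx 2\log(\cdot)\tau$, and then runs a trapping/contradiction argument: if at some $t\le T$ the margin escaped the band $[\log(\cdot)\tau,\,3\log(\cdot)\tau]$, then $\|\Wb^{(t)}-\Wb^{(0)}\|_F$ would have to exceed $\eta T$ times the (exponentially suppressed) gradient bound, a contradiction. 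Crucially, the paper also sets $\bxi=\bzeta=\zero$, so the unique-feature coupling $\xi_t$ you work to control simply does not exist and no dictionary orthogonality is needed. Since Theorem~\ref{thm:margin without reg} only asks for existence of one bad initialization, this shortcut is entirely licensed; your choice of $\Wb^{(0)}=\zero$ forces you to track the whole trajectory from scratch, which is a harder (though, if it worked, stronger) statement.

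The bootstrap that is supposed to control $\xi_t$ is where your argument has a genuine hole. You gate the control of the unique block $\Nb^{(t)}=\Hb_2^\top\Wb^{(t)}\Gb_2$ on the batches having no same-topic collisions, but with $B$ draws from a distribution with $p_k>0$, collisions occur with constant per-batch probability $\Theta\!\big(B^2\min_k p_k\big)$; over $T=\poly(\eta^{-1},\epsilon,d_1,d_2)$ iterations (or under the population loss, which averages over all batch configurations) collision events are not rare, so this high-probability event does not cover the horizon you need. You would have to show that the collision updates themselves leave $\|\Nb^{(t)}\|_F$ small, not merely that they are absent, and that is a substantive missing piece. Two smaller issues: the constant $c_0$ in your recursion $m_{t+1}\le m_t+\eta\cdot\poly\cdot\exp(-c_0 m_t/\tau+\xi_t/\tau)$ and the claim that it survives the discrete step at the paper's $\eta$ are asserted, not derived; and Corollary~\ref{cor:margin} is a lower bound on the \emph{success probability} of achieving margin $\tau$, not a matching lower bound $m_T=\Omega(\tau)$, and in any case the theorem only requires the upper bound $\tilde O(\tau)$, so the union bound with it is unnecessary.
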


Such a phenomenon also exists in real data: the margin will decrease when temperature $\tau$ decreases (see Figure~\ref{fig:margin_dist}). The reason is that softmax function $L(\ab) = \log(\sum_{i}\exp(a_i))$ is convex but not strongly convex and has an exponential-decaying tail. Once the score function $f$ with the features $\gb$ and $\hb$ achieves the margin of order $\Omega(\tau)$, the gradient will exponentially decrease. Therefore, the weight will not be updated effectively. To obtain a larger margin, it is natural to add the following regularization to maximize the score of the positive pairs and minimize the score of the negative pairs.
\begin{align}
R(f) = \frac{1}{|S^{-}|}\sum_{(\xb, \yb')\in S^{-}}f(\xb, \yb') -  \frac{1}{|S^{+}|}\sum_{(\xb, \yb')\in S^{+}}f(\xb, \yb'), \label{eq:optimal}
\end{align}
where $S_{+}$ is the set of positive pairs that have the same shared feature $\zb = \zb'$, and $S_{-}$ is the set of the negative pairs that have different shared feature $\zb \not = \zb$. However, the set $S_{-}$ is very hard to determine since different image-text pairs in the batch can possibly have the same shared features, as we demonstrated in Figure~\ref{fig:illustration}. On the other hand, the set of $S_{+}$ can be simply chosen as the training data set $S$. Therefore, we propose to use only one direction in \eqref{eq:optimal} as the regularization, i.e., 
\begin{align*}
R(f) = -\frac{1}{|S|}\sum_{(\xb, \yb)\in S}f(\xb, \yb).    
\end{align*}
In particular, when $\gb$ and $\hb$ are normalized representations with unit $L_2$ norm \todoq{unit l2 norm?} and we use inner product similarity $f(\xb, \yb) = \la \gb(\xb), \hb(\yb) \ra$, our regularization can be viewed as the $L_2$ distance between the embeddings since
\begin{align*}
R_{S}(f) &=  \frac{1}{2|S|}\sum_{(\xb, \yb)\in S} \Big[\|\gb(\xb)\|_{2}^2 + \|\hb(\yb)\|_{2}^{2} - 2\la \gb(\xb), \hb(\yb)\ra\Big] -1\\
&= \frac{1}{2|S|}\sum_{(\xb, \yb)\in S}\|\gb(\xb) - \hb(\yb)\|_{2}^{2} - 1.    
\end{align*}
\todoq{why the above equation equals to the original one? assume h and g has unit l2 norm?}
Similarly, for a sampled batch $S'$, the regularized loss is defined as $\hat{L}_{S'}(f, \tau, \lambda) = L_{S'}(f, \tau) + \lambda \cdot R_{S'}(f)$, where $\lambda>0$ is a small regularization parameter. The following theorem shows that the regularization will provably improve the margin.

\begin{theorem}\label{thm:regpositive}
Under the same condition as Theorem~\ref{thm:margin without reg}, with sufficiently small $\tau$ and appropriately chosen $\lambda$, within polynomial iterations $T = \text{poly}(\eta^{-1}, \epsilon, d_1, d_2)$ \todoq{again, which previous theorem involves epoch T? }, we can find a score function $\hat{f}$ \todoq{$\hat f$} with large margin. In particular, with a probability of at least $0.99$, the top-$1$ result gives the correct label with a margin $\tilde{\Omega}(\gamma)$.    
\end{theorem}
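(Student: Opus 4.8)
The plan is to run gradient descent on the (empirical, hence by Theorem~\ref{thm:uniform convergence} and a matching gradient-concentration bound, essentially population) regularized objective $\hat{L}_{S'}(f_{\Wb},\tau,\lambda) = L_{S'}(f_{\Wb},\tau) + \lambda R_{S'}(f_{\Wb})$ from the same special-case initialization $\Wb^{(0)}$ that appears in Theorem~\ref{thm:margin without reg}, and to track the trajectory for $T$ steps. The starting observation is that $f_{\Wb}(\xb,\yb) = \la \Wb, \yb\xb^\top\ra$ is linear in $\Wb$, so $L$ is convex and $R(f_{\Wb}) = -\EE[\la\Wb,\yb\xb^\top\ra]$ is linear with $\nabla_{\Wb} R = -\EE[\yb\xb^\top]$. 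Expanding $\yb\xb^\top$ via Definition~\ref{def:data} and using $\bxi \perp \bzeta \mid \zb$ gives $\EE[\yb\xb^\top] = \Hb_1\EE[\zb\zb^\top]\Gb_1^\top + (\text{cross terms of order } R)$, i.e.\ the regularizer's gradient is, up to a controllable error, aligned with the ``clean'' shared-feature coupling direction $\Hb_1\EE[\zb\zb^\top]\Gb_1^\top$ --- exactly the direction that widens the zero-shot margin.

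Next I would contrast this with the unregularized dynamics. By the mechanism behind Theorem~\ref{thm:margin without reg}, once the current margin $m^{(t)}$ reaches $\tilde{O}(\tau)$, the log-sum-exp tail forces $\|\nabla_{\Wb}L^{(t)}\| \lesssim \tau^{-1}B\exp(-m^{(t)}/\tau)$, which is $\poly(1/T)$-small; so along the unregularized trajectory $\Wb$ essentially stops moving and $m^{(t)}$ stays $\tilde{O}(\tau)$. With the regularizer the update is $\Wb^{(t+1)} = \Wb^{(t)} - \eta\nabla L^{(t)} + \eta\lambda\,\EE[\yb\xb^\top]$, and the last term is a \emph{persistent} step of size $\Theta(\lambda)$ in the margin-widening direction that does not decay with $m^{(t)}$. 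Decomposing $\Wb^{(t)}$ into a ``signal'' part (its action from $\mathrm{col}(\Gb_1)$ into $\mathrm{col}(\Hb_1)$, captured by an effective $K_1\times K_1$ coupling matrix $\mathbf{M}^{(t)}$) and a ``noise'' part (action on $\mathrm{col}(\Gb_2)$ and components orthogonal to $\mathrm{col}(\Hb_1)$), I would prove by induction on $t$ that: (i) $\mathbf{M}^{(t)} \approx \mathbf{M}^{(0)} + \eta\lambda t\cdot\Hb_1^\top\Hb_1\EE[\zb\zb^\top]$, up to the small $\nabla L$-correction and the $O(R)$ cross-term correction; (ii) the noise part stays bounded, because $\Wb^{(0)}$ already suppresses it, $\nabla L^{(t)}$ is small, and the cross terms in $\nabla R$ are only $O(R)$; and hence (iii) $m^{(t)}$ is nondecreasing past $\tilde{O}(\tau)$. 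Choosing $T = \tilde{\Theta}(1/(\eta\lambda)) = \poly(\eta^{-1},\epsilon,d_1,d_2)$, the signal scale reaches $\tilde{\Theta}(1)$ (matching the normalization $|f^{*}|\le1$ of $\Wb^{*}$), so for an image $\xb$ with shared feature $\vb_k$ and prompts $\yb_1,\dots,\yb_K$,
\[
\hat{f}(\xb,\yb_k) - \max_{k'\neq k}\hat{f}(\xb,\yb_{k'}) \;\ge\; \la\vb_k,\mathbf{M}^{(T)}\vb_k\ra - \max_{k'\neq k}\la\vb_{k'},\mathbf{M}^{(T)}\vb_k\ra - (\text{noise}) \;=\; \tilde{\Omega}(\gamma),
\]
using $\|\vb_k\|_2 = 1$ and $\max_{k\neq k'}\la\vb_k,\vb_{k'}\ra = 1-\gamma$. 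The probability $0.99$ is inherited directly from the random event in the special-case construction of Theorem~\ref{thm:margin without reg}, together with empirical-to-population concentration over the finitely many iterates.

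Finally I would pin down the admissible window for the two hyperparameters: $\tau$ must be small enough that $B\exp(-\gamma/\tau)$ and the unregularized stall-margin $\tilde{O}(\tau)$ are both $o(\gamma)$ (so that ``$\tilde{O}(\tau)\to\tilde{\Omega}(\gamma)$'' is a genuine improvement, and $\|\nabla L^{(t)}\| = o(\lambda)$ throughout the relevant phase); and $\lambda$ must be large enough that $T = \tilde{\Theta}(1/(\eta\lambda))$ is polynomial and the regularizer dominates the residual contrastive gradient, yet small enough that the accumulated noise $\eta\lambda T\cdot O(R)$ and any distortion of the alignment already established by the contrastive term stay $o(\gamma)$. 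I expect the main obstacle to be exactly this coupled trajectory control --- maintaining the induction that the signal grows linearly while the noise (fed both by $\nabla L^{(t)}$ and by the $O(R)$ cross terms in $\nabla R$) stays below the $\Theta(\gamma)$ signal level, and showing that the never-quite-zero contrastive gradient never cancels the regularizer's push --- together with the bookkeeping that the linear regularizer makes the regularized objective unbounded below, so the argument must rely on stopping after polynomially many steps rather than on convergence to a minimizer.
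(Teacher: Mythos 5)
Your route is genuinely different from the paper's, and it has a gap that I believe is fatal in the form you sketched.

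The paper does \emph{not} run a signal/noise trajectory analysis. In its proof of the formal version (under $\bzeta=\zero$) it silently switches from the inner-product regularizer $R(f)=-\EE[f(\xb,\yb)]$ to the squared-distance form $R(f)=\EE\|\gb(\xb)-\hb(\yb)\|_2^2$. That regularizer is bounded below, is zero at $f^*$, and keeps the regularized objective convex with a finite minimizer, so the Theorem~\ref{thm:case} descent argument applies verbatim and yields a near-minimizer in polynomially many steps. Then Lemma~\ref{lm:optimal bound} bounds the optimal value, the lower bound from \eqref{eq:one side}--\eqref{eq:one side2} peels off the contrastive part, and one is left with $\EE\|\gb(\xb)-\hb(\yb)\|_2^2\le\epsilon'/\lambda$. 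Chebyshev converts this to $\|\gb(\xb)-\hb(\yb)\|_2\le\gamma/(4\|\Hb\|_2)$ with probability $0.99$, and a triangle inequality against the separated prompt embeddings $\hb(\yb_k)=\Hb_1\vb_k$ directly gives margin $\gamma/2$. No trajectory control, no signal/noise bookkeeping.

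The concrete gap in your plan is the final implication ``signal scale $\tilde\Theta(1)$ $\Rightarrow$ margin $\tilde\Omega(\gamma)$.'' With your regularizer $-f$, $\nabla_{\Wb}R=-\EE[\yb\xb^\top]$, so (to leading order and with $\bzeta=\zero$) the accumulated signal is $\mathbf{M}^{(T)}\approx\mathbf{M}^{(0)}+c\,\Hb_1^\top\Hb_1\EE[\zb\zb^\top]$ with $c=\eta\lambda T$. But the margin contribution of that direction is
\begin{align*}
\la\vb_k-\vb_{k'},\,\Hb_1^\top\Hb_1\EE[\zb\zb^\top]\,\vb_k\ra,
\end{align*}
and already in the cleanest case ($\Hb_1^\top\Hb_1=\Ib$, uniform $p_k$, $K=2$, $\la\vb_1,\vb_2\ra=1-\gamma$) this equals $\gamma^2/2$, not $\Theta(\gamma)$. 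So your $c=\tilde\Theta(1)$ yields margin $\tilde\Theta(\tau)+\tilde\Theta(\gamma^2)$, a quadratic loss in $\gamma$ relative to the claim. To push this to $\tilde\Omega(\gamma)$ you would need $c=\tilde\Theta(1/\gamma)$, but then your own estimate of the noise accumulation, $\eta\lambda T\cdot O(R)$, grows by the same $1/\gamma$ factor and the induction you propose no longer controls it below the signal level without extra assumptions on $R$ or on $\EE[\bxi|\zb]$. The root cause is that $-\la\gb(\xb),\hb(\yb)\ra$ only rewards directional alignment and global scale, whereas what is actually needed is that $\gb(\xb)$ sit \emph{close to $\hb(\yb)$ as a vector}, so that the prompts' separation is inherited pointwise; the squared-distance regularizer enforces exactly this and avoids the $\gamma\to\gamma^2$ degradation. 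If you want to salvage the trajectory route you would either have to analyze the squared-distance regularizer (which then also restores a well-defined minimizer and makes the unboundedness caveat in your last paragraph moot), or prove that the contrastive term has already aligned $\mathbf{M}^{(0)}$ well enough that what remains to be supplied by the regularizer is a pure rescaling of an $\Ib$-like coupling rather than of $\EE[\zb\zb^\top]$ --- neither of which is in your sketch.
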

\todoq{add one sentence to comment on the margin is larger than the one witout regularzation.}
Recall in Theorem~\ref{thm:margin without reg}, where the vanilla model achieves margin of $\tilde{O}(\tau)$, the regularization term provably improves the margin to $\tilde{\Omega}(\gamma)$.
Lastly, our regularization term shares similar concept as SimSiam~\citep{chen2021exploring}, which only considers the positive pairs in the single modality setting.

\section{Experiments}
In this section, we present experiment results on real datasets to verify our theoretical findings. Accordingly, we examine our new CLIP-like training objective and showcase its improvement in performance on diverse zero-shot transfer and linear probing tasks.

\textbf{Datasets.} For performance evaluation, we primarily focus on Conceptual Captions 3M (CC3M) \citep{sharma2018conceptual} as the pretraining dataset, in alignment with prior literature~\citep{li2022supervision,goel2022cyclip}. Additionally, we use MSCOCO~\citep{chen2015microsoft} in order to conduct lightweight real data experiments to validate our theoretical findings. 

\textbf{Architectures.} We consider the same setting for experiments on all baseline CLIP-objectives. 
Following the original CLIP paper, we employ ResNet~\citep{he2016deep}
% and ViT~\citep{dosovitskiy2020image} architectures 
as the image encoder and the Transformer architecture~\citep{vaswani2017attention} as the text encoder. 
We utilize pre-trained weights for both encoders to achieve faster convergence. These include the pre-trained ResNet-50 from the PyTorch Image Models library~\citep{rw2019timm} and pre-trained DistilBERT from the Huggingface Transformers library~\citep{wolf-etal-2020-transformers}. 
% We additionaly report the zero-shot results for using TinyViT as the image encoder in Appendix~\ref{appendix:experiements}. 
% We demonstrate in following that, for training on smaller datasets such as CC3M, initializing the encoders with pre-trained weights not only enhances efficiency but also yields better performance on most of the tasks compared to the same method in prior work that trains from scratch~\citep{goel2022cyclip}.
% This supports that our experiment setting provides valuable insights into the effectiveness of our novel training objective. 
We note that, the setting of training from pre-trained weights is also considered in several previous literature~\citep{zhai2022lit,alayrac2022flamingo}.
Lastly, our experiments can be feasibly ran on a single GeForce RTX 2080 GPU. Detailed hyperparameters and additional experiments are presented in Appendix~\ref{appendix:experiements}.

\subsection{Effect of Temperature on Margin}\label{section:exp_temp}
% \todoq{change it to a wrapfigure}
\begin{wrapfigure}{r}{0.4\textwidth} 
    \centering
    \includegraphics[width=0.42\textwidth]{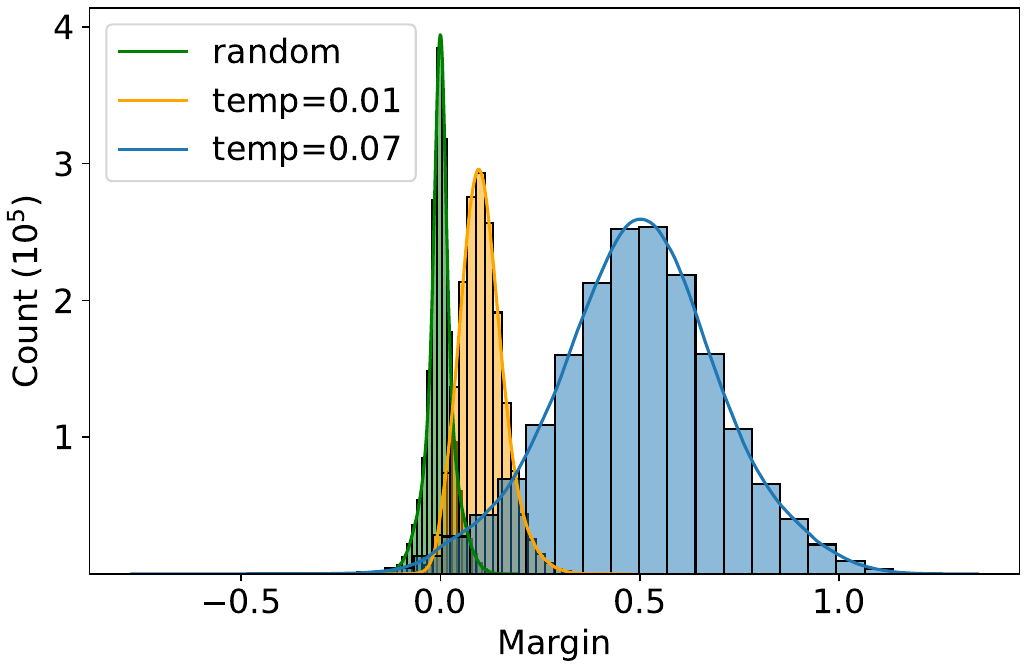}
    \caption{The distribution of the margins with regard to CLIP models trained at different temperature values. Margin is computed within each batch of the data.}
    \label{fig:margin_dist}
\end{wrapfigure}
In support of our theoretical discussions in Corollary~\ref{cor:margin} and Theorem~\ref{thm:margin without reg} that find the positive correlation between the margin and the temperature parameter, we conduct real data experiments to confirm the impact of temperature on the margin. In Figure~\ref{fig:margin_dist}, we examine the margin distribution of CLIP models trained at varying temperatures. \BB{Specifically, the margin is evaluated by the difference between a diagonal value and an off-diagonal value within a batch: $f(\xb_i,\yb_i)-f(\xb_j,\yb_i)$ and $f(\xb_i,\yb_i)-f(\xb_i,\yb_j)$ (see Appendix~\ref{Appendix:A} for details)}. We collect the results of untrained and trained CLIP models on all batches within the MSCOCO training dataset with batch size $64$. 

As depicted in Figure~\ref{fig:margin_dist}, a CLIP model with random initialization at the projection layers has margins normally distributed near zero, whereas trained models exhibit positive margins, signifying successful training. Furthermore, we consider CLIP models trained at fixed temperature values of $0.07$ (the default starting value for the original CLIP) and $0.01$ (the clipping value). As observed in the figure, the margin distribution shifts to the left as temperature $\tau$ decreases, suggesting that a extremely small $\tau$ leads to small margins, aligning with the results in Corollary~\ref{cor:margin}.

\subsection{Zero-shot Transfer}\label{section:exp_zero}
To confirm Theorem~\ref{thm:regpositive}, we investigate the advantages of incorporating our regularization term during training by evaluating zero-shot transfer accuracy and linear probing on various datasets. We consider the following training objectives when adding our regularization: (1) the original CLIP~\citep{radford2021learning}, and (2) CyCLIP~\citep{goel2022cyclip} with cross-modal and in-modal consistency regularizations, adopting the same hyperparameters for the regularizations as outlined in \citet{goel2022cyclip}. 
All models are trained on CC3M using the same model architecture, batch size, and optimizer settings. Further experimental details are provided in Appendix~\ref{appendix:experiements}. 

In Table~\ref{tab:zero shot}, we present the zero-shot test accuracy of CLIP models trained with the original CLIP objective and the CyCLIP objective. Firstly, we demonstrate the model's performance when training solely on the regularization term (L2) and compare to that of the CLIP objective. 
% Additionally, we include the reported CyCLIP results trained from scratch to verify that our approach of training with pre-trained weights is effective. 
In alignment with our Corollary \ref{cor:5-7}, we can observe on real data that training exclusively on the L2 objective leads to a large error and even random guessing on the zero-shot datasets. Combining with our theoretical analysis, we show that a naive square loss fails to learn transferable representations. In the context of multi-modal learning, contrastive loss is important. 
Moreover, confirming our result from Theorem~\ref{thm:regpositive}, incorporating the regularization term into the contrastive objective effectively enhances performance across the majority of zero-shot transfer tasks. It improves over the baseline on $5$ out of $6$ datasets by a good margin. The best performance achieved by adding regularization to the CLIP objective outperforms its original objective by $3.62 \%$ on CIFAR10 and by $2.06 \%$ on average of all datasets.

In Table~\ref{tab:linear probing}, we report the results of linear probing, where logistic regression classifiers are fitted to the embeddings learned by the image encoders of our compared models. This table offers an assessment of the visual representation learning for each training objective. Similarly supporting Corollary \ref{cor:5-7}, training on the regularization term only results in learning bad representations that yield unsatisfactory performances on linear probing. Moreover, in alignment with Theorem~\ref{thm:regpositive}, we observe that adding the regularization term consistently improves CLIP's performance across various datasets by an average of $1.54 \%$.

\begin{table}[ht]
    \centering
    \caption{Zero-shot top-$1$ accuracy  ($\%$). Notably, adding the regularization term successfully improves the baselines on $5$ out of the $6$ datasets.}
    \resizebox{0.9\textwidth}{!}{%
    \begin{tabular}{c c c c c c c c c c c}
    \addlinespace[3pt]
    \toprule
        & CIFAR10 & CIFAR100 & STL10 & Food101 & ImageNetV2 & DTD & Average\\
    \midrule
        Reg & $10.04$ & $1.05$ & $9.95$ & $1.08$ & $0.11$ & $2.07$ & $3.47$ \\
    % \midrule
        CLIP & $63.85$ & $31.17$ & $90.35$ & $8.39$ & $20.24$ & $\textbf{21.22}$ &  $39.20$\\
        CyCLIP & $60.71$ & $28.87$ & $89.98$ & $9.72$ & $19.66$ & $20.21$ & $38.19$\\
        CLIP+Reg & $\textbf{67.47}$ & $\textbf{33.33}$ & $\textbf{92.64}$ & $\textbf{12.14}$ &  $\textbf{22.36}$ & $19.63$ &   $\textbf{41.26}$\\
    % \midrule
    % \midrule
        % \rowcolor{gray!30}(Reported) & $51.45$ & $23.15$ & $86.10$ & $14.00$ & $19.22$ & $19.20$ & $10.90$ & $2.30$ \\
        % CyCLIP+Reg & $58.73$ & $30.15$ & $90.79$ & $10.87$ & $19.68$ & $21.49$ & $29.46$\\
    \bottomrule
    \end{tabular}%
    }
    \label{tab:zero shot}
\end{table}

\begin{table}[ht]
    \centering
    \caption{Linear probing accuracy ($\%$). All logistic regression models are trained till convergence. Adding our regularization term to CLIP provides decent improvements across all datasets. On CyCLIP, we also makes improvements on the majority of datasets.}
    \resizebox{0.95\textwidth}{!}{%
    \begin{tabular}{c c c c c c c c c c}
    \addlinespace[3pt]
    \toprule
        & CIFAR10 & CIFAR100 & STL10 & Food101 & DTD & Flowers & OxfordPets & Average \\
    \midrule
        Reg & $14.09$ & $2.17$ & $17.86$ & $1.73$ & $3.40$ & $2.18$ & $4.12$ & $6.51$\\
        CLIP & $87.30$ & $66.03$ & $93.26$ & $62.8$ & $56.70$ & $70.24$ & $72.91$  & $72.75$ \\
        CyCLIP & $86.31$ & $63.93$ & $93.69$ & $61.57$ & $56.86$ & $70.56$ & $70.46$ & $71.91$\\
        CLIP+Reg & $\textbf{88.49}$ & $\textbf{66.16}$ & $\textbf{94.98}$ & $\textbf{63.39}$ & $\textbf{57.66}$ & $\textbf{72.21}$ & $\textbf{77.13}$ & $\textbf{74.29}$ \\
    % \midrule
    % \midrule
        % \rowcolor{gray!30}(Reported) & $76.98$ & $55.74$ & $90.10$ & $54.96$ & $63.44$ & $82.96$ & $56.82$\\
        % +Reg & $85.08$ & $62.60$ & $\textbf{93.83}$ & $\textbf{62.40}$ & $54.95$ & $\textbf{72.32}$ & $\textbf{77.24}$ & $\textbf{72.63}$\\
    \bottomrule
    \end{tabular}%
    }
    \label{tab:linear probing}
\end{table}

\section{Conclusion}
In this paper, we rigorously investigated the theoretical underpinnings of transferable representation learning in CLIP, addressing the challenges associated with feature domain alignment and shared feature sparsity. We provided insights through detailed examination of specific cases and corroborated our theory with empirical evidence. Lastly, we proposed a regularization term grounded in our theoretical findings to enhance CLIP's performance in various downstream tasks, including zero-shot transfer and linear probing. Combining rigorous theoretical analysis with empirical validation, we contribute to the advancement of understanding in multi-modal contrastive learning.

\noindent\textbf{Limitations and future work.} We emphasize that our primary contribution lies in providing theoretical insights into transferable representation learning in CLIP, which assumes a one-to-one mapping between image-text pairs. Interesting future works include extending the analysis to more modalities and exploring other multimodal training algorithms. Another limitation of our work is the limited computational resources, where we used relatively smaller training data than the large-scale web data used by CLIP and are also restricted to smaller training batch sizes as compared to industry standards. 

% \newpage
\section*{Acknowledgement}
We thank the anonymous reviewers and area chair for their helpful comments. ZC, YD and QG are supported in part by the National Science Foundation CAREER Award 1906169, IIS-2008981, CHE-2247426 and the Sloan Research Fellowship. The views and conclusions contained in this paper are those of the authors and should not be interpreted as representing any funding agencies.
\bibliography{deeplearningreference}
\bibliographystyle{iclr2024_conference}

\newpage
\appendix
% \tableofcontents
% \section{Appendix}
\newpage 
\section{Discussion on the Margin in CLIP}\label{Appendix:A}

\BB{``Margin'' plays an important role in unimodal contrastive learning \citep{wang2021understanding}, which measures the desired similarity difference between positive and negative pairs in the learned feature space: $f(\xb,\xb^{+})-f(\xb,\xb^{-})$. This metric ensures that the similarity of positive pair representations exceeds a specific threshold, while preserving a greater distance for the negative pairs. In practice, a large margin encourages the model to learn meaningful and discriminative data representations, thereby achieving better results in the downstream task \citep{chen2021large}.  }

In exploring the CLIP model, we focus on the concept of margin from a multi-modal perspective. For two independent tuple $(\xb, \yb, \zb)\sim \cD_{\xb \times \yb \times \zb}$ and $(\xb', \yb', \zb')\sim \cD_{\xb \times \yb \times \zb}$, we formally introduce a measure as follows 
\begin{align}
\alpha_{\gamma} = \mathbb{P}\Big(\zb \not= \zb',f(\xb,\yb) - f(\xb,\yb') \leq \gamma\Big) + \mathbb{P}\Big(\zb \not= \zb', f(\xb,\yb) - f(\xb',\yb) \leq \gamma\Big) \label{eq:exact}
\end{align}
where $\gamma$ denotes the margin, and $\alpha_{\gamma}$ is failure probability of failing to achieve this margin. We note that when $\zb = \zb'$, $\xb, \yb, \xb', \yb'$ will form positive pairs, thus excluded in equation \eqref{eq:exact}. Unfortunately, we can access $\cD_{\xb \times \yb}$ in real applications but have limited knowledge of the latent variable $\zb$. This limitation complicates the identification of all positive pairs within a batch of data. 

\subsection{Margin and Visual-Semantic Alignment}
When $\gb$ and $\hb$ are normalized representations with unit $L_2$ norm and we use inner product similarity $f(\xb) = \la \gb(\xb), \hb(\yb) \ra$. The formula $f(\xb,\yb) - f(\xb,\yb')$ can be expressed as 

\begin{align}
f(\xb,\yb) - f(\xb,\yb') &=  \frac{1}{2}\big[2 - \|\gb(\xb) - \hb(\yb)\|_{2}^{2}\big] - \frac{1}{2}\big[2 - \|\gb(\xb) - \hb(\yb')\|_{2}^{2}\big] \notag\\
&=   \frac{1}{2}\big[\underbrace{\|\gb(\xb) - \hb(\yb')\|_{2}^{2}}_{\mathrm{Negative-pair \ Distance}} - \underbrace{\|\gb(\xb) - \hb(\yb)\|_{2}^{2}}_{\mathrm{ Positive-pair \ Distance}} \big], \label{eq:quitesimple}
\end{align}
\BB{where the second equality uses the property of unit $L_2$ norm. By \eqref{eq:quitesimple}, we can see that a larger margin value implies that the embeddings $\gb$ and $\hb$ of the positive pairs remain in closer proximity, while the embeddings of negative pairs are far away from each other. This is a crucial aspect of contrastive learning, especially when considering the CLIP model. }

\BB{In unimodal contrastive learning, $\yb = \xb^{+}$ typically follows the same distribution of $\xb$, and $\hb$ is chosen to be identical to $\gb$. Consequently, the embedding difference $\gb(\xb) - \hb(\yb)$ will generally exhibits a zero mean. In this scenario, the variance of the embedding, rather than its mean, becomes the dominant term for positive-pair distance in \eqref{eq:quitesimple}. However, this is not the case for the CLIP model since $\xb, \yb$ belong to different modalities, and thus $\hb$ is no longer chosen to be identical to $\gb$. }
 
\BB{Moreover, identifying negative pairs in a batch for image-text data is challenging. To \textbf{empirically} mitigate the issue, \citet{yang2022unified} proposed UniCL for multi-modal contrastive learning. Unlike vanilla CLIP, UniCL additionally consider image-label data and group these data with identical classes, which facilitates negative pair identification within the dataset. However, this strategy necessitates additional group information about the dataset, being either class label or concept. Our paper aims to \textbf{theoretically} tackle the identification problem by integrating this grouping mismatch into our analysis. We recognize the significance of empirically addressing this issue like \citet{yang2022unified}, but it goes beyond the scope of current work.   }

\BB{A larger margin of $f$ indicates an improved visual-semantic alignment. Thus, we favor a function $f$ that achieves a larger margin $\gamma$ with a smaller $\alpha_{\gamma}$. Under Assumption~\ref{assm: zero}, we define the $(\alpha, \beta, \gamma)$ completeness, ensuring the existence of such a function. To find a function with a larger margin more effectively, we introduce a new regularizer in Section~\ref{sec:regularization}, specifically tailored for the CLIP model. This regularization approach does not require identifying negative pairs and is particularly suitable for CLIP, as it only penalizes the positive-pair distance $\|\gb(\xb) - \hb(\yb)\|_{2}^{2}$}

\BB{\citet{chen2021large} proposes a novel large-margin contrastive learning (LMCL) method in unimodal contrastive learning, regularizing both positive and negative pair distances. In our study, we choose to regularize only the positive pair distance, acknowledging the unique characteristics of the CLIP model: different embedding functions $\gb, \hb$ for images and texts and the difficulty in identifying negative pairs. We also conducted an ablation study for only regularizing the off-diagonal term in the batch. We find that off-diagonal pair regularization yields marginal improvements in downstream zero-shot tasks and lacks stability compared to the regularizer proposed in Section~\ref{sec:regularization} (detailed in Section~\ref{subsec:neg}).}

\subsection{Estimation of the Margin}
In this subsection, we will discuss how to verify the  Assumption~\ref{assm: zero} and measure the quality of the learned function with margin. We introduce an approximate measure $\hat{\alpha}_\gamma$ as follows,
\begin{align}
\hat{\alpha}_{\gamma} = \mathbb{P}\Big(f(\xb,\yb) - f(\xb,\yb') \leq \gamma\Big) + \mathbb{P}\Big(f(\xb,\yb) - f(\xb',\yb) \leq \gamma\Big) \label{eq:approxmargin}   
\end{align}
$\hat{\alpha}_{\gamma}$ differs from the $\alpha_{\gamma}$ since we didn't extinguish different classes in the probability. Therefore we can easily calculate $\hat{\alpha}_{\gamma}$ without observe $\zb$. In practice, \eqref{eq:approxmargin} can be evaluated by the difference between a diagonal value and an off-diagonal value within a batch: $f(\xb_i,\yb_i)-f(\xb_j,\yb_i)$ and $f(\xb_i,\yb_i)-f(\xb_i,\yb_j)$ (as illustrated in  Figure~\ref{fig:margin_dist}).

Moreover, we have the following upper and low bounds, which show that $\hat{\alpha}_{\gamma}$ can approximate $\alpha_{\gamma}$.
\begin{theorem}\label{thm:approx}
Let $\gamma \geq 0$, then we have that 
\begin{align*}
\hat{\alpha}_{\gamma} \geq \alpha_{\gamma} \geq \hat{\alpha}_{\gamma}- \sum_{k
\in [K]}p_k^{2}.
\end{align*}
where $p_k$ is the probability of the classes in Assumption~\ref{assm:independent}. Besides, the second inequality becomes exact equality for $\gamma = 0$.
\end{theorem}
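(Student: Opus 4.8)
\textbf{Proof proposal for Theorem~\ref{thm:approx}.}

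The plan is to expose $\alpha_\gamma$ and $\hat\alpha_\gamma$ as two ways of summing the same pointwise events, differing only by whether we restrict to the sub-event $\{\zb \neq \zb'\}$. Recall from \eqref{eq:exact} and \eqref{eq:approxmargin} that both quantities are sums of two probabilities of the symmetric form $\mathbb{P}(\text{(constraint on }\zb,\zb')\text{, }f(\xb,\yb)-f(\xb,\yb')\le\gamma)$ and the analogous one with the second coordinate perturbed. I will treat the first summand; the second is identical by symmetry of the roles of $\xb$ and $\yb$. Write $A = \{f(\xb,\yb) - f(\xb,\yb') \le \gamma\}$ for the margin-failure event on the fixed sample $(\xb,\yb,\zb),(\xb',\yb',\zb')$, and partition the probability space by whether $\zb = \zb'$ or not:
\begin{align*}
\mathbb{P}(A) = \mathbb{P}(A, \zb \neq \zb') + \mathbb{P}(A, \zb = \zb').
\end{align*}
The first term on the right is exactly the first summand of $\alpha_\gamma$, and the left-hand side is the first summand of $\hat\alpha_\gamma$. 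Hence the upper bound $\hat\alpha_\gamma \ge \alpha_\gamma$ is immediate (dropping the nonnegative term $\mathbb{P}(A,\zb=\zb')$), and the lower bound reduces to controlling $\mathbb{P}(A, \zb = \zb') \le \mathbb{P}(\zb = \zb')$ for each of the two summands.

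The remaining step is to evaluate $\mathbb{P}(\zb = \zb')$ for two independent draws. Since $(\xb,\yb,\zb)$ and $(\xb',\yb',\zb')$ are drawn independently from $\cD_{\xb\times\yb\times\zb}$ and $\zb,\zb'$ each take value $\vb_k$ with probability $p_k$ (Assumption~\ref{assm:independent}), we get $\mathbb{P}(\zb=\zb') = \sum_{k\in[K]} p_k^2$. Therefore each of the two summands loses at most $\sum_{k\in[K]} p_k^2$ when passing from $\hat\alpha_\gamma$ to $\alpha_\gamma$; but note the stated bound only subtracts $\sum_k p_k^2$ \emph{once}, not twice, so I must be a little more careful than the crude two-summand estimate. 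The fix is to combine the two events before bounding: both $\{f(\xb,\yb)-f(\xb,\yb')\le\gamma\}$ and $\{f(\xb,\yb)-f(\xb',\yb)\le\gamma\}$ are being intersected with the \emph{same} event $\{\zb=\zb'\}$, and
\begin{align*}
\big[\hat\alpha_\gamma - \alpha_\gamma\big] = \mathbb{P}(A_1,\zb=\zb') + \mathbb{P}(A_2,\zb=\zb'),
\end{align*}
where $A_1, A_2$ are the two margin-failure events. This is at most $2\sum_k p_k^2$ in general, which is weaker than claimed — so I expect the intended argument uses the observation that when $\zb = \zb'$ the pairs $(\xb,\yb')$ and $(\xb',\yb)$ are \emph{themselves} distributed as genuine positive pairs (by conditional independence $\xb\perp\yb\mid\zb$, exactly the remark made in Section~\ref{sec: main}), so that $A_1$ and $A_2$ restricted to $\{\zb=\zb'\}$ are each symmetric relabelings of the event $\{f(\xb,\yb)-f(\tilde\xb,\tilde\yb)\le\gamma\}$ for i.i.d.\ positive samples; for $\gamma \ge 0$ one of the two directions is then automatically absorbed, or the two contributions must be shown to overlap. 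The main obstacle is precisely this factor-of-two bookkeeping: making the lower bound $\hat\alpha_\gamma - \sum_k p_k^2$ tight rather than $\hat\alpha_\gamma - 2\sum_k p_k^2$.

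For the equality claim at $\gamma = 0$: when $\gamma = 0$, conditioned on $\zb = \zb'$ the tuple $(\xb,\yb,\xb',\yb')$ has a distribution that is exchangeable in a way that makes $f(\xb,\yb) = f(\xb,\yb')$ and $f(\xb,\yb)=f(\xb',\yb)$ fail with the full conditional probability — more precisely, on $\{\zb=\zb'\}$ the samples $(\xb,\yb')$ and $(\xb,\yb)$ are identically distributed, so $\mathbb{P}(f(\xb,\yb)-f(\xb,\yb')\le 0 \mid \zb=\zb') = 1/2$ by symmetry if ties have probability zero, or the tie event carries the remaining mass; summing the two directions recovers exactly $\mathbb{P}(\zb=\zb') = \sum_k p_k^2$, forcing $\hat\alpha_0 - \alpha_0 = \sum_k p_k^2$. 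I would write this out by conditioning on $\zb=\zb'=\vb_k$, using $\xb\perp\yb\mid\zb$ to identify the conditional laws, and summing over $k$. The delicate point here, as above, is handling the two margin directions without double-counting, which I would resolve by noting that on $\{\zb=\zb'\}$ the event $\{f(\xb,\yb)\le f(\xb,\yb')\} \cup \{f(\xb,\yb)\le f(\xb',\yb)\}$ has conditional probability exactly $1$ (one of the two pairs must score at least as high as the ``diagonal'' pair, since all three are exchangeable positive pairs), which pins down the sum of the two terms exactly.
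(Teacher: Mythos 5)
Your decomposition matches the paper's exactly: split $\hat\alpha_\gamma$ by whether $\zb=\zb'$, so that the $\{\zb\neq\zb'\}$ part is $\alpha_\gamma$ and the $\{\zb=\zb'\}$ part is the ``approximate error.'' This gives $\hat\alpha_\gamma\geq\alpha_\gamma$ for free, and the whole question is whether the approximate error is at most $\sum_k p_k^2$. You correctly flag the factor-of-two problem, and you should know that the paper's own proof stumbles at exactly this point. The paper reduces $\mathbb{P}\big(f(\xb,\yb)-f(\xb,\yb')\leq\gamma\mid\zb=\zb'\big)$ to $\mathbb{P}\big(f(\xb,\yb)-f(\xb,\yb')\leq 0\mid\zb=\zb'\big)$ and asserts that the first is $\leq$ the second ``because $\gamma\geq0$.'' That inequality goes the wrong way: for $\gamma\geq0$ the event $\{X\leq\gamma\}$ \emph{contains} $\{X\leq0\}$, so the probability can only increase. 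Since exchangeability of $\yb,\yb'$ given $\zb=\zb'$ makes the $\gamma=0$ conditional probability $\geq1/2$ (equal to $1/2$ under a no-ties assumption), for $\gamma\geq0$ each of the two conditional probabilities is $\geq1/2$, hence the approximate error is $\geq\sum_k p_k^2$, i.e.\ $\alpha_\gamma\leq\hat\alpha_\gamma-\sum_k p_k^2$ rather than $\geq$. Your instinct that the stated bound needs more than the crude two-term estimate is correct, but the tighter estimate actually runs in the opposite direction.

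Your proposed repair via the union does not close the gap either. First, on $\{\zb=\zb'\}$ the event $\{f(\xb,\yb)\leq f(\xb,\yb')\}\cup\{f(\xb,\yb)\leq f(\xb',\yb)\}$ does \emph{not} have conditional probability one: nothing prevents $f(\xb,\yb)$ from strictly exceeding both $f(\xb,\yb')$ and $f(\xb',\yb)$, and exchangeability only equalizes the marginal laws, not the joint law, of the three scores. Second, even if the union did have probability one, inclusion--exclusion gives $\mathbb{P}(A_1)+\mathbb{P}(A_2)=\mathbb{P}(A_1\cup A_2)+\mathbb{P}(A_1\cap A_2)\geq 1$, which again forces the approximate error $\geq\sum_k p_k^2$, the \emph{opposite} of what the theorem needs. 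Your symmetry argument for exact equality at $\gamma=0$ is sound (modulo the tacit no-ties assumption), but the two-sided statement one actually gets from this decomposition for $\gamma\geq0$ is $\hat\alpha_\gamma-2\sum_k p_k^2\leq\alpha_\gamma\leq\hat\alpha_\gamma-\sum_k p_k^2$; the lower bound $\alpha_\gamma\geq\hat\alpha_\gamma-\sum_k p_k^2$ cannot be obtained this way, and appears to fail for $\gamma>0$.
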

\begin{proof}
\begin{align*}
\hat{\alpha}_{\gamma} &= \mathbb{P}\Big(f(\xb,\yb) - f(\xb,\yb') \leq \gamma\Big) + \mathbb{P}\Big(f(\xb,\yb) - f(\xb',\yb) \leq \gamma\Big) \\     
&= \underbrace{\mathbb{P}\Big(\zb \not= \zb', f(\xb,\yb) - f(\xb,\yb') \leq \gamma\Big) + \mathbb{P}\Big(\zb \not= \zb ', f(\xb,\yb) - f(\xb',\yb) \leq \gamma\Big)}_{=\alpha_{\gamma}}\\
&\qquad+ \underbrace{\mathbb{P}\Big(\zb = \zb', f(\xb,\yb) - f(\xb,\yb') \leq \gamma\Big) + \mathbb{P}\Big(\zb = \zb ', f(\xb,\yb) - f(\xb',\yb) \leq \gamma\Big)}_{\mathrm{Approximate \ Error}}.
\end{align*}
The Approximate Error has a naive lower bound  of $0$, and we can upper bound it as follows
\begin{align*}
\mathbb{P}\Big(\zb = \zb',f(\xb,\yb) - f(\xb,\yb') \leq \gamma\Big) &= \mathbb{P}\Big(f(\xb,\yb) - f(\xb,\yb') \leq \gamma |\zb = \zb'\Big)\cdot \mathbb{P}(\zb = \zb')\\
&\leq \mathbb{P}\Big(f(\xb,\yb) - f(\xb,\yb') \leq 0 |\zb = \zb'\Big)\cdot \mathbb{P}(\zb = \zb')\\
&= 1/2\sum_{k
\in [K]}p_k^{2}.
\end{align*}
were the the inequality is due to fact that $\gamma \geq 0$ and the last equality is because $\yb'$ and $\yb$ are symmetric give $\zb = \zb'$. Finally, the inequality is an exact equality for $\gamma = 0$.
\end{proof}
By Theorem~\ref{thm:approx}, $\alpha_{\gamma}$ and $\hat{\alpha}_{\gamma}$ are close to each other if $\max_{k \in [K]}p_k$ is small, since 
\begin{align*}
\sum_{k\in [K]}p_k^2 \leq \sum_{k\in [K]}p_k\cdot \max_{k \in [K]}p_k = \max_{k \in [K]}p_k \cdot \Big(\sum_{k\in [K]}p_k\Big) = \max_{k \in [K]}p_k .
\end{align*}
\noindent\textbf{Relation with the  Figure~\ref{fig:margin_dist_reg}:} $\hat{\alpha}_{\gamma}$ has a strong relationship with Figure~\ref{fig:margin_dist_reg}, where we have plot the distribution of $f(\xb,\yb) - f(\xb,\yb')$ and $f(\xb,\yb) - f(\xb',\yb)$. The figure can be viewed as the figure of the probability density function, and $\hat{\alpha}_{\gamma}$ can be viewed as the cumulative probability function, which is the integral of probability mass smaller than $\gamma$. From Figure~\ref{fig:margin_dist_reg}, we can deduce that the CLIP learned with regularization has consistently smaller $\hat{\alpha}_{\gamma}$ for all $\gamma \geq 0$.

\section{Discussion on the Trainable Temperature Parameter $\tau$}
This section considers the setting where the temperature $\tau$ is also trainable with the following loss.

\begin{align*}
L_{\cD^{B}}(f, \tau) 
&=\EE\bigg[\log\bigg(\sum_{t \in [B]}\exp\big(\big[f(\xb_{1}, \yb_{t}) - f(\xb_{1}, \yb_{1})\big]/\tau\big)\bigg)\bigg]\notag\\
&\qquad+  \EE\bigg[\log\bigg(\sum_{t \in [B]}\exp\big(\big[f(\xb_{t}, \yb_{1}) - f(\xb_{1}, \yb_{1})\big]/\tau\big)\bigg)\bigg].    
\end{align*}
Suppose $\tau$ is clipped to be within the range $[\tau_{\min}, \tau_{\max}]$, it is natural to assume that we can obtain function $\hat{f}$ with temperature $\hat{\tau} \in[\tau_{\min}, \tau_{\max}] $ such that 
\begin{align}
L_{\cD^{B}}(\hat{f}, \hat{\tau}) &\leq \min_{\tau \in [\tau_{\min}, \tau_{\max}]}L_{\cD^{B}}(f^{*}, \tau) + \epsilon \\
&= L_{\cD^{B}}(f^{*}, \hat{\tau}) + \epsilon - \Big( L_{\cD^{B}}(f^{*}, \hat{\tau}) - \min_{\tau \in [\tau_{\min}, \tau_{\max}]}L_{\cD^{B}}(f^{*}, \tau) \Big)\\
&= L_{\cD^{B}}(f^{*}, \hat{\tau}) + \tilde{\epsilon}  
\end{align}
where $\tilde{\epsilon}  = \epsilon - \Big( L_{\cD^{B}}(f^{*}, \hat{\tau}) - \min_{\tau \in [\tau_{\min}, \tau_{\max}]}L_{\cD^{B}}(f^{*}, \tau) \Big) \leq \epsilon$. Since $\tilde{\epsilon}$ is smaller than $\epsilon$, we can get smaller $\epsilon'$ in Theorem~\ref{thm:main}, and thus get smaller top-r error in zero-shot transfer task by Corollary~\ref{thm:main2}. This observation implies that the representation $(\hat{f}, \hat{\tau})$ found by trainable temperature can be better than the representation $(\hat{f}', \hat{\tau})$ found with fixed temperature $\hat{\tau}$.

% By Lemma~\ref{lm:optimal bound} we have that 
% \begin{align*} 
% \min_{\tau} L_{\cD^{B}}(f^{*}, \tau) &\leq  2\EE\bigg[\log\bigg(\sum_{t \in [B]}\ind(\zb_{t} = \zb_{1})\bigg)\bigg] + \min_{\tau} \bigg\{6MB\alpha/\tau + 3\sqrt[3]{6MB\beta}/\tau + 2B\exp(-\gamma/\tau) \bigg\}\\
% &= \EE\bigg[\log\bigg(\sum_{t \in [B]}\ind(\zb_{t} = \zb_{1})\bigg)\bigg] + \epsilon',
% \end{align*}    
% where $\epsilon' = \gamma^{-1}\cdot \bigg(6MB\alpha + 3 \sqrt[3]{6MB\beta}\bigg)\cdot \log\Big(2eB\gamma/\big(6 MB\alpha + 3 \sqrt[3]{6MB\beta}\big)\Big)$.

% Suppose that we are training with $\tau \in [\tau_{\min}, \infty)$. Let $f^{*}$ be the local optimal function with $\alpha^{*}, \tau_{\min}^{-1}$, i.e., no function can achieve better $\alpha^{*}$ with the same margin $\tau_{\min}^{-1}$. By \eqref{maineq:1} and \eqref{maineq:2} we have that, 
% \begin{align}
%  &\EE\bigg[\ind(\cE)\log\bigg(\sum_{t  \geq 2}\exp\big(\big[\hat{f}(\xb_{t}, \yb_{1}) - \hat{f}(\xb_{1}, \yb_{1})\big]/\hat{\tau}\big) + 1\bigg)\bigg] \leq \epsilon' \\ 
%   &\EE\bigg[\ind(\cE)\log\bigg(\sum_{t \geq 2}\exp\big(\big[\hat{f}(\xb_{1}, \yb_{t}) - \hat{f}(\xb_{1}, \yb_{1})\big]/\hat{\tau}\big) + 1\bigg)\bigg] \leq \epsilon'. 
% \end{align}
% Consider the failure event where $f^{*}$ fail to achieve $\tau_{\min}$ margin, then we have that 
% \begin{align*}
% \Omega(\alpha^{*})\log(1 + \exp(-\tau_{\min}/\hat{\tau})) \leq \epsilon'.    
% \end{align*}

% Generally, $\tau$ will be re parameterized as $\exp(-\rho)$ and optimize over the $\rho$ space. Thus, it is natural to consider the following loss. 

\section{Additional Experiment Results}\label{appendix:experiements}
We consider the same model architecture as CLIP~\citep{radford2021learning} and consider ResNet-50~\citep{he2016deep} as the image encoder and transformer~\citep{vaswani2017attention} architecture as the text encoder. Specifically, we use pre-trained weights for the encoders for faster convergence in training. We follow the code framework in \cite{Shariatnia_Simple_CLIP_2021} and use pre-trained ResNet-50 from the PyTorch Image Models library~\citep{rw2019timm} and pre-trained DistilBERT from the Huggingface Transformers library~\citep{wolf-etal-2020-transformers}. We further have linear projection layers on both image and text encoders, the same as in CLIP, and consider the embedding dimension to be $512$.
As we are training at small-scale data with pre-trained encoders, we follow \cite{Shariatnia_Simple_CLIP_2021} and use AdamW optimizer with learning rate 1e-4 on the image encoder, 1e-5 on the text encoder, and 1e-3 on the projection layers, with weight decay coefficient 1e-3. Our code is provided anonymously on Github\footnote{\href{https://anonymous.4open.science/r/CLIP_theory-BC8F/README.md}{https://anonymous.4open.science/r/CLIP\_theory-BC8F/README.md}}.

\subsection{\BB{Image-Text Retrieval}}
\BB{We additionally consider the image-to-text and text-to-image retrieval downstream tasks in the zero-shot setting. Following the setting outlined by \citet{goel2022cyclip}, we use Flickr30K~\citep{plummer2015flickr30k} and MSCOCO~\citep{chen2015microsoft} datasets, which are well-established benchmarks for image-text retrieval tasks. We similarly focus on the test data from the Karpathy~\citep{karpathy2015deep} split, with Flickr30K comprising $1$k test instances and MSCOCO containing $5$k. Consistent with the findings of \citet{goel2022cyclip}, we observe that text retrieval for a given image tends to be less challenging than image retrieval for a given caption. This is due to the nature of both datasets, where each image is associated with $5$ captions. Our results, as detailed in Table~\ref{tab:retrieval flickr} and Table~\ref{tab:retrieval mscoco}, align with this trend. Notably, while CyCLIP does not consistently outperform CLIP, adding our regularization term consistently enhances the performance of both the CLIP and CyCLIP.} 
\begin{table}[ht]
    \centering
    \caption{\BB{Zero-shot image-to-text and text-to-image retrieval results on Flickr30K test set for CLIP with different regularization techniques (CyCLIP, our regularization, or both).}}
    \resizebox{0.95\textwidth}{!}{%
    \begin{tabular}{c c c c c c c c c c c}
    \addlinespace[3pt]
    \toprule
        & Text R@1 & Text R@5 & Text R@10 & Image R@1 & Image R@5 & Image R@10 & Average\\
    \midrule
        CLIP & 87.36 & 93.0 & 95.18 & 26.88 & 54.18 & 66.22 & 70.47\\
        CLIP+Reg &  \textbf{87.42} & \textbf{93.42} & \textbf{95.82} & \textbf{29.94} & \textbf{58.00} & \textbf{69.82} & \textbf{72.40} \\
        CyCLIP &  87.34 & 93.12 & 95.04 & 29.00 & 56.50 & 67.62 & 71.44 \\
        CyCLIP+Reg & 87.20 & 93.20 & 95.56 & 29.14 & 56.94 & 68.64 & 71.78 \\
    \bottomrule
    \end{tabular}%
    }
    \label{tab:retrieval flickr}
\end{table}

\begin{table}[ht]
    \centering
    \caption{\BB{Zero-shot image-to-text and text-to-image retrieval results on MSCOCO test set for CLIP with different regularization techniques (CyCLIP, our regularization, or both).}}
    \resizebox{0.95\textwidth}{!}{%
    \begin{tabular}{c c c c c c c c c c c}
    \addlinespace[3pt]
    \toprule
        & Text R@1 & Text R@5 & Text R@10 & Image R@1 & Image R@5 & Image R@10 & Average\\
    \midrule
        CLIP & 81.19 & 83.21 & 84.42 & 4.73 & 11.66 & 15.93 & 46.86 \\
        CLIP+Reg & \textbf{81.25} & \textbf{83.31} & 84.49 & 4.98 & 12.14 & 16.66 & 47.14 \\
        CyCLIP & 81.06 & 82.92 & 84.28 & 4.70 & 11.66 & 15.93 & 46.86 \\
        CyCLIP+Reg & 81.31 & 83.28 & \textbf{84.65} & \textbf{5.27} & \textbf{12.17} & \textbf{16.70} & \textbf{47.23} \\
    \bottomrule
    \end{tabular}%
    }
    \label{tab:retrieval mscoco}
\end{table}

\subsection{\BB{Discussion on the ``Negative'' Pairs}}\label{subsec:neg}
\BB{As previously discussed in Figure~\ref{fig:illustration} and Section~\ref{sec:regularization}, the use of unlabeled image-text data in CLIP pre-training may lead to batches containing off-diagonal pairs that are not genuinely negative. In contrast, in the unimodal setting~\citep{chen2021large}, accurately identifying truly negative pairs is more straightforward due to the availability of class labels. However, treating all off-diagonal pairs as negatives in the CLIP framework may not be ideal. We investigate taking off-diagonal pairs within a batch as ``negative'' pairs and sum them into a regularization term. Again, during the training, we consider sample a batch of image-captions pairs $S' = \{\xb_{i}, \yb_{i}\}_{i=1}^{B} \subseteq S$. The regularization term for the negative pairs is thus} 
\BB{
\begin{align*}
    R(f) = \lambda\cdot\sum_{i \in S'}\sum_{j \in S',j\ne i}f(\xb_{i}, \yb_{j}), 
\end{align*}
}
\BB{where $\lambda>0$ is the regularization parameter. In experiments, we let $\lambda=0.1/(B^2-B)$ and all the other settings remain the same as our previous experiments. In Table~\ref{tab:zero shot neg}, our results show that while positive pair regularization markedly improves performance, off-diagonal pair regularization yields only marginal enhancements on some datasets and no improvement on others. This unstable performance may be attributed to the presence of positive pairs among the off-diagonal elements in the unlabeled image-text data.}
\begin{table}[ht]
    \centering
    \caption{\BB{Zero-shot top-$1$ accuracy  ($\%$) with regularization on positive image-text pairs and ``negative'' pairs.}}
    \resizebox{0.9\textwidth}{!}{%
    \begin{tabular}{c c c c c c c c c c c}
    \addlinespace[3pt]
    \toprule
        & CIFAR10 & CIFAR100 & STL10 & Food101 & ImageNetV2 & DTD & Average\\
    \midrule
        CLIP & $63.85$ & $31.17$ & $90.35$ & $8.39$ & $20.24$ & $\textbf{21.22}$ &  $39.20$\\
        CLIP+Pos & $\textbf{67.47}$ & $\textbf{33.33}$ & $\textbf{92.64}$ & $\textbf{12.14}$ &  $\textbf{22.36}$ & $19.63$ &   $\textbf{41.26}$\\
        CLIP+Neg & $64.36$ & $31.01$ & $91.25$ & $9.59$ & $20.17$ & $20.74$ &  $39.52$\\
    \bottomrule
    \end{tabular}%
    }
    \label{tab:zero shot neg}
\end{table}

\subsection{\BB{Investigation into the Image-Caption Data}}
\BB{In Figure~\ref{fig:obj_cap}, we  focus on the MSCOCO image-caption dataset, specifically examining the existence of objects present in images but omitted in their corresponding captions. We found that a significant portion of the data pairs contain at least one such object missing from the caption.}
\begin{figure}[ht]
    \centering
    \includegraphics[width=0.6\textwidth]{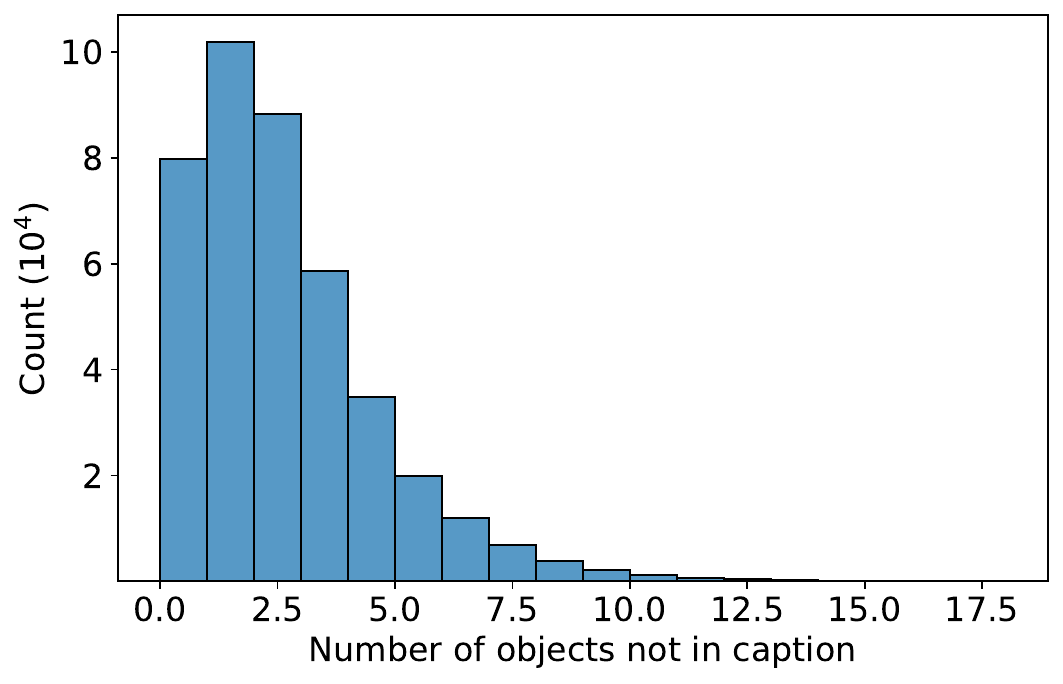}
    \caption{\BB{Distribution of the image-caption pairs in MSCOCO, where we count the number of object that appeared in the image but was absent from the captions.}}
    \label{fig:obj_cap}
\end{figure}
\BB{In Figure~\ref{fig:cc3m}, we present a random selection of the image-caption pairs in CC3M dataset. These examples are illustrative of the whole dataset, although we cannot provide an exhaustive representation of the numerous examples within the dataset.}
\begin{figure}[ht]
    \centering
    \includegraphics[width=0.9\textwidth]{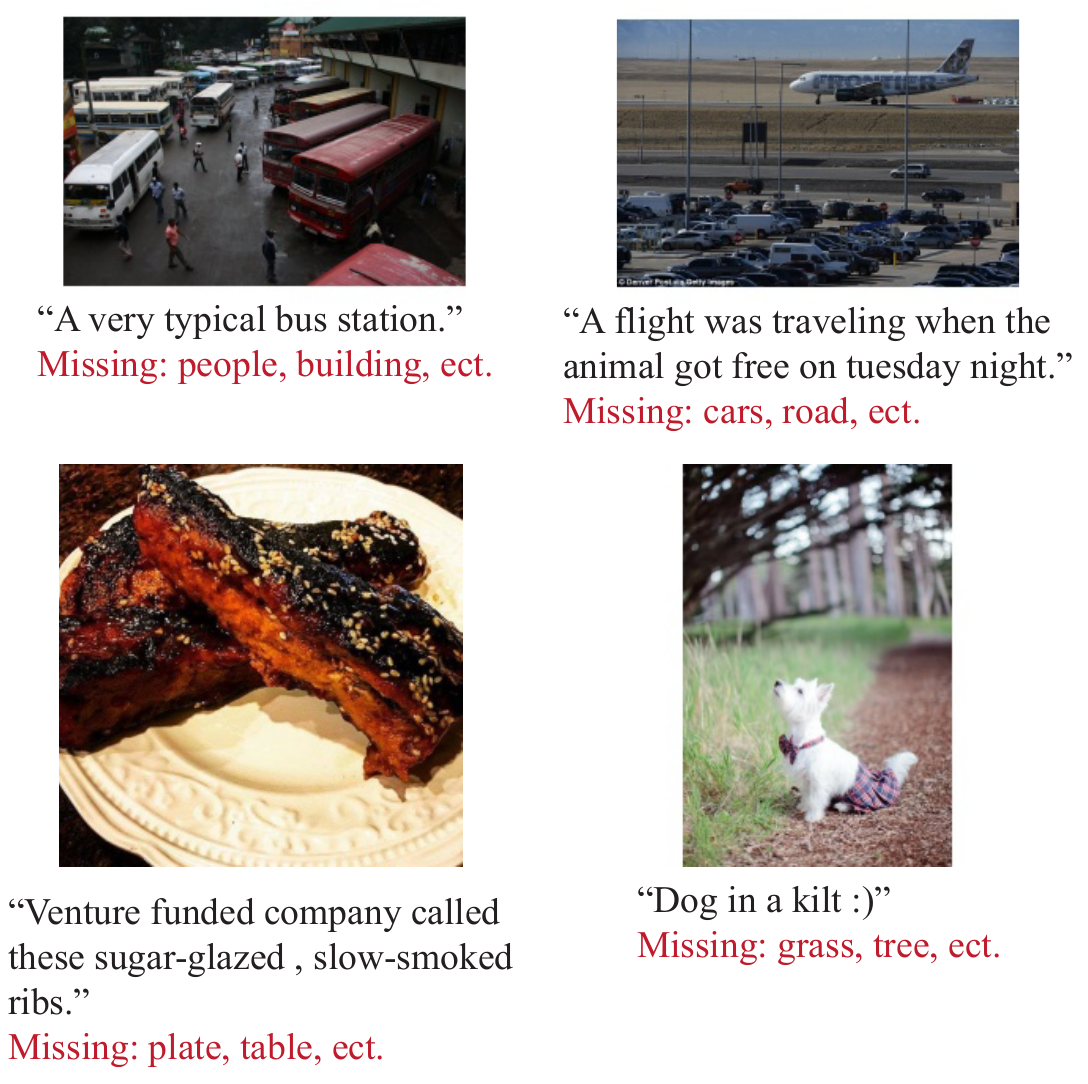}
    \caption{\BB{Examples of the image-text pairs from CC3M. We identify a few missing visual objects in the captions.}}
    \label{fig:cc3m}
\end{figure}

\subsection{Effect of Temperature on Margin}
\noindent\textbf{Setup.} For lightweight exploration in section~\ref{section:exp_temp}, we use the training dataset from MSCOCO~\citep{chen2015microsoft} Image Captioning Task as the data for vision-language contrastive pre-training. Specifically, the dataset contains $82,783$ images where each image is coupled with $5$ captions. We consider each image-caption pair as a data example in pre-training and therefore arrive at $413,915$ pre-training data pairs. We further randomly split the data to keep $20\%$ of the data as validation set and stops training as the contrastive loss on validation data no longer decreases to avoid overfitting on the small dataset. 

\noindent\textbf{Margin.} Given a training data batch, the margin is consider as the difference between a diagonal value and an off-diagonal value: $f(\xb_i,\yb_i)-f(\xb_j,\yb_i)$ and $f(\xb_i,\yb_i)-f(\xb_i,\yb_j)$. 
We consider CLIP models trained at fixed temperature $\tau=0.07$ and $\tau=0.01$. We note that $0.07$ is the default value for $\tau$ to start training in CLIP and $0.01$ is the clamping value (equivalently as the maximum logit scale of $4.6052$.) In Figure~\ref{fig:margin_dist}, we collected the margins from all batches of size 64 in the MSCOCO training data, where the data is randomly shuffled.

\noindent\textbf{Additional Experiments.} Here, we additionally compare the margin distribution of CLIP trained at temperature $\tau=0.01$, without or with our regularization term. We could observe that the margin distribution shifts to the right with the regularization term, which alleviates the negative influence of an extremely small temperature value.
\begin{figure}[ht]
    \centering
    \includegraphics[width=0.9\textwidth]{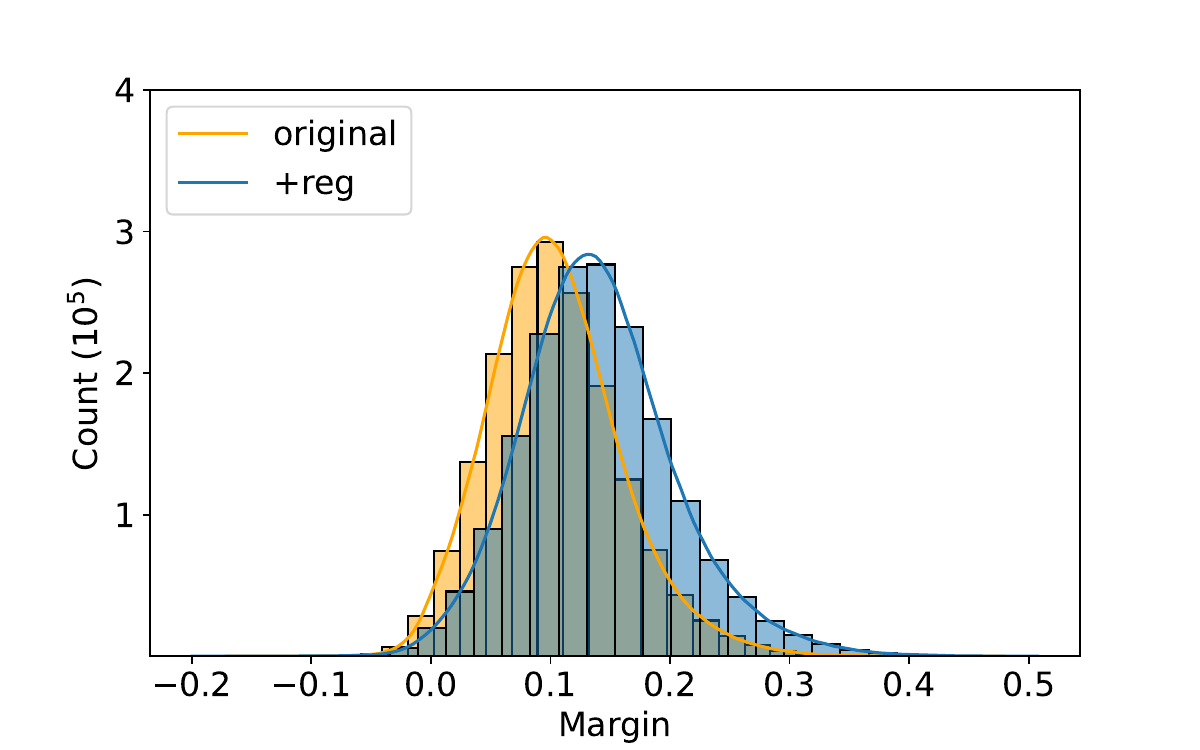}
    \caption{The distribution of the margins with regard to CLIP models trained $\tau=0.01$ with or withour regularization. Margin is computed within each batch of the data.}
    \label{fig:margin_dist_reg}
\end{figure}

\subsection{Zero-shot Transfer and Linear Probing}
\textbf{Setup.} In the evaluation of zero-shot transfer and linear probing, we use CC3M~\citep{sharma2018conceptual} as the pre-training dataset, which contains around $3,318,332$ image-caption pairs gathered from the web. While some URLs are broken so that we cannot download the images, we eventually reached a pre-training dataset of $2,786,288$ data pairs. When training CLIP models, we use the default coefficients of CyCLIP regularization terms of $\lambda_1=0.25$ and $\lambda_2=0.25$. For our regularization term, we use a coefficient of $\lambda=0.1$. As in CLIP, we set the temperature $\tau$ from $0.07$, equivalently having maximum logit scale at $2.6593$. Lastly, we use a training batch size of $32$ and trained for $8$ epochs in the results reported in section~\ref{section:exp_zero}. 

\begin{table}[ht]
    \caption{Summary of datasets used for zero-shot transfer and linear probing.}
    \centering
    \begin{tabular}{lccccccc}
        \toprule
         Dataset & Classes & Class Description\\
         \midrule
         CIFAR10 & 10 & Categories of animals and vehicles \\
         CIFAR100 & 100 & Categories of objects including animals, foods, vehicles and people  \\
         STL10 & 10 & Categories of animals and vehicles \\
         Food101 & 101 & Categories of foods/dishes \\
         ImageNetV2 & 1000 & Categories of objects including animals, foods, vehicles and people\\
         DTD  & 47 & Categories of textures \\
         Flowers102 & 102 & Categories of flower species \\
         Oxford-IIIT Pet & 37 & Categories of cats and dogs \\
        \bottomrule
    \end{tabular}
    \label{tab:dataset_stat}
\end{table}

\noindent\textbf{Evaluations.} As similar in previous works~\citep{radford2021learning,yao2022filip,mu2022slip,goel2022cyclip}, we consider the following image classification tasks for zero-shot transfer and linear probing: CIFAR10/100~\citep{Krizhevsky09learningmultiple}, STL10~\citep{coates2011analysis}, Food101~\citep{bossard2014food}, ImageNetV2~\citep{recht2019imagenet}, DTD (Describable Textures,\citet{cimpoi2014describing}), Flowers102~\citep{nilsback2008automated} and Oxford-IIIT Pet~\citep{parkhi2012cats}. The dataset statistics are reported in Table~\ref{tab:dataset_stat}. For zero-shot transfer, we use the same prompt engineering and ensembling as the original CLIP and report the top-1 accuracy. For linear probing, as the same in CLIP, we train a logistic regression classifier on the image embeddings generated by the image encoder of pre-trained CLIP models on the training data from the considered datasets. The classifiers are all trained to convergence and we report the test accuracy on each of the test dataset of the tasks. We note that, due to the limitation of the training data CC3M, the zero-shot test accuracy of all CLIP-objectives on Flowers102 and Oxford-IIIT Pet are near random guesses. Therefore, we omit these datasets for zero-shot transfer.

\noindent\textbf{Additional Experiments.} We additionally report the zero-shot transfer results of the original CLIP objective and adding our regularziation term, on a different visual encoder architecture of TinyViT~\citep{wu2022tinyvit} with pre-trained weights from Huggingface. 
\begin{table}[ht]
    \centering
    \caption{Zero-shot top-$1$ accuracy  ($\%$). Notably, adding the regularization term successfully improves the baselines on $5$ out of the $6$ datasets.}
    \resizebox{0.9\textwidth}{!}{%
    \begin{tabular}{c c c c c c c c c c c}
    \addlinespace[3pt]
    \toprule
        & CIFAR10 & CIFAR100 & STL10 & Food101 & ImageNetV2 & DTD & Average\\
    \midrule
        CLIP & $52.02$ & $15.57$ & $81.89$ & $7.92$ & $\textbf{16.91}$ & $\textbf{11.80}$ &  $31.02$\\
        CLIP+Reg & $\textbf{53.30}$ & $\textbf{19.67}$ & $\textbf{83.76}$ & $\textbf{7.99}$ &  $16.06$ & $11.53$ &   $\textbf{32.05}$\\
    \bottomrule
    \end{tabular}%
    }
    \label{tab:append zero shot}
\end{table}

\section{Proof of Results in Section~\ref{section:problemsetting}}

% Denote $\gamma(\xb, \zb) := \log(p(\xb, \zb)/(p(\xb))), \forall \zb \in \cV$. Notice that 
% \begin{align*}
% p(\xb) = \sum_{\zb \in \cV} p(\xb, \zb),     
% \end{align*}
% we can conclude that $\gamma(\xb, \zb) \in $

% \begin{align*}
% \EE\bigg[\log\bigg(\sum_{t \in [B]}\exp\big(\big[f(\xb_{t}, \yb_{1}) - f(\xb_{1}, \yb_{1})\big]/\tau\big)\bigg)\bigg|\zb_1\bigg] =  \EE\bigg[\log\bigg(\sum_{t \in [B]}\exp\big(\big[f(\xb_{t}, \yb_{t}) - f(\xb_{1}, \yb_{1})\big]/\tau\big)\bigg)\bigg]    
% \end{align*}

% \begin{proof}
%  Let function $\Lambda(\xb, \yb) = \frac{P_{XY}(\xb, \yb)}{P_{\xb}(\xb)P_{\yb}(\yb)}$. This function satisfies that 
%  \begin{align*}
%  \EE\bigg[\Lambda(\xb_t, \yb_1)f(\xb_{1}, \yb_{t})\Big] =  \EE\bigg[f(\xb_{1}, \yb_{1})\Big].    
%  \end{align*}
%  Then we have that 
% \begin{align*}
% &\sum_{t \in [B]}\exp\big(\big[f(\xb_{t}, \yb_{1}) - f(\xb_{1}, \yb_{1})\big]/\tau\big)\bigg)\\
% &= 
% \end{align*}

% \end{proof}

\begin{proof}[Proof of Theorem~\ref{thm:uniform convergence}]
We first prove that $L_{S'}(f, \tau) $ is upper bounded by $4M\log B/ \tau$. 
\begin{align}
L_{S'}(f, \tau)  &= \frac{1}{B}\sum_{i \in S'}\log\bigg(\sum_{j \in S'}\exp\big(\big[f(\xb_{j}, \yb_{i}) - f(\xb_{i}, \yb_{i})\big]/\tau\big)\bigg) \notag\\
&\qquad + \frac{1}{B}\sum_{i \in S'}\log\bigg(\sum_{j \in S'}\exp\big(\big[f(\xb_{i}, \yb_{j}) - f(\xb_{i}, \yb_{i})\big]/\tau\big)\bigg) \notag\\
&\leq \frac{1}{B}\sum_{i \in S'}\log\bigg(\sum_{j \in S'}\exp\big(2M/\tau\big)\bigg) \notag + \frac{1}{B}\sum_{i \in S'}\log\bigg(\sum_{j \in S'}\exp\big(2M/\tau\big)\bigg) \notag \\
&= 4M\log B/ \tau. \label{eq: bound}
\end{align}
where the inequality is by the fact the $|f|\leq M$. On the other hand, we have that 
\begin{align*}
L_{S'}(f, \tau)  &= \frac{1}{B}\sum_{i \in S'}\log\bigg(\sum_{j \in S'}\exp\big(\big[f(\xb_{j}, \yb_{i}) - f(\xb_{i}, \yb_{i})\big]/\tau\big)\bigg) \notag\\
&\qquad + \frac{1}{B}\sum_{i \in S'}\log\bigg(\sum_{j \in S'}\exp\big(\big[f(\xb_{i}, \yb_{j}) - f(\xb_{i}, \yb_{i})\big]/\tau\big)\bigg) \notag\\
&\geq \frac{2}{B}\sum_{i \in S'}\log\bigg(\exp\big(\big[f(\xb_{i}, \yb_{i}) - f(\xb_{i}, \yb_{i})\big]/\tau\big)\bigg) \\
&\geq 0.
\end{align*}
where the inequality is because Exp function is greater than $0$. Therefore we have proved that $L_{S'}(f, \tau)  \in (0, 4M\log (B)/\tau]$.
For all $f_{1}, f_{2} \in \cF$ and any batch $S'$ with size $B$, we have that 
\begin{align*}
L_{S‘}(f_{1}, \tau) - L_{S’}(f_{2}, \tau) &= \frac{1}{B}\sum_{i \in S'}\log\bigg(\sum_{j \in S'}\exp\big(\big[f_{1}(\xb_{j}, \yb_{i}) - f_{1}(\xb_{i}, \yb_{i})\big]/\tau\big)\bigg) \notag\\
&\qquad - \frac{1}{B}\sum_{i \in S'}\log\bigg(\sum_{j \in S'}\exp\big(\big[f_{2}(\xb_{j}, \yb_{i}) - f_{2}(\xb_{i}, \yb_{i})\big]/\tau\big)\bigg)\\
&\qquad +\frac{1}{B}\sum_{i \in S'}\log\bigg(\sum_{j \in S'}\exp\big(\big[f_{1}(\xb_{i}, \yb_{j}) - f_{1}(\xb_{i}, \yb_{i})\big]/\tau\big)\bigg)\\
&\qquad-\frac{1}{B}\sum_{i \in S'}\log\bigg(\sum_{j \in S'}\exp\big(\big[f_{2}(\xb_{i}, \yb_{j}) - f_{2}(\xb_{i}, \yb_{i})\big]/\tau\big)\bigg)\\
&\leq\frac{1}{B}\sum_{i \in S'}\log\bigg(\sum_{j \in S'}\exp\big(\big[f_{1}(\xb_{j}, \yb_{i}) - f_{1}(\xb_{i}, \yb_{i})\big]/\tau\big)\bigg) \notag\\
&\qquad -\frac{1}{B}\sum_{i \in S'}\log\bigg(\sum_{j \in S'}\exp\big(\big[f_{1}(\xb_{j}, \yb_{i}) - f_{1}(\xb_{i}, \yb_{i}) - 2\|f_{1} - f_{2}\|_{\infty}\big]/\tau\big)\bigg)\\
&\qquad +\frac{1}{B}\sum_{i \in S'}\log\bigg(\sum_{j \in S'}\exp\big(\big[f_{1}(\xb_{i}, \yb_{j}) - f_{1}(\xb_{i}, \yb_{i})\big]/\tau\big)\bigg)\\
&\qquad-\frac{1}{B}\sum_{i \in S'}\log\bigg(\sum_{j \in S'}\exp\big(\big[f_{1}(\xb_{i}, \yb_{j}) - f_{1}(\xb_{i}, \yb_{i}) - 2\|f_{1} - f_{2}\|_{\infty}\big]/\tau\big)\bigg)\\
&= 4\|f_{1} - f_{2}\|_{\infty}/\tau.
\end{align*}
Similarly, we can get another direction $L_{S'}(f_{2},\tau) - L_{S'}(f_{1}, \tau) \leq 4\|f_{1} - f_{2}\|_{\infty}/\tau$, which yields to $|L_{S'}(f_{2},\tau) - L_{S'}(f_{1}, \tau)| \leq 4\|f_{1} - f_{2}\|_{\infty}/\tau$. Taking the expectation gives that $|L_{\cD^{B}}(f_{2},\tau) - L_{\cD^{B}}(f_{1}, \tau)| \leq 4\|f_{1} - f_{2}\|_{\infty}/\tau$. 
By the definition of the covering set, the function class $\cF$ can be covered by $K$ subsets $\cB_{1}, \ldots, \cB_{K}$, that is $\cF = \cB_{1}\cup \ldots \cup\cB_{K}$, where $K = \cN(\cF, \tau\epsilon/16)$ and $\cB_{1}, \ldots \cB_{K}$ are the balls of the radius $\tau \cdot \epsilon/16$ centered at $f_{1}, \ldots, f_{K}$. Then we have that
\begin{align}
&\mathbb{P}_{S \sim \cD^{n}}\bigg[\sup_{f \in \cF}\big|L_{\cD^{B}}(f, \tau) - \hat{L}_{S}(f, \tau)\big| \geq \epsilon\bigg] \notag\\
&\leq \sum_{k\in [K]}\mathbb{P}_{S \sim \cD^{n}}\bigg[\sup_{f \in \cB_{k}}\big|L_{\cD^{B}}(f, \tau) - \hat{L}_{S}(f, \tau)\big|\geq \epsilon\bigg]\notag\\
&\leq \sum_{k\in [K]}\mathbb{P}_{S \sim \cD^{n}}\bigg[\big|L_{\cD^{B}}(f_{k}, \tau) - \hat{L}_{S}(f_{k}, \tau)\big|\geq \epsilon/2\bigg]\notag\\
&= \sum_{k\in [K]}\mathbb{P}_{S \sim \cD^{n}}\bigg[\big|L_{\cD^{B}}(f_{k}, \tau) - (1/n)\sum_{i\in [n]}L_{S_i}(f_k, \tau)\big|\geq \epsilon/2\bigg]\notag\\
&\leq 2K\exp\Big(-\frac{n\epsilon^{2}\tau}{8M\log B}\Big)\notag\\
&= 2\cN(\cF, \tau\epsilon/16)\exp\Big(-\frac{n\epsilon^{2}\tau}{8M\log B}\Big), \label{eq:simpleC}
\end{align}
the first inequality is by union bound, the second is by triangle inequality, and the third is by Hoeffding’s inequality and \eqref{eq: bound}. Finally, plugging the condition $n \geq (8\tau^{-1}\epsilon^{-2}M\log B)\log( 2\cN(\cF, \epsilon/8M)/\delta)$ into \eqref{eq:simpleC} we have that 
\begin{align*}
\mathbb{P}_{S \sim \cD^{n}}\bigg[\sup_{f \in \cF}\big|L_{\cD^{B}}(f, \tau) - \hat{L}_{S}(f, \tau)\big| \geq \epsilon\bigg] \leq \delta,    
\end{align*}
which completes the proof.
\end{proof}

\section{Proof of Results in Section~\ref{sec: main}}

% $C_B= \lceil \log(5K)/\big((B-1)\min_{k}p_k\big) \rceil $
\begin{lemma}\label{lm:logdecompose}
For $b_{j} \geq 0, j \in [m]$, we have that 
\begin{align*}
\log\bigg(1 + \sum_{j\in [m]}b_j\bigg) \leq \sum_{j\in [m]}\log(1 + b_{j}).    
\end{align*}
\end{lemma}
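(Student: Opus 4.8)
The plan is to reduce the inequality to the elementary fact that expanding a product of terms $1+b_j$ with $b_j \ge 0$ only adds nonnegative cross terms. Concretely, I would first establish the product bound
\begin{align*}
\prod_{j \in [m]} (1 + b_j) \;\ge\; 1 + \sum_{j \in [m]} b_j,
\end{align*}
which follows by multiplying out the left-hand side: the full expansion is a sum over all subsets $T \subseteq [m]$ of the monomials $\prod_{j \in T} b_j$, all of which are nonnegative since each $b_j \ge 0$; keeping only $T = \emptyset$ (giving $1$) and the singletons $T = \{j\}$ (giving $b_j$) already yields $1 + \sum_j b_j$, and discarding the remaining nonnegative terms can only decrease the value.

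Next, since $t \mapsto \log t$ is monotone increasing on $(0,\infty)$ and both sides of the product bound are positive, applying $\log$ preserves the inequality:
\begin{align*}
\log\Big(1 + \sum_{j \in [m]} b_j\Big) \;\le\; \log\Big(\prod_{j \in [m]} (1 + b_j)\Big) \;=\; \sum_{j \in [m]} \log(1 + b_j),
\end{align*}
where the last equality is the additivity of $\log$ over products. This gives the claim.

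I do not anticipate any genuine obstacle here; the only point requiring a line of justification is the product bound, and an alternative to the subset-expansion argument is a short induction on $m$, where the inductive step uses $(1+a)(1+b) = 1 + a + b + ab \ge 1 + a + b$ for $a = \sum_{j \le m-1} b_j \ge 0$ and $b = b_m \ge 0$. Either route is routine, so I would present the non-inductive version for brevity.
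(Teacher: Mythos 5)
Your proof is correct and takes essentially the same approach as the paper: establish the product bound $\prod_{j}(1+b_j) \ge 1 + \sum_j b_j$ and then take logarithms. The paper states the product bound without justification, while you spell out the expansion argument; otherwise the two proofs are identical.
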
\label{lm:buildup}
\begin{proof}
Notice that 
\begin{align*}
\Pi_{j \in [J]}(1 + b_{j}) \geq 1 + \sum_{j \in [J]}b_{j}.
\end{align*}
Taking the logarithm over both sides completes the proof.
\end{proof}

\begin{lemma}\label{lm:strongconvexity}
Suppose that $a_{1}, \ldots a_{m}$ are i.i.d random variable sample lies in $[-R, R]$ where $R \geq 1$, with mean $\mu:=\EE[a_{1}]$ and variance $\sigma^{2} :=\EE[(a_{1} - \EE[a_{1}])^{2}]$. Then we have that 
\begin{align*}
\EE[\log\Big(\sum_{i=1}^{m} \exp(a)\Big) \geq \log(m) + \mu + \frac{m-1}{4mR^{2}}\sigma^2.    
\end{align*}
\end{lemma}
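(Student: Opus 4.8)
\textbf{Proof proposal for Lemma~\ref{lm:strongconvexity}.}

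The plan is to prove a sharpened Jensen-type inequality for the log-sum-exp function that captures the strong-convexity-like curvature along the relevant direction. First I would reduce to a single expectation over the empirical mean: writing $\bar{a} = \frac{1}{m}\sum_{i=1}^{m} a_i$, we have $\log\big(\sum_{i=1}^m \exp(a_i)\big) = \log(m) + \bar{a} + \log\big(\frac{1}{m}\sum_{i=1}^m \exp(a_i - \bar{a})\big)$, so taking expectations and using $\EE[\bar{a}] = \mu$, it suffices to lower bound $\EE\big[\log\big(\frac{1}{m}\sum_{i=1}^m \exp(a_i - \bar{a})\big)\big]$ by $\frac{m-1}{4mR^2}\sigma^2$. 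The key local inequality I would establish is that $\log\big(\frac{1}{m}\sum_i \exp(t_i)\big) \geq \frac{1}{m}\sum_i t_i + c\cdot\frac{1}{m}\sum_i (t_i - \bar{t})^2$ for some constant $c$ depending on the range of the $t_i$; since $|a_i - \bar{a}| \leq 2R$, taking $t_i = a_i - \bar{a}$ (so $\bar{t} = 0$) gives $\log\big(\frac{1}{m}\sum_i \exp(a_i - \bar{a})\big) \geq c \cdot \frac{1}{m}\sum_i (a_i - \bar{a})^2$.

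The elementary inequality I need is: for $|t| \leq b$, $e^t \geq 1 + t + \frac{t^2}{2}\cdot\phi(b)$ for a suitable $\phi(b) > 0$ (one can take $\phi(b)$ so that $e^t - 1 - t \geq \frac{t^2}{2}\phi(b)$ holds on $[-b,b]$; a crude but sufficient choice gives something like $\phi(2R) \geq \frac{1}{2R^2}$-type behavior, which is where the $1/R^2$ factor enters). Then $\frac{1}{m}\sum_i e^{t_i} \geq 1 + \bar{t} + \frac{\phi(b)}{2}\cdot\frac{1}{m}\sum_i t_i^2 = 1 + \frac{\phi(b)}{2}\cdot\frac{1}{m}\sum_i t_i^2$ when $\bar{t} = 0$, and applying $\log(1+x) \geq $ (something linear in $x$ on the bounded range, or just $\log(1+x) \geq x - x^2/2$ carefully, or monotonicity) yields the pointwise bound. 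Finally I would take expectations: $\EE\big[\frac{1}{m}\sum_i (a_i - \bar{a})^2\big] = \frac{m-1}{m}\sigma^2$ by the standard unbiasedness computation for sample variance (using i.i.d.\ and $\EE[(a_i - \bar a)^2] = \frac{m-1}{m}\sigma^2$), which produces exactly the claimed $\frac{m-1}{4mR^2}\sigma^2$ after tracking constants.

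The main obstacle I anticipate is getting the constants to line up to produce precisely the factor $\frac{1}{4R^2}$ rather than just some $\Theta(1/R^2)$ constant — this requires choosing the right elementary bounds at each step (the quadratic lower bound on $e^t$ over $[-2R, 2R]$, and then not losing too much when passing through $\log(1+x)$). A secondary subtlety is that $\log(1+x)$ is concave, so I cannot naively lower bound $\EE[\log(1 + X)]$ by $\log(1 + \EE[X])$ in the wrong direction; instead I should keep the inequality pointwise (holding for every realization of $a_1,\dots,a_m$) all the way until the very last step, and only then take the expectation of both sides — the linearity of expectation on the quadratic term is what makes the final bound clean. I would also double-check the edge cases $m = 1$ (both sides: LHS $= \EE[a_1] = \mu = \log 1 + \mu + 0$, consistent since $\frac{m-1}{4mR^2} = 0$) and the reduction to $\bar t = 0$ to make sure no cross terms were dropped.
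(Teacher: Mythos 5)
Your proposal matches the paper's proof essentially step for step: the same decomposition $\log\big(\sum_i e^{a_i}\big) = \log m + \bar a + \log\big(\tfrac1m\sum_i e^{a_i-\bar a}\big)$, the same quadratic pointwise lower bound on $e^t$ over the range of $a_i - \bar a$, the same linear lower bound on $\log(1+x)$ applied to the resulting average, and expectations taken only at the very end via $\EE\big[\sum_i(a_i-\bar a)^2\big] = (m-1)\sigma^2$. The one detail worth flagging is that you (correctly) note the centered variables live in $[-2R,2R]$, whereas the paper invokes $\exp(t)\ge 1+t+t^2/(3R^2)$ only for $t\in[-R,R]$; when you pin down the constant $\phi$, make sure the quadratic bound you use actually holds on the wider interval, as this is where the final $1/(4R^2)$ factor is decided.
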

\begin{proof}
Let $\bar{a} = \big[\sum_{i=1}^{m}a_{i}\big]/m$
\begin{align*}
\log\Big(\sum_{i=1}^m \exp(a_{i})\Big) &= \log(m) + \frac{1}{m}\sum_{i=1}^m a_{i} + \log\Big(\frac{1}{m}\sum_{i=1}^{m} \exp(a - \bar{a})\Big) \\ 
&\geq \log(m) + \frac{1}{m}\sum_{i=1}^{m}a_{i} + \log\Big(1 + \frac{1}{3mR^{2}}\sum_{i=1}^{m} [a - \bar{a}]^{2}\Big) \\
&\geq \log(m) + \frac{1}{m}\sum_{i=1}^{m}a_{i} + \frac{1}{4mR^{2}}\sum_{i=1}^{m}[a - \bar{a}]^{2}.    
\end{align*}
where the first inequality is by $\exp(t) \geq 1 + t + t^{2}/(3R^{2}), \forall t \in [-R, R]$, the second inequality is due to $\log(1 + t) \geq 3t/4, \forall t \in [0,1/3]$. 

% Then let's calculate it's upper bound, let $\bar{a} = \big[\sum_{i=1}^{m}a_{i}\big]/m$
% \begin{align*}
% \EE\log\Big(\sum_{i=1}^m \exp(a_{i})\Big) &\leq \log\Big( \sum_{i=1}^{m} \EE\exp(a_i) \Big) \\
% &= \log(m) + \log\Big(\EE\exp(a_1)\Big).
% \end{align*}
% where the inequality is by Jensen's inequality and the $\log$ function is concave. Notice that with probability at least $1 - 1/t^2$
% \begin{align*}
%  a_{1} &\leq \EE[a_{1}] + t\sigma.
% \end{align*}
% Denote the event to be $\cE$, then we have that 
% \begin{align*}
% \EE[\exp(a_1)] &\leq \EE[\ind(\cE)\exp(a_1)] + \EE[\ind(\cE^{c})\exp(a_{1})] \\
% &\leq \exp(\mu + t\sigma) + \frac{1}{t^{2}}\exp(R) 
% \end{align*}
\end{proof}

\begin{lemma}\label{lm:optimal bound}
Suppose $f^{*}$ is the function that satisfies Assumption~\ref{assm: zero}, then we have that
\begin{align*} 
L_{\cD^{B}}(f^{*}, \tau) \leq  2\EE\bigg[\log\bigg(\sum_{t \in [B]}\ind(\zb_{t} = \zb_{1})\bigg)\bigg] + 6MB\alpha/\tau + 3\sqrt[3]{6MB\beta}/\tau + 2B\exp(-\gamma/\tau) 
\end{align*}    
\end{lemma}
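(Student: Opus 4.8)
\textbf{Proof proposal for Lemma~\ref{lm:optimal bound}.}

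\textbf{Proof proposal for Lemma~\ref{lm:optimal bound}.}

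The plan is to bound the two symmetric terms of $L_{\cD^{B}}(f^{*},\tau)$ separately — they are identical after swapping the roles of $\xb$ and $\yb$ — and then add the results. Focus on the first term $\EE\big[\log\big(\sum_{t\in[B]}\exp([f^{*}(\xb_{1},\yb_{t})-f^{*}(\xb_{1},\yb_{1})]/\tau)\big)\big]$. I would split the inner sum according to whether $\zb_{t}=\zb_{1}$, writing $A:=\sum_{t:\zb_{t}=\zb_{1}}\exp([f^{*}(\xb_{1},\yb_{t})-f^{*}(\xb_{1},\yb_{1})]/\tau)$ and $B_{\mathrm{off}}:=\sum_{t:\zb_{t}\neq\zb_{1}}\exp([f^{*}(\xb_{1},\yb_{t})-f^{*}(\xb_{1},\yb_{1})]/\tau)$. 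The index $t=1$ always belongs to the first group and contributes $\exp(0)=1$, so $A\geq 1$, hence $\log(A+B_{\mathrm{off}})\leq\log A+\log(1+B_{\mathrm{off}})$; and by Lemma~\ref{lm:logdecompose}, $\log(1+B_{\mathrm{off}})\leq\sum_{t:\zb_{t}\neq\zb_{1}}\log\big(1+\exp([f^{*}(\xb_{1},\yb_{t})-f^{*}(\xb_{1},\yb_{1})]/\tau)\big)$.

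For the off-diagonal part I would invoke the first bullet of Assumption~\ref{assm: zero}: for each $t$ with $\zb_{t}\neq\zb_{1}$, with probability at least $1-\alpha$ we have $f^{*}(\xb_{1},\yb_{t})\leq f^{*}(\xb_{1},\yb_{1})-\gamma$, so the corresponding summand is at most $\log(1+\exp(-\gamma/\tau))\leq\exp(-\gamma/\tau)$, while on the complementary event (probability at most $\alpha$) it is at most $\log(1+\exp(2M/\tau))=O(M/\tau)$ by $|f^{*}|\leq M$. Taking expectations and summing over the at most $B$ off-diagonal indices gives $\EE[\log(1+B_{\mathrm{off}})]\leq B\exp(-\gamma/\tau)+O(MB\alpha/\tau)$, which accounts for the $\alpha$- and $\gamma$-terms in the claimed bound.

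The substantive step is controlling $\EE[\log A]$. Writing $N_{1}:=\sum_{t}\ind(\zb_{t}=\zb_{1})$ and $\log A=\log N_{1}+\log\big(\frac{1}{N_{1}}\sum_{t:\zb_{t}=\zb_{1}}\exp([f^{*}(\xb_{1},\yb_{t})-f^{*}(\xb_{1},\yb_{1})]/\tau)\big)$, the goal is $\EE[\log A]\leq\EE[\log N_{1}]+O((MB\beta)^{1/3}/\tau)$. I would condition on $\xb_{1},\zb_{1}$ and the index set $\{t:\zb_{t}=\zb_{1}\}$; given this, the conditional independence assumption makes $\{f^{*}(\xb_{1},\yb_{t})\}_{t:\zb_{t}=\zb_{1}}$ i.i.d.\ with variance $\text{Var}_{\yb|\zb_{1}}(f^{*}(\xb_{1},\yb))$. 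A direct moment-generating-function estimate is too lossy here because the range of $f^{*}(\xb_{1},\yb)/\tau$ is of order $M/\tau$; instead I would truncate. Fix a threshold $\delta>0$: by Chebyshev, $|f^{*}(\xb_{1},\yb_{t})-f^{*}(\xb_{1},\yb_{1})|>\delta$ with conditional probability at most $2\,\text{Var}_{\yb|\zb_{1}}(f^{*}(\xb_{1},\yb))/\delta^{2}$; on the event that this holds for no same-class index (complement probability at most $2B\,\text{Var}_{\yb|\zb_{1}}(f^{*}(\xb_{1},\yb))/\delta^{2}$ by a union bound and $N_{1}\leq B$) the average in the display is at most $\exp(\delta/\tau)$, and on the complement it is at most $\exp(2M/\tau)$. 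Hence $\EE[\log A-\log N_{1}]\leq\delta/\tau+(2M/\tau)\cdot 2B\,\EE[\text{Var}_{\yb|\zb}(f^{*}(\xb,\yb))]/\delta^{2}\leq\delta/\tau+4MB\beta/(\tau\delta^{2})$, and choosing $\delta$ of order $(MB\beta)^{1/3}$ yields the $O((MB\beta)^{1/3}/\tau)$ bound — this is where the cube root, rather than a square root, enters. Repeating the argument for the second symmetric term (using the $\xb$-side of the margin condition and $\text{Var}_{\xb|\zb}$ in place of $\text{Var}_{\yb|\zb}$) and adding the two bounds finishes the proof. The main obstacle is precisely this truncation step: the naive sub-Gaussian bound fails because of the $M/\tau$ range, so one must localize via the variance and optimize the truncation radius.
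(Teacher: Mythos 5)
Your proof is correct and follows essentially the same strategy as the paper's: split the inner sum at fixed $\rho$ (your $\delta$) into same-class and different-class indices, apply $\log(1+\sum b_j)\le\sum\log(1+b_j)$ together with the margin event and the crude $2M$-bound on the different-class part to collect the $B\exp(-\gamma/\tau)$ and $O(MB\alpha/\tau)$ terms, use Chebyshev on the same-class part to isolate $\EE[\log N_1]$ plus a $\delta/\tau + O(MB\beta/(\tau\delta^2))$ remainder, and optimize $\delta\asymp(MB\beta)^{1/3}$ to produce the cube root. The only difference from the paper is bookkeeping: the paper defines a single good event $\cE_t$ and does a three-way split of the sum before invoking Lemma~\ref{lm:logdecompose}, whereas you first peel off $\log A$ via $\log(A+B_{\mathrm{off}})\le\log A+\log(1+B_{\mathrm{off}})$; both routes yield the same terms and the same choice of threshold.
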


% \begin{lemma}\label{lm: test_loss}
% Suppose that \ref{assm: zero} holds, $\lambda \leq \epsilon / [2 R(f^{*})], \tau \leq \gamma/(\log(2B/\epsilon))$ and we can find $\epsilon$ approximate optimal 
%  solution $\hat{f}$ with $\epsilon < 1$ such that $L_{S}(\hat{f}, \tau) \leq L_{S}(f^{*}, \tau) + \epsilon$. Then we have that 
% \begin{align*}
% L_{\cD^{B}}(\hat{f}, \tau) \leq 2\EE\bigg[\log\Big(\Big|\Big\{t\in [B]\Big|\zb_{t} = \zb_{1}\Big\}\Big|\Big)\bigg] + \epsilon'
% \end{align*}
% where $\epsilon' = 3\epsilon + [4M\alpha \log(B)+ 2\beta]/\tau + 2B\exp(-\gamma/\tau)$
% \end{lemma}
\begin{proof}
Let the event $\cE_t$ be the case that either i) $\zb_{t} = \zb_{1}$ and $|f^{*}(\xb_{t}, \yb_1) - f^{*}(\xb_{1}, \yb_1)| \leq \rho$ or ii) $\zb_{t} \not= \zb_{1}$ and $f^{*}(\xb_{t}, \yb_1) - f^{*}(\xb_{1}, \yb_1) \leq - \gamma$. We also denote the complementary set of $\cE_t$ to be $\cE_t^{c}$. By Assumption~\ref{assm: zero}, we have that 
\begin{align*}
&\mathbb{P}(\cE_{t}, \zb_{t} = \zb_{1}) \leq \beta/\rho^{2} \\
&\mathbb{P}(\cE_{t}, \zb_{t} \not= \zb_{1}) \leq \alpha.
\end{align*}
the first inequality is by Chebyshev's inequality, and the second is by margin assumption. Therefore, we have that $\mathbb{P}(\cE_{t}^{c}) \leq \alpha + \beta/\rho^{2}$. Next, let us decompose $ L_{\cD^{B}}(f^{*}, \tau) $ into three parts,
\begin{align}
 L_{\cD^{B}}(f^{*}, \tau) &= \EE\bigg[\log\bigg(\sum_{t \in [B]}\ind(\zb_{t} \not= \zb_1)\ind(\cE_{t})\exp\big(\big[f^{*}(\xb_{1}, \yb_{t}) - f^{*}(\xb_{1}, \yb_{1})\big]/\tau\big)\notag\\
&\qquad+ \sum_{t \in [B]}\ind(\cE_{t}^{c})\exp\big(\big[f^{*}(\xb_{1}, \yb_{t}) - f^{*}(\xb_{1}, \yb_{1})\big]/\tau\big) \notag\\
 &\qquad + \sum_{t \in [B]}\ind(\zb_{t} = \zb_1)\ind(\cE_t)\exp\big(\big[f^{*}(\xb_{1}, \yb_{t}) - f^{*}(\xb_{1}, \yb_{1})\big]/\tau\big)\bigg)\bigg]\notag\\
&\qquad+  \EE\bigg[\log\bigg(\sum_{t \in [B]}\ind(\zb_{t} \not= \zb_1)\ind(\cE_{t})\exp\big(\big[f^{*}(\xb_{t}, \yb_{1}) - f^{*}(\xb_{1}, \yb_{1})\big]/\tau\big)\notag\\
&\qquad+ \sum_{t \in [B]}\ind(\cE_{t}^{c})\exp\big(\big[f^{*}(\xb_{t}, \yb_{1}) - f^{*}(\xb_{1}, \yb_{1})\big]/\tau\big)\notag\\
 &\qquad + \sum_{t \in [B]}\ind(\zb_{t} = \zb_1)\ind(\cE_{t})\exp\big(\big[f^{*}(\xb_{t}, \yb_{1}) - f^{*}(\xb_{1}, \yb_{1})\big]/\tau\big)\bigg)\bigg]\notag \\
 &\leq 2\EE\bigg[\log\bigg(1+ B\exp\big(-\gamma/\tau\big) + \sum_{t \geq 2}\ind(\cE_{t}^{c})\exp\big(2M/\tau\big) + \sum_{t \geq 2}\ind(\zb_{t} = \zb_1)\exp\big(\rho/\tau\big)\bigg)\bigg]\notag\\
 &\leq 2\underbrace{\EE\bigg[\log\bigg(1+ B\exp\big(-\gamma/\tau\big)\bigg)\bigg]}_{I_1} + \sum_{t \geq 2}2\underbrace{\EE\bigg[\log\bigg(1 + \ind(\cE_{t}^{c})\exp\big(2M/\tau\big)\bigg)\bigg]}_{I_2}\notag\\
 &\qquad + 2\underbrace{\EE\bigg[\log\bigg(1 + \sum_{t \geq 2}\ind(\zb_{t} = \zb_1)\exp\big(\rho/\tau\big)\bigg) \bigg]}_{I_3} \label{eq:decompose}
\end{align}
where the first inequality is by Assumption~\ref{assm: zero}, the second inequality is due to Lemma~\ref{lm:logdecompose}. Next, we will bound $I_1, I_2, I_3$ separately.
\begin{align}
I_1  \leq B\exp(-\gamma/\tau),  \label{eq:1-1} 
\end{align}
where the inequality is due to the fact that $\log(1+x)\leq x$.
\begin{align}
I_{2} = \EE\bigg[\ind(\cE_{t}^{c})\log\bigg(1 + \exp\big(2M/\tau\big)\bigg)\bigg] \leq \mathbb{P}(\cE_{t}^{c})\frac{3M}{\tau} = (\alpha + \beta/\rho^{2})\cdot \frac{3M}{\tau}. \label{eq:1-2}
\end{align}
where the first equality is due to $\log\Big(1 + \ind(\cE_{t}^{c})\exp\big(2M/\tau\big)\Big) = 0)$ when $\ind(\cE_{t}^{c}) = 0$, the first inequality is due to $\log\Big(1 + \exp\big(2M/\tau\big)\Big) \leq 3M/\tau$. The last inequality is due to $\mathbb{P}(\cE_{t}^{c}) \leq \alpha + \beta/\rho^{2}$.
\begin{align}
I_{3} &\leq \EE\bigg[\log\bigg(\exp\big(\rho/\tau\big) + \sum_{t \geq 2}\ind(\zb_{t} = \zb_1)\exp\big(\rho/\tau\big)\bigg) \bigg]\notag\\
&= \rho/\tau + \EE\bigg[\log\bigg(\sum_{t \in [B]}\ind(\zb_{t} = \zb_{1})\bigg)\bigg]. \label{eq:1-3} 
\end{align}
where the inequality is because $1 \leq \exp(\rho/\tau)$.

Plugging \eqref{eq:1-1}, \eqref{eq:1-2} and \eqref{eq:1-3} into \eqref{eq:decompose} gives that,
\begin{align*}
 L_{\cD^{B}}(f^{*}, \tau) &\leq 2B\exp(-\gamma/\tau) + 6MB\alpha/\tau + 6MB\beta/(\tau\rho^{2}) + 2\rho/\tau + 2\EE\bigg[\log\bigg(\sum_{t \in [B]}\ind(\zb_{t} = \zb_{1})\bigg)\bigg]\\
&\leq  2\EE\bigg[\log\bigg(\sum_{t \in [B]}\ind(\zb_{t} = \zb_{1})\bigg)\bigg] + 6MB\alpha/\tau + 3\sqrt[3]{6MB\beta}/\tau + 2B\exp(-\gamma/\tau),
\end{align*}
where the second inequality is by choosing $\rho = \sqrt[3]{6MB\beta}$.
\end{proof}

\begin{proof}[Proof of Theorem~\ref{thm:main}]
First by Lemma~\ref{lm:optimal bound}, we have that 
\begin{align}
L_{\cD^{B}}(\hat{f}, \tau) &\leq L_{\cD^{B}}(f^{*}, \tau) + \epsilon \leq 2\EE\bigg[\log\bigg(\sum_{t \in [B]}\ind(\zb_{t} = \zb_{1})\bigg)\bigg] + \epsilon'
\end{align}
where $\epsilon' = \epsilon + 6MB\alpha/\tau + 3\sqrt[3]{6MB\beta}/\tau + 2B\exp(-\gamma/\tau)$. Notice that \begin{align}
L_{\cD^{B}}(\hat{f}, \tau)
&= \underbrace{\EE\bigg[\log\bigg(\sum_{t \in [B]}\exp\big(\big[\hat{f}(\xb_{1}, \yb_{t}) - \hat{f}(\xb_{1}, \yb_{1})\big]/\tau\big)\bigg)\bigg]}_{\mathrm{I_{1}}}\notag\\
&\qquad+  \underbrace{\EE\bigg[\log\bigg(\sum_{t \in [B]}\exp\big(\big[\hat{f}(\xb_{t}, \yb_{1}) - \hat{f}(\xb_{1}, \yb_{1})\big]/\tau\big)\bigg)\bigg]}_{I_{2}} \label{eq:decomposition2}
\end{align}  

Next, we prove the bullets in Theorem~\ref{thm:main} one by one.

\textbf{First and Second Bullet in Theorem~\ref{thm:main}:}  Denote the event $\cE$ as the case that for all $t \geq 1$, $\zb_{t} \not = \zb_{1}$, which is the event that CLIP favored. We first lower bound $I_{1}$.
\begin{align}
I_1
&=  \EE\bigg[\log\bigg(\sum_{t \in [B]}\ind(\zb_{t} \not= \zb_{1})\exp\big(\big[\hat{f}(\xb_{t}, \yb_{1}) - \hat{f}(\xb_{1}, \yb_{1})\big]/\tau\big)\notag\\
&\qquad + \sum_{t \in [B]}\ind(\zb_{t} = \zb_{1})\exp\big(\big[\hat{f}(\xb_{t}, \yb_{1}) - \hat{f}(\xb_{1}, \yb_{1})\big]/\tau\big)\bigg)\bigg]\notag\\
&= \EE\bigg[\log\bigg(\sum_{t \in [B]}\ind(\zb_{t} \not= \zb_{1})\exp\big(\big[\hat{f}(\xb_{t}, \yb_{1}) - \hat{f}(\xb_{1}, \yb_{1})\big]/\tau\big)\notag\\
&\qquad + \sum_{t \in [B]}\ind(\zb_{t} = \zb_{1})\exp\big(\big[\hat{f}(\xb_{t}, \yb_{1}) - \hat{f}(\xb_{1}, \yb_{1})\big]/\tau\big)\bigg)\bigg] \notag\\
&\geq \EE\bigg[\ind(\cE)\log\bigg(\sum_{t \in [B]}\ind(\zb_{t} \not = \zb_{1})\exp\big(\big[\hat{f}(\xb_{t}, \yb_{1}) - \hat{f}(\xb_{1}, \yb_{1})\big]/\tau\big) + 1\bigg)\bigg]\notag\\
&\qquad+ \EE\bigg[\ind(\cE^{c})\log\bigg(\sum_{t \in [B]}\ind(\zb_{t} = \zb_{1})\exp\big(\big[\hat{f}(\xb_{t}, \yb_{1}) - \hat{f}(\xb_{1}, \yb_{1})\big]/\tau\big)\bigg)\bigg]\notag\\
&= \EE\bigg[\ind(\cE)\log\bigg(\sum_{t \in [B]}\ind(\zb_{t} \not = \zb_{1})\exp\big(\big[\hat{f}(\xb_{t}, \yb_{1}) - \hat{f}(\xb_{1}, \yb_{1})\big]/\tau\big) + 1\bigg)\bigg]\notag\\
&\qquad+ \EE\bigg[\log\bigg(\sum_{t \in [B]}\ind(\zb_{t} = \zb_{1})\exp\big(\big[\hat{f}(\xb_{t}, \yb_{1}) - \hat{f}(\xb_{1}, \yb_{1})\big]/\tau\big)\bigg)\bigg]\notag\\
&\geq \EE\bigg[\ind(\cE)\log\bigg(\sum_{t \in [B]}\ind(\zb_{t} \not = \zb_{1})\exp\big(\big[\hat{f}(\xb_{t}, \yb_{1}) - \hat{f}(\xb_{1}, \yb_{1})\big]/\tau\big) + 1\bigg)\bigg]\notag\\
&\qquad+ \EE\bigg[\log\bigg(\sum_{t \in [B]}\ind(\zb_{t} = \zb_{1})\exp\big(\EE\big[\hat{f}(\xb_{t}, \yb_{1}) - \hat{f}(\xb_{1}, \yb_{1})\big|\zb_{t},\zb_{1}\big]/\tau\big)\bigg)\bigg]\notag\\
&= \EE\bigg[\ind(\cE)\log\bigg(\sum_{t \in [B]}\ind(\zb_{t} \not = \zb_{1})\exp\big(\big[\hat{f}(\xb_{t}, \yb_{1}) - \hat{f}(\xb_{1}, \yb_{1})\big]/\tau\big) + 1\bigg)\bigg]\notag\\
&\qquad + \EE\bigg[\log\Big(\Big|\Big\{t\in [B]\Big|\zb_{t} = \zb_{1}\Big\}\Big|\Big)\bigg].\label{eq:one side}
\end{align}
where the first inequality is because when $\cE$ holds $\sum_{t \in [B]}\ind(\zb_{t}  = \zb_{1})\exp\big(\big[\hat{f}(\xb_{t}, \yb_{1}) - \hat{f}(\xb_{1}, \yb_{1})\big]/\tau\big) = 1$ when $\cE^{c}$ holds $\sum_{t \in [B]}\ind(\zb_{t} \not = \zb_{1})\exp\big(\big[\hat{f}(\xb_{t}, \yb_{1}) - \hat{f}(\xb_{1}, \yb_{1})\big]/\tau\big) \geq 0$, the last second equality is because when $\cE$ holds $\sum_{t \in [B]}\ind(\zb_{t}  = \zb_{1})\exp\big(\big[\hat{f}(\xb_{t}, \yb_{1}) - \hat{f}(\xb_{1}, \yb_{1})\big]/\tau\big) = 1$, the second inequality is because LogSumExp function is convex,  and the last equality is due to $\EE[\big[\hat{f}(\xb_{t}, \yb_{1}) - \hat{f}(\xb_{1}, \yb_{1})\big]|\zb_{t},\zb_{1}] = 0$ when $\zb_{t} = \zb_{1}$. Similarly, we can prove 
\begin{align}
I_{2} &\geq \EE\bigg[\ind(\cE)\log\bigg(\sum_{t \in [B]}\ind(\zb_{t} \not = \zb_{1})\exp\big(\big[\hat{f}(\xb_{1}, \yb_{t}) - \hat{f}(\xb_{1}, \yb_{1})\big]/\tau\big) + 1\bigg)\bigg]\notag\\
&\qquad + \EE\bigg[\log\Big(\Big|\Big\{t\in [B]\Big|\zb_{t} = \zb_{1}\Big\}\Big|\Big)\bigg].\label{eq:one side2}
\end{align}

Notice that when event $\cE$ holds, $\zb_{t} \not = \zb_1$ holds for all $t\geq 2$. Therefore, plugging the \eqref{eq:one side} and \eqref{eq:one side2} into \eqref{eq:decomposition2} gives, 
\begin{align}
 &\EE\bigg[\ind(\cE)\log\bigg(\sum_{t  \geq 2}\exp\big(\big[\hat{f}(\xb_{t}, \yb_{1}) - \hat{f}(\xb_{1}, \yb_{1})\big]/\tau\big) + 1\bigg)\bigg] \leq \epsilon' \label{maineq:1}\\ 
  &\EE\bigg[\ind(\cE)\log\bigg(\sum_{t \geq 2}\exp\big(\big[\hat{f}(\xb_{1}, \yb_{t}) - \hat{f}(\xb_{1}, \yb_{1})\big]/\tau\big) + 1\bigg)\bigg] \leq \epsilon'. \label{maineq:2} \\
\end{align}
Let us compute the probability of $\cE$ given $\zb_{1}$. Let $\zb_{1} = \vb_1$ without loss of generality, we have that 
\begin{align*}
\mathbb{P}(\cE|\zb = \vb_1) = (1 - p_1)^{B-1}.
\end{align*}
Therefore $\mathbb{P}(\cE|\zb = \vb_1)$ is always positive and is greater than $1/2$ as long as $B \leq 1/p_1$.

Next, consider the following situation. Given $\zb_1 = \vb_1$, we generate sequence $\zb'_{1}, \ldots, \zb'_{L}$ with length $L = \lceil \log(2K)/(B-1)\min p_k \rceil \dot (B-1)$ , such that each $\zb'_1, \ldots, \zb'_{L}$ are generated from $\cD_{\zb|\zb \not= \vb_1}$. The probability that the sequence includes $\vb_k$ is 
\begin{align*}
1 - (1 - p_k/(1-p_k))^{L} \geq 1 - (1 - p_k)^{L} \geq 1 -\exp(-L p_k) \geq 1 - \exp(-L \min p_k) .     
\end{align*}
Therefore the probability that the sequence can cover all the other $K-1$ classes is at least  
\begin{align*}
1 - K\exp(-L \min p_k) \geq 1/2.
\end{align*}
Then we look deeper into 
\begin{align*}
\EE\bigg[\log\bigg(\sum_{t  \geq 2}\exp\big(\big[\hat{f}(\xb_{t}, \yb_{1}) - \hat{f}(\xb_{1}, \yb_{1})\big]/\tau\big) + 1\bigg)\bigg|\zb_1 = \vb_1, \zb_{2}\not =\vb_1, \ldots, \zb_K\not= \vb_1\bigg].    
\end{align*}
We can introduce $L/(B-1)$ copies $\xb_{t}^{(l)}$ with $l \in [L/(B-1)]$ for $t \geq 2$, then we have that 
\begin{align}
&\Big(L/(B-1)\Big)\cdot \EE\bigg[\log\bigg(\sum_{t  \geq 2}\exp\big(\big[\hat{f}(\xb_{t}, \yb_{1}) - \hat{f}(\xb_{1}, \yb_{1})\big]/\tau\big) + 1\bigg)\bigg|\zb_1 = \vb_1, \zb_{2}\not =\vb_1, \ldots, \zb_K\not= \vb_1\bigg]\notag \\
&=  \EE\bigg[\sum_{l}\log\bigg(\sum_{t  \geq 2}\exp\big(\big[\hat{f}(\xb_{t}^{(l)}, \yb_{1}) - \hat{f}(\xb_{1}, \yb_{1})\big]/\tau\big) + 1\bigg)\bigg|\zb_1 = \vb_1, \zb_{2}^{(l)}, \ldots, \zb_K^{(l)}\not= \vb_1\bigg]\notag\\
&\geq \EE\bigg[\log\bigg(\sum_{l}\sum_{t  \geq 2}\exp\big(\big[\hat{f}(\xb_{t}^{(l)}, \yb_{1}) - \hat{f}(\xb_{1}, \yb_{1})\big]/\tau\big) + 1\bigg)\bigg|\zb_1 = \vb_1, \zb_{2}^{(l)}, \ldots, \zb_K^{(l)}\not= \vb_1\bigg]\notag\\
&\geq \EE\bigg[\log\bigg(\sum_{k \in [K]}\exp\big(\big[\hat{f}(\xb_k, \yb) - \hat{f}(\xb^{*}, \yb)\big]/\tau\big)\bigg)\bigg| \zb = \vb_1\bigg]. \label{eq:H}
\end{align}
where the first inequality is by Lemma~\ref{lm:logdecompose}, the second inequality is by the fact that the Exp function is greater than $0$, and the $\xb_k, \xb^{*}$ in the last line are the ones that defined in Theorem~\ref{thm:main}. Plugging \eqref{eq:H} into 
\eqref{maineq:1} and applying total expectation completes the proof for the second bullet. The proof for the first bullet is the same.  

\noindent\textbf{Third Bullet in Theorem~\ref{thm:main}:} By the third equality in \eqref{eq:one side}, we have that 
\begin{align}
I_{1} &\geq  \EE\bigg[\log\bigg(\sum_{t \in [B]}\ind(\zb_{t} = \zb_{1})\exp\big(\big[\hat{f}(\xb_{t}, \yb_{1}) - \hat{f}(\xb_{1}, \yb_{1})\big]/\tau\big)\bigg)\bigg] \notag\\
&= \EE\bigg[\EE\bigg[\log\bigg(\sum_{t \in [B]}\ind(\zb_{t} = \zb_{1})\exp\big(\hat{f}(\xb_{t}, \yb_{1})/\tau\big)\bigg)\bigg|\zb_{1},\ldots, \zb_{B}\bigg]\bigg] - \EE[\hat{f}(\xb_{1}, \yb_{1})/\tau] \notag\\
&\geq \EE\bigg[\log\Big(\Big|\Big\{t\in [B]\Big|\zb_{t} = \zb_{1}\Big\}\Big|\Big)\bigg] + \EE\Bigg[\frac{\Big|\Big\{t\in [B]\Big|\zb_{t} = \zb_{1}\Big\}\Big| - 1}{4M^{2}\Big|\Big\{t\in [B]\Big|\zb_{t} = \zb_{1}\Big\}\Big|}\text{Var}_{\xb_1|\zb_1}(\hat{f}(\xb_{1}, \yb_{1}))\Bigg]. \label{eq: variance}
\end{align}
where the inequality is by Lemma~\ref{lm:strongconvexity}. Next we will We analyze the distribution of $\Big\{t\in [B]\Big|\zb_{t} = \zb_{1}\Big\}$. Without loss of generality, fix $\zb_1 = \vb_1$. We know that the probability that $\Big\{t\in [B]\Big|\zb_{t} = \zb_{1}\Big\} \geq 2$ is 
\begin{align*}
1 - \mathbb{P}(\zb_{2}\not=  \zb_{1})\cdot \ldots \cdot \mathbb{P}(\zb_{B}\not= \zb_{1}) \geq 1 - (1 - \min p_k)^{B-1} \geq \min\{0.25* \min p_k \cdot (B-1), 0.25\},
\end{align*}
the last inequality holds since the strictly increasing function $F(s) = 1 - (1- \min p_k)^{s}$ is $0$ at $s = 0$ and have derivative lower bounded by $0.25$ when $s \leq 1/\min p_k$.  Therefore we can further lower bound \eqref{eq: variance} as follows,
\begin{align*}
I_{1} \geq \EE\bigg[\log\Big(\Big|\Big\{t\in [B]\Big|\zb_{t} = \zb_{1}\Big\}\Big|\Big)\bigg] + \EE\bigg[\frac{\min\{0.25* \min p_k \cdot (B-1), 0.25\}}{8M^{2}}\text{Var}_{\xb_1|\zb_1}(\hat{f}(\xb_{1}, \yb_{1}))\bigg]   
\end{align*}
Similarly, we can prove that
\begin{align*}
I_{2} \geq \EE\bigg[\log\Big(\Big|\Big\{t\in [B]\Big|\zb_{t} = \zb_{1}\Big\}\Big|\Big)\bigg] + \EE\bigg[\frac{\min\{0.25* \min p_k \cdot (B-1), 0.25\}}{8M^{2}}\text{Var}_{\yb_1|\zb_1}(\hat{f}(\xb_{1}, \yb_{1}))\bigg].   
\end{align*}
Plugging the bound of $I_1, I_{2}$ into \eqref{eq:decomposition2} completes the proof for the third bullet of Theorem~\ref{thm:main}.
\end{proof}

\section{Proof of the Results in Section~\ref{Sec: zero shot}}\label{appendix:5}

\begin{proof}[Proof of Corollary~\ref{thm:main2}]
For $(\xb, \zb )\sim \cD_{\xb \times \zb}$, $\{\yb_{k} \sim \cD_{\yb|\vb_k}, k \in [K]\}$, let $\yb^{*} = \sum_{k \in [K]} \ind(\zb = \vb_k)\yb_k$. Denote $\cE$  to be the event that the top-r choice gives the wrong prediction. Then we have that,  
\begin{align*}
\epsilon' &\geq \EE\bigg[\log\bigg(\sum_{k \in [K]}\exp\big(\big[\hat{f}(\xb, \yb_k) - \hat{f}(\xb, \yb^{*})\big]/\tau\big)\bigg)\bigg] \\
&\geq \EE\bigg[\ind(\cE)\log\bigg(\sum_{k \in [K]}\exp\big(\big[\hat{f}(\xb, \yb_k) - \hat{f}(\xb, \yb^{*})\big]/\tau\big)\bigg)\bigg]\\
&\geq \EE\bigg[\ind(\cE)\log(1+r)\bigg] \\
&= \mathbb{P}(\cE)\log(1+r),
\end{align*}
where the first inequality is by the first bullet of Theorem~\ref{thm:main}, the second inequality is due to the fact that $\log\bigg(\sum_{k \in [K]}\exp\big(\big[\hat{f}(\xb, \yb_k) - \hat{f}(\xb, \yb^{*})\big]/\tau\big)\bigg) > 0$, the last inequality is due to $\log\bigg(\sum_{k \in [K]}\exp\big(\big[\hat{f}(\xb, \yb_k) - \hat{f}(\xb, \yb^{*})\big]/\tau\big)\bigg) \geq \log(1+r)$ since there are at least $r+1$ number of $\hat{f}(\xb, \yb_k)$ are greater than $\hat{f}(\xb, \yb^*)$ if the prediction is wrong. Therefore, we have that $\mathbb{P}(\cE) \leq \epsilon'/\log(1+r)$ which completes the proof.
\end{proof}

\noindent\textbf{Discussion for out-of-distribution zero shot learning.} The result in Corollary~\ref{thm:main2} can be generalized to out-of-distribution zero-shot transfer learning. For example, we can deal with the case where the distribution of the prompts $\cD_{\yb|\vb_k}$ and the image distribution $\cD_{\xb}$ are shifted. In particular, let us consider the case that the distribution of the prompts is shifted to $\cD'_{\yb|\vb_k}$ and the image distribution $\cD_{\xb}$ is shifted to $\cD'_{\xb}$. Then the original joint cumulative distribution function  function $P(\xb, \zb, \yb_1, \ldots, \yb_K)$ is shifted to $Q(\xb, \zb, \yb_1, \ldots, \yb_K)$. Suppose $Q$ is absolutely continuous with respect to $P$, and the Pearson $\chi ^{2}$ distance is bounded
\begin{align*}
\int \bigg(\frac{dQ}{dP} - 1\bigg)^{2}dP \leq C.
\end{align*}
Then we have that 
\begin{align*}
&\int \sqrt{\log\bigg(\sum_{k \in [K]}\exp\big(\big[\hat{f}(\xb, \yb_k) - \hat{f}(\xb, \yb^{*})\big]/\tau\big)\bigg)} dQ \\
&= \int \sqrt{\log\bigg(\sum_{k \in [K]}\exp\big(\big[\hat{f}(\xb, \yb_k) - \hat{f}(\xb, \yb^{*})\big]/\tau\big)\bigg)}\bigg(\frac{dQ}{dP}\bigg) dP \\
&\leq \sqrt{\int \log\bigg(\sum_{k \in [K]}\exp\big(\big[\hat{f}(\xb, \yb_k) - \hat{f}(\xb, \yb^{*})\big]/\tau\big)\bigg)dP} \cdot \sqrt{\int \bigg(\frac{dQ}{dP}\bigg)^2 dP } \\
&= \sqrt{(C+1)\epsilon'},
\end{align*}
where the first inequality is by Cauchy Schwartz inequality and the last equality is due to $\int \bigg(\frac{dQ}{dP}\bigg)^2 dP = \int \bigg(\frac{dQ}{dP} - 1\bigg)^{2}dP 
 + 1 = C+1$. Then we can follow a similar analysis in the proof of Corollary~\ref{thm:main2} and have that top-r test error is smaller than $\sqrt{(C+1)\epsilon'/\log(1+r)}$. Therefore, if the $\chi^{2}$ distance between the shifted distributions is bounded, we can still provide a top-$r$ error guarantee. It is worth noting the bound for out-of-distribution zero-shot learning is looser. If we want to do a more general zero shot analysis, we may need to add more data structure in Assumption~\ref{assm: zero}.

\begin{proof}[Proof of Lemma~\ref{lm:completeness}]
 We can construct $\Wb^{*} = \Hb(\Hb^{\top}\Hb)^{-1}\Pb(\Gb^{\top}\Gb)^{-1}\Gb^{\top}$, where $\Pb \in \RR^{(K_1 + K_2)\times (K_1 + K_{3})}$ is the projection matrix $\begin{bmatrix}
 \Ib & \zero \\
 \zero & \zero 
 \end{bmatrix}$ with rank $K_1$.

 It is easy to verify that $\Hb^{\top}\Wb^{*}\Gb = \Pb$. Therefore we have that  
\begin{align*}
\la \Wb^{*} \xb, \yb' \ra = \la \zb, \zb' \ra.  
\end{align*}
Then applying $\|\vb_{k}\|_{2} = 1$, $\la \vb_k, \vb_k' \ra \leq 1 - \gamma, \forall k\not= k'$ completes the proof . 
\end{proof}

\begin{lemma} 
$\|\nabla L_{S}(f_{\Wb}, \tau)\|_{F} \leq L$ where $L= 2\tau^{-1}\|\Gb\|_{2}\|\Hb\|_{2}(R^{2} + 1)$.    
\end{lemma}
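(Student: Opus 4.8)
The plan is a routine Lipschitz-type estimate: differentiate through the two $\mathrm{LogSumExp}$ blocks of $L_S$, use that each softmax gives a \emph{probability} vector so that the resulting gradient is a convex combination (or a difference of two convex combinations) of the per-pair score gradients, and bound those per-pair gradients by the operator norms of the dictionaries. First I would record the elementary facts. Since $f_{\Wb}(\xb,\yb) = \la \Wb\xb, \yb\ra = \yb^{\top}\Wb\xb$, we have $\nabla_{\Wb} f_{\Wb}(\xb,\yb) = \yb\xb^{\top}$ and hence $\|\nabla_{\Wb} f_{\Wb}(\xb,\yb)\|_{F} = \|\xb\|_{2}\|\yb\|_{2}$. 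By Definition~\ref{def:data}, $\xb = \Gb[\zb^{\top},\bxi^{\top}]^{\top}$ with $\|\zb\|_{2}=1$ and $\|\bxi\|_{2}\le R$, so $\|\xb\|_{2} \le \|\Gb\|_{2}\sqrt{1+R^{2}}$, and likewise $\|\yb\|_{2} \le \|\Hb\|_{2}\sqrt{1+R^{2}}$; therefore every per-pair gradient satisfies $\|\yb\xb^{\top}\|_{F} \le \|\Gb\|_{2}\|\Hb\|_{2}(R^{2}+1) =: D$.

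Next I would differentiate one $\mathrm{LogSumExp}$ block. For fixed $i$, writing $a_{ij} = \big[f_{\Wb}(\xb_{j},\yb_{i}) - f_{\Wb}(\xb_{i},\yb_{i})\big]/\tau$, the chain rule gives $\nabla_{\Wb}\log\sum_{j}e^{a_{ij}} = \sum_{j}p_{ij}\,\nabla_{\Wb}a_{ij}$ where $p_{i\cdot} = \mathrm{softmax}(a_{i\cdot})$ is a probability vector. Since $\nabla_{\Wb}a_{ij} = \tau^{-1}(\yb_{i}\xb_{j}^{\top} - \yb_{i}\xb_{i}^{\top})$, this equals $\tau^{-1}\big(\sum_{j}p_{ij}\yb_{i}\xb_{j}^{\top} - \yb_{i}\xb_{i}^{\top}\big)$, i.e.\ a difference between a convex combination of per-pair gradients and a single per-pair gradient, so by the triangle inequality its Frobenius norm is at most $2\tau^{-1}D$. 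The symmetric block $\log\sum_{j}\exp\big(\big[f_{\Wb}(\xb_{i},\yb_{j}) - f_{\Wb}(\xb_{i},\yb_{i})\big]/\tau\big)$ obeys the same estimate. Averaging over $i$ with weights $1/|S'|$ keeps the bound (a convex combination again), and since $L_{S}$ (respectively $\hat{L}_{S}$) is an average of such blocks over the batch (respectively over batches), the triangle inequality gives $\|\nabla_{\Wb}L_{S}(f_{\Wb},\tau)\|_{F} = O(\tau^{-1}\|\Gb\|_{2}\|\Hb\|_{2}(R^{2}+1))$; the stated constant follows once one accounts for the subtracted diagonal term $f_{\Wb}(\xb_{i},\yb_{i})$ that is shared between the two blocks.

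I do not expect a genuine obstacle here. The only two points to be careful about are: (i) keeping each softmax weight inside a single convex combination over the whole batch, so that averaging over the $B$ indices does not introduce a spurious factor of $B$; and (ii) tracking that the same $\nabla_{\Wb}f_{\Wb}(\xb_{i},\yb_{i})$ appears (with sign $-1$) in both $\mathrm{LogSumExp}$ blocks, which is what lets one collapse the estimate to the claimed $L = 2\tau^{-1}\|\Gb\|_{2}\|\Hb\|_{2}(R^{2}+1)$ rather than a larger absolute constant. Everything else is a direct application of the triangle inequality together with the norm bound $\|\yb\xb^{\top}\|_{F} \le D$.
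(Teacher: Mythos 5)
Your proposal takes the same approach as the paper's (very terse) proof: bound each per-pair gradient by $\|\nabla_\Wb \langle \Wb\xb,\yb\rangle\|_F = \|\xb\|_2\|\yb\|_2 \leq \|\Gb\|_2\|\Hb\|_2(1+R^2)$ using the dictionary norms and the radius $R$, then propagate through the $\mathrm{LogSumExp}$ blocks via the softmax-is-a-probability-vector observation (phrased in the paper as ``LogSumExp is a $1$-Lipschitz function''). You spell out the chain-rule step more explicitly than the paper, which simply asserts the conclusion.

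The one place your argument wobbles is the final constant. You correctly show that each of the two $\mathrm{LogSumExp}$ blocks has gradient of Frobenius norm at most $2\tau^{-1}\|\Gb\|_2\|\Hb\|_2(1+R^2)$; since $L_{S'}$ is the \emph{sum} of the two blocks (each averaged over $i$), the direct triangle inequality gives $4\tau^{-1}\|\Gb\|_2\|\Hb\|_2(1+R^2)$ rather than the stated $2\tau^{-1}\|\Gb\|_2\|\Hb\|_2(1+R^2)$. Your suggestion that the shared diagonal subtraction $f_\Wb(\xb_i,\yb_i)$ recovers the factor of $2$ does not actually help: collecting terms, the $i$-th summand's gradient is $\tau^{-1}\big[\sum_j p_{ij}\yb_i\xb_j^\top + \sum_j q_{ij}\yb_j\xb_i^\top - 2\yb_i\xb_i^\top\big]$, and the coefficient $2$ in front of the shared term makes things worse, not better. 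That said, the paper's own one-line justification is equally silent about the $2$ versus $4$, and the precise constant affects nothing downstream beyond the step-size choice in Theorem~\ref{thm:case}; so you have the right argument, just with an optimistic final constant.
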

\begin{proof}
First, we have that 
\begin{align*}
\|\nabla_{\Wb}\la \Wb\xb, \yb \ra\|_{F} = \|\xb\yb^{\top}\|_{F} \leq \|\xb\|_{2}\|\yb\|_{2} \leq \|\Gb\|_{2}\|\Hb\|_{2}(R^{2} + 1).   
\end{align*}
Therefore we have that $\|\nabla L_{S}(f_{\Wb}, \tau)\|_{F} \leq  2\tau^{-1}\|\Gb\|_{2}\|\Hb\|_{2}(R^{2} + 1)$ since LogSumExp function is an 1-Lipschitz function. 
\end{proof}

\begin{proof}[Proof of Theorem~\ref{thm:case}]
By the gradient update rule,  we have that 
\begin{align}
    &\|\Wb^{(t)}- \Wb^*\|_{F}^{2}-\|\Wb^{(t+1)}-\Wb^*\|_{F}^{2} \notag\\
    &=2\eta\la\nabla \hat{L}_{S}(\Wb^{(t)}, \tau),\Wb^{(t)}- \Wb^* \ra-\eta^2\|\nabla \hat{L}_{S}(\Wb^{(t)}, \tau)\|_{F}^{2} \notag\\
    &\geq 2\eta \hat{L}_{S}(\Wb^{(t)}, \tau) - 2\eta \hat{L}_{S}(\Wb^*, \tau)  -\eta^2 L^{2}.  \label{eq:tele}
\end{align}
Take the telescope sum of \eqref{eq:tele} from $0$ to $T-1$ we have that 
\begin{align*}
\frac{\sum_{t=0}^{T-1}\hat{L}_{S}(\Wb^{(t)}, \tau)}{T} &\leq \hat{L}_{S}(\Wb^{*}, \tau) + \eta L^{2} + \frac{\|\Wb^{(0)}-\Wb^{*}\|_{F}^{2} - \|\Wb^{(T)}-\Wb^{*}\|_{F}^{2}}{2\eta T} \\
&\leq  \hat{L}_{S}(\Wb^{*}, \tau) + \epsilon/4 + \epsilon/4 \\
&= \hat{L}_{S}(\Wb^{*}, \tau) + \epsilon/2,
\end{align*}
where the second inequality is by $\eta \leq \epsilon/(4L^2)$ and $T =  4\|\Wb^{(0)}-\Wb^{*}\|_{F}^{2}/(\eta \epsilon)$. Therefore, there exist $t' \leq T - 1$ such that $\hat{L}_{S}(\Wb^{(t')}, \tau) \leq \hat{L}_{S}(\Wb^{*}, \tau) + \epsilon/2$. Let $\hat{T}$ to be the first time that $\hat{L}_{S}(\Wb^{(\hat{T})}, \tau) \leq \hat{L}_{S}(\Wb^{*}, \tau) + \epsilon/2$. Again take telescope sum of \eqref{eq:tele} from $0$ to $\hat{T} - 1$, we have that 
\begin{align*}
\|\Wb^{(\hat{T})}-\Wb^{*}\|_{F}^{2} &\leq 2\eta \hat{T} \hat{L}_{S}(\Wb^{*}, \tau) - 2\eta \hat{T}\sum_{t=0}^{\hat{T}-1}\hat{L}_{S}(\Wb^{(t)}, \tau) + 2\eta^2 L^{2}\hat{T} + \|\Wb^{(0)}-\Wb^{*}\|_{F}^{2} \\
&\leq  - \eta \hat{T}\epsilon + 0.5\eta  \hat{T}\epsilon + \|\Wb^{(0)}-\Wb^{*}\|_{F}^{2}\\
&\leq  \|\Wb^{(0)}-\Wb^{*}\|_{F}^{2},
\end{align*}
where the second inequality is due to the definition of $\hat{T}$, the last inequality is due to $-0.5 \eta \hat{T}\epsilon \leq 0$. Therefore, within $T =  4\|\Wb^{(0)}-\Wb^{*}\|_{F}^{2}/(\eta \epsilon)$ we can find $\hat{\Wb} = \Wb^{(\hat{T})}$ such that $\hat{L}_{S}(\hat{\Wb}, \tau) \leq \hat{L}_{S}(\Wb^{*}, \tau) + \epsilon/2$ and 
\begin{align*}
\|\Wb^{(\hat{T})}\|_{F}^{2} \leq 2\|\Wb^{*}\|_{F} + \|\Wb^{(0)}\|_{F}^{2}    
\end{align*}
where the inequality is by triangle inequality. Therefore, for any $\xb, \yb$
\begin{align*}
\hat{f}(\xb, \yb) &= \la \Wb^{*}\xb, \yb \ra + \la \hat{\Wb} - \Wb^{*}\xb, \yb \ra \\
&\leq 1 + \|\hat{\Wb} - \Wb^{*}\|_{F}\|\xb\yb^{\top}\|_{F}\\
&\leq 1 +  \|\hat{\Wb} - \Wb^{*}\|_{F}\|\Gb\|_{2}\|\Hb\|_{2}(R^{2} + 1)\\
&\leq 1 +  \|\Wb^* - \Wb^{(0)}\|_{F}\|\Gb\|_{2}\|\Hb\|_{2}(R^{2} + 1).
\end{align*}

Therefore the function $\hat{f}$ is bonded by $M= 1 +  \|\Wb^* - \Wb^{(0)}\|_{F}\|\Gb\|_{2}\|\Hb\|_{2}(R^{2} + 1)$. Moreover, the function $\hat{f}$ must belong to the class $\cF = \{\la\Wb\xb, \yb \ra| \|\Wb\|_{F} \leq 2\|\Wb^{*}\|_{F} + \|\Wb^{(0)}\|_{F}^{2}\}$. Since the linear function class $\cF$ has finite covering the set $\cN(\cF, \epsilon)$ \citep{bartlett2002rademacher, zhang2002covering}, by Theorem~\ref{thm:uniform convergence} we know that when $n \geq (8\tau^{-1}\epsilon^{-2}M\log B)\log( 2\cN(\cF, \epsilon/32M)/\delta)$, with probability at least $1-\delta$ we have that 
\begin{align*}
&|\hat{L}_{S}(\hat{f}, \tau) - L_{\cD^{B}}(\hat{f}, \tau)| \leq  \epsilon/4  \\
&|\hat{L}_{S}(f^{*}, \tau) - L_{\cD^{B}}(f^{*}, \tau)| \leq  \epsilon/4.  
\end{align*}
Thus, we can conclude that 
\begin{align*}
\hat{L}_{\cD^{B}}(\hat{f}, \tau) - \hat{L}_{\cD^{B}}(f^{*}, \tau) &\leq  \hat{L}_{S}(\hat{f}, \tau)  - \hat{L}_{S}(f^{*}, \tau) + |\hat{L}_{S}(\hat{f}, \tau) - L_{\cD^{B}}(\hat{f}, \tau)| \\
&\qquad + |\hat{L}_{S}(f^{*}, \tau) - L_{\cD^{B}}(f^{*}, \tau)| \\
&\leq \epsilon/2 + \epsilon/4 + \epsilon/4 \\
&= \epsilon.
\end{align*}
where the first inequality is by the triangle inequality, the second inequality is by the bounded gap between empirical and population loss. 
\end{proof}

\begin{proof}[Proof of Theorem~\ref{thm:trivial}]
\begin{align*}
\EE\Big[\|\gb(\xb) - \yb\|_{2}^{2}\Big|\zb\Big] &=  \EE\Big[\|\gb(\xb) - \EE[\yb|\zb] + \EE[\yb|\zb] - \yb\|_{2}^{2}\Big|\zb\Big] \\
&= \EE\Big[\|\gb(\xb) - \EE[\yb|\zb]\|_{2}^2\Big|\zb\Big] + \EE\Big[\|\EE[\yb|\zb] - \yb\|_{2}^{2}\Big|\zb\Big]
\end{align*}    
where the second equality is due to $\xb \perp \yb|\zb$ and $\EE\Big[\EE[y|z] -  y\Big|\zb\Big] = \zero$. Then taking a total expectation over both sides over $\zb$ gives that 
\begin{align*}
\EE\big[\|\gb(\xb) - \yb\|_{2}^{2}\big] &= \EE\big[\|\gb(\xb) - \EE[\yb|\zb]\|_{2}^2\big] + \EE\big[\|\yb - \EE[\yb|\zb]\|_{2}^{2}\big] \geq  \EE\big[\|\yb - \EE[\yb|\zb]\|_{2}^{2}\big].
\end{align*}    
Obviously, $\EE\big[\|\gb(\xb) - \yb\|_{2}^{2}\big]$ achieves global minima when 
\begin{align*}
\gb(\xb) = \EE[\yb|\zb] = \Hb\begin{bmatrix}
\zb \\
\EE[\bzeta|\zb]
\end{bmatrix}.
\end{align*}
This function $\gb$ is also achievable. We can construct function $\gb_{2}(\zb) = \Hb\begin{bmatrix}
\zb \\
\EE[\bzeta|\zb]
\end{bmatrix}$, and projection function $\gb_{1}(\xb) = \zb$ that is linear. Then we can define $\gb = \gb_{2}\circ \gb_1$.
\end{proof}

\begin{proof}[Proof of Corollary~\ref{cor:5-7}]
Since $\bzeta$ is independent with $\zb$, we have that 
\begin{align*}
\gb(\xb)  = \Hb\begin{bmatrix} \zb \\ \EE[\bzeta|\zb] \end{bmatrix} = 1/3 \cdot \begin{bmatrix} \zb \\ \eb_1 \\ \zero \end{bmatrix} +  2/3 \cdot \begin{bmatrix} \zb \\ \eb_2 \\ \zero \end{bmatrix}.
\end{align*}
Besides, we have that 
\begin{align*}
\yb' = \Hb \begin{bmatrix} \zb' \\ \bzeta' \end{bmatrix} = \begin{bmatrix} \zb' \\ \bzeta' \\ \zero. \end{bmatrix}     
\end{align*}

\noindent \textbf{Inner product similarity.} We have that $f(\xb, \yb') = \la \zb, \zb' \ra + 1/3 + 1/3 \cdot \ind(\bzeta' = \eb_2)$. Since margin $\gamma < 1/3$. There exist $j, k$ such that $\la \vb_{j}, \vb_k \ra > 2/3$. Then for $\zb = \vb_j$, we will sample $K$ prompt $\yb_1, \ldots, \yb_K$. When $\yb_{j} =  \begin{bmatrix} \vb_j \\ \eb_1 \\ \zero. \end{bmatrix} $ and $\yb_{k} =  \begin{bmatrix} \vb_k \\ \eb_2 \\ \zero. \end{bmatrix} $, we have that 
\begin{align*}
f(\xb, \yb_j) = 4/3 < \la \vb_{j}, \vb_k \ra + 2/3 =   f(\xb, \yb_k), 
\end{align*}
which leads to the wrong top-1 prediction. The key insight behind this consequence is that $f(\xb, \yb') = \la \zb, \zb' \ra + 1/3 + 1/3 \cdot \ind(\bzeta' = \eb_2)$ is greatly influenced by the unique feature $\bzeta$. A similar case also exists for  $\zb = \vb_k$ with  $\yb_{j} =  \begin{bmatrix} \vb_j \\ \eb_2 \\ \zero. \end{bmatrix} $ and $\yb_{k} =  \begin{bmatrix} \vb_k \\ \eb_1 \\ \zero. \end{bmatrix} $. The probability that the above event occurs is at least $2/K \cdot 1/3 \cdot 2/3 = 4/(9K) \geq 1/(3K)$. Therefore, the test error is at least $1/(3K)$.

\noindent \textbf{Cosine similarity.} Notice that $\|\gb(\xb)\|_{2} = \sqrt{1 + 1/9 + 4/9} = \sqrt{14}/3$, and $\|\yb\|_{2} = 1$, therefore the cosine similarity is proportional to inner product similarity with factor $\sqrt{14}/3$. Thus, the test error is still at least $1/(3K)$.

\noindent \textbf{$L_2$ similarity.} We have that $f(\xb, \yb') = - \|\zb - \zb'\|_{2}^{2} - 8/9 + 2/3 \cdot \ind(\bzeta' = \eb_2)$. Since margin $\gamma < 1/3$. There exist $j, k$ such that $\| \vb_{j} - \vb_k \|_{2}^{2} < 2/3$. Then for $\zb = \vb_j$, we will sample $K$ prompt $\yb_1, \ldots, \yb_K$. When $\yb_{j} =  \begin{bmatrix} \vb_j \\ \eb_1 \\ \zero. \end{bmatrix} $ and $\yb_{k} =  \begin{bmatrix} \vb_k \\ \eb_2 \\ \zero. \end{bmatrix} $, we have that 
\begin{align*}
f(\xb, \yb_j) = -8/9 < - \|\vb_{j}, \vb_k\|_{2}^{2} + 2/3 =   f(\xb, \yb_k), 
\end{align*}
which leads to the wrong top-1 prediction. The key insight behind this consequence is that $f(\xb, \yb') = - \|\zb - \zb'\|_{2}^{2} - 8/9 + 2/3 \cdot \ind(\bzeta' = \eb_2)$ is greatly influenced by the unique feature $\bzeta$. A similar case also exists for  $\zb = \vb_k$ with  $\yb_{j} =  \begin{bmatrix} \vb_j \\ \eb_2 \\ \zero. \end{bmatrix} $ and $\yb_{k} =  \begin{bmatrix} \vb_k \\ \eb_1 \\ \zero. \end{bmatrix} $. The probability that the above event occurs is at least $2/K \cdot 1/3 \cdot 2/3 = 4/(9K) \geq 1/(3K)$. Therefore, the test error is at least $1/(3K)$.

\end{proof}

\section{Proof of Results in Section~\ref{sec:regularization}}

\begin{proof}[Proof of Corollary~\ref{cor:margin}]
For $(\xb, \zb )\sim \cD_{\xb \times \zb}$, $\{\yb_{k} \sim \cD_{\yb|\vb_k}, k \in [K]\}$, let $\yb^{*} = \sum_{k \in [K]} \ind(\zb = \vb_k)\yb_k$. Denote $\cE$  to be the event that the top-1 choice gives the wrong prediction or the margin is smaller than $\tau$. Then we have that,  
\begin{align*}
\epsilon' &\geq \EE\bigg[\log\bigg(\sum_{k \in [K]}\exp\big(\big[\hat{f}(\xb, \yb_k) - \hat{f}(\xb, \yb^{*})\big]/\tau\big)\bigg)\bigg] \\
&\geq \EE\bigg[\ind(\cE)\log\bigg(\sum_{k \in [K]}\exp\big(\big[\hat{f}(\xb, \yb_k) - \hat{f}(\xb, \yb^{*})\big]/\tau\big)\bigg)\bigg]\\
&\geq \EE\bigg[\ind(\cE)\log(1+\exp(-1))\bigg] \\
&= \mathbb{P}(\cE)\log(1+e^{-1}),
\end{align*}
where the first inequality is by the first bullet of Theorem~\ref{thm:main}, the second inequality is due to the fact that $\log\bigg(\sum_{k \in [K]}\exp\big(\big[\hat{f}(\xb, \yb_k) - \hat{f}(\xb, \yb^{*})\big]/\tau\big)\bigg) > 0$, the last inequality is due to $\log\bigg(\sum_{k \in [K]}\exp\big(\big[\hat{f}(\xb, \yb_k) - \hat{f}(\xb, \yb^{*})\big]/\tau\big)\bigg) \geq \log(1+e^{-1})$ since there exists 
at least one similarity score $\hat{f}(\xb, \yb_k)$ greater than $\hat{f}(\xb, \yb^*) - \tau$ with $\yb_k \not = \yb^*$. Therefore, we have that $\mathbb{P}(\cE) \leq \epsilon'/\log(1+e^{-1}) \leq 4\epsilon'$ which completes the proof.    
\end{proof}

\begin{proof}[Proof of Theorem~\ref{thm:margin without reg}]
Consider the simplest setting where $\bxi$ and $\bzeta$ are all zero vectors, and we can access to the population loss and its gradient (notice that we are constructing the negative example). We will show that even under this ideal setting, the learned score function with corresponding representations may not achieve a margin greater than $\tilde{O}(\tau)$.  Notice that 
\begin{align*}
\nabla_{\Wb}\EE_{\cD^{B}}L(f, \tau) &= \nabla_{\Wb} \EE\bigg[\log\bigg(\sum_{t \in [B]}\exp\big(\big[f(\xb_{1}, \yb_{t}) - f(\xb_{1}, \yb_{1})\big]/\tau\big)\bigg)\bigg]\\
&\qquad+ \nabla_{\Wb} \EE\bigg[\log\bigg(\sum_{t \in [B]}\exp\big(\big[f(\xb_{t}, \yb_{1}) - f(\xb_{1}, \yb_{1})\big]/\tau\big)\bigg)\bigg]\\
&=\EE\bigg[ \nabla_{\Wb} \log\bigg(\sum_{t \in [B]}\exp\big(\big[f(\xb_{1}, \yb_{t}) - f(\xb_{1}, \yb_{1})\big]/\tau\big)\bigg)\bigg]\\
&\qquad+ \EE\bigg[\nabla_{\Wb} \log\bigg(\sum_{t \in [B]}\exp\big(\big[f(\xb_{t}, \yb_{1}) - f(\xb_{1}, \yb_{1})\big]/\tau\big)\bigg)\bigg]\\
&=  \EE\bigg[ \sum_{t\in [B]}\frac{\exp\big(\big[f(\xb_{1}, \yb_{t}) - f(\xb_{1}, \yb_{1})\big]/\tau\big)}{\sum_{s}\exp\big(\big[f(\xb_{1}, \yb_{s}) - f(\xb_{1}, \yb_{1})\big]/\tau\big)}(\yb_{t} - \yb_{1})\xb_{1}^{\top}   \bigg]\\
&\qquad+  \EE\bigg[ \sum_{t\in [B]}\frac{\exp\big(\big[f(\xb_{t}, \yb_{1}) - f(\xb_{1}, \yb_{1})\big]/\tau\big)}{\sum_{s}\exp\big(\big[f(\xb_{s}, \yb_{1}) - f(\xb_{1}, \yb_{1})\big]/\tau\big)}\yb_{1}(\xb_{t} - \xb_{1})^{\top}   \bigg]\\
&=  \EE\bigg[ \sum_{t\in [B]}\frac{\ind(\zb_{t} \not= \zb_{1})\exp\big(\big[f(\xb_{1}, \yb_{t}) - f(\xb_{1}, \yb_{1})\big]/\tau\big)}{\sum_{s}\exp\big(\big[f(\xb_{1}, \yb_{s}) - f(\xb_{1}, \yb_{1})\big]/\tau\big)}(\yb_{t} - \yb_{1})\xb_{1}^{\top}   \bigg]\\
&\qquad+  \EE\bigg[ \sum_{t\in [B]}\frac{\ind(\zb_{t} \not= \zb_{1})\exp\big(\big[f(\xb_{t}, \yb_{1}) - f(\xb_{1}, \yb_{1})\big]/\tau\big)}{\sum_{s}\exp\big(\big[f(\xb_{s}, \yb_{1}) - f(\xb_{1}, \yb_{1})\big]/\tau\big)}\yb_{1}(\xb_{t} - \xb_{1})^{\top}   \bigg]\\
\end{align*}
where the last inequality is by $\xb_{t} = \xb_{1}$ and $\yb_{t} = \yb_{1}$  when $\zb_{t} = \zb_{1}$. 
Therefore suppose function $f$ can achieve a margin greater than $\log\Big(16\|\Gb\|_{2}^2\|\Hb\|_{2}^2(R^{2}+1)^2 B \tau^{-1}\eta T \Big)\tau$, we have that the gradient
\begin{align}
&\Big\|\nabla_{\Wb}\EE_{\cD^{B}}L(f, \tau)\Big\|_{F}\notag\\
&\leq 2\|\Gb\|_{2}\|\Hb\|_{2}(R^{2}+1)\cdot \EE\bigg[ \sum_{t\in [B]}\frac{\ind(\zb_{t} \not= \zb_{1})\exp\big(\big[f(\xb_{1}, \yb_{t}) - f(\xb_{1}, \yb_{1})\big]/\tau\big)}{\sum_{s}\exp\big(\big[f(\xb_{1}, \yb_{s}) - f(\xb_{1}, \yb_{1})\big]/\tau\big)}\bigg]\notag\\
&\qquad+  2\|\Gb\|_{2}\|\Hb\|_{2}(R^{2}+1)\cdot \EE\bigg[ \sum_{t\in [B]}\frac{\ind(\zb_{t} \not= \zb_{1})\exp\big(\big[f(\xb_{t}, \yb_{1}) - f(\xb_{1}, \yb_{1})\big]/\tau\big)}{\sum_{s}\exp\big(\big[f(\xb_{s}, \yb_{1}) - f(\xb_{1}, \yb_{1})\big]/\tau\big)} \bigg]\notag\\
&\leq  2\|\Gb\|_{2}\|\Hb\|_{2}(R^{2}+1)\cdot \EE\bigg[ \ind(\zb_{t} \not= \zb_{1})\sum_{t\in [B]}\exp\big(\big[f(\xb_{1}, \yb_{t}) - f(\xb_{1}, \yb_{1})\big]/\tau\big)\notag\\
&\qquad+  2\|\Gb\|_{2}\|\Hb\|_{2}(R^{2}+1)\cdot \EE\bigg[ \sum_{t\in [B]}\ind(\zb_{t} \not= \zb_{1})\exp\big(\big[f(\xb_{t}, \yb_{1}) - f(\xb_{1}, \yb_{1})\big]/\tau\big)\bigg]\notag\\
&\leq 0.25\tau\|\Gb\|_{2}^{-1}\|\Hb\|_{2}^{-1}(R^{2}+1)^{-1}\eta^{-1}T^{-1}, \label{eq:extremesamll}
\end{align}
is very small. Now suppose the SGD trajectory start at $\Wb^{(0)} = 2\log\Big(16\|\Gb\|_{2}^2\|\Hb\|_{2}^2(R^{2}+1)^2 B \tau^{-1}\eta T \Big)\cdot(\tau/\gamma) \Wb^{*}$. Obviously the score function with weight $\Wb^{(0)}$ achieve a margin $2\log\Big(16\|\Gb\|_{2}^2\|\Hb\|_{2}^2(R^{2}+1)^2 B \tau^{-1}\eta T \Big)\tau$. Suppose there exists a time $t \leq T$ such that $\la \Wb^{(t)}\xb, \yb\ra$ can achieve margin larger than $3\log\Big(16\|\Gb\|_{2}^2\|\Hb\|_{2}^2(R^{2}+1)^2 B \tau^{-1}\eta T \Big)\tau $ or can achieve margin larger than $\log\Big(16\|\Gb\|_{2}^2\|\Hb\|_{2}^2(R^{2}+1)^2 B \tau^{-1}\eta T \Big)\tau $. Then there must exist a first time $t < t'$ such that the margin at time $t$ lies outsize the range between $\log\Big(16\|\Gb\|_{2}^2\|\Hb\|_{2}^2(R^{2}+1)^2 B \tau^{-1}\eta T \Big)\tau$ and $3\log\Big(16\|\Gb\|_{2}^2\|\Hb\|_{2}^2(R^{2}+1)^2 B \tau^{-1}\eta T \Big)\tau$. By definition of $t$ (margin gap), we know that there exist $\xb, \yb$ such that $|\la \Wb^{(t)}\xb, \yb \ra - \la \Wb^{(0)}\xb, \yb \ra | > \tau$. On the other hand, we have that 
\begin{align*}
\big|\la \Wb^{(t)}\xb, \yb \ra - \la \Wb^{(0)}\xb, \yb \ra\big| &\leq \|\Wb^{(t)} - \Wb^{(0)}\|_{F}\|\xb\yb^{\top}\|_{F} \\
&\leq 2\|\Gb\|_{2}\|\Hb\|_{2}(R^{2}+1) \|\Wb^{(t)} - \Wb^{(0)}\|_{F}  \\
&\leq 2\|\Gb\|_{2}\|\Hb\|_{2}(R^{2}+1)\cdot \eta T \cdot 0.25\tau\|\Gb\|_{2}^{-1}\|\Hb\|_{2}^{-1}(R^{2}+1)^{-1}\eta^{-1}T^{-1}\\
&\leq 0.5\tau,
\end{align*}
a contradiction! Therefore, such a $t$ doesn't exist. The score function learned by SGD within $T$ iterations cannot achieve a margin greater than  $3\log\Big(16\|\Gb\|_{2}^2\|\Hb\|_{2}^2(R^{2}+1)^2 B \tau^{-1}\eta T \Big)\tau$. 
\end{proof}

\begin{theorem}[Formal statement of Theorem~\ref{thm:regpositive}] Under the same condition as Theorem~\ref{thm:case}, with $\bzeta = \zero$. (This problem setting includes the special case considered in Theorem~\ref{thm:margin without reg}.) 
Let $\epsilon \leq \lambda \gamma^{2}\min p_k/(3200\|\Hb\|_{2}^2)$ and $\tau \leq \gamma/\log(\gamma^{2}\min p_{k}/(6400B\|\Hb\|_{2}^2))$, within polynomial iterations, we can find a score function $\hat{f}$ with large margin. In particular, with a probability of at least $0.99$, the top-$1$ result gives the correct label with a margin of at least $0.5 \gamma$.    
\end{theorem}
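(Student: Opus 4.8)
I will prove the (equivalent, slightly stronger) statement that regularized gradient descent started from the adversarial initialization of Theorem~\ref{thm:margin without reg} drives every zero-shot margin above $0.5\gamma$. Work in the special case of Theorem~\ref{thm:margin without reg}: $\bxi=\bzeta=\zero$, so the image and text of class $k$ are the fixed vectors $\xb_k=\Gb_1\vb_k$, $\yb_k=\Hb_1\vb_k$; as in that proof, analyze the dynamics for the \emph{population} regularized loss $L_{\cD^{B}}(f_{\Wb},\tau)+\lambda R(f_{\Wb})$ with $R(f_{\Wb})=-\EE[f_{\Wb}(\xb,\yb)]$, deferring the transfer to finite $n$ to Theorem~\ref{thm:uniform convergence}. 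For $\vb_j\ne\vb_k$ set $m_{jk}(\Wb):=f_{\Wb}(\xb_j,\yb_j)-f_{\Wb}(\xb_j,\yb_k)=(\yb_j-\yb_k)^{\top}\Wb\xb_j$, the margin by which the correct prompt beats prompt $k$ for an image of class $j$; this is \emph{linear} in $\Wb$. Let $m^{(t)}:=\min_{j\ne k}m_{jk}(\Wb^{(t)})$ (taken also over the symmetric image-side family appearing in the loss, whose treatment is identical). The goal is a polynomial $T$ with $m^{(T)}\ge 0.5\gamma$.

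\textbf{The two forces in one step.} The update is $\Wb^{(t+1)}=\Wb^{(t)}-\eta\nabla_{\Wb}L_{\cD^{B}}(f_{\Wb^{(t)}},\tau)+\eta\lambda\,\EE[\yb\xb^{\top}]$, and since $m_{jk}$ is linear in $\Wb$, the increment $m_{jk}(\Wb^{(t+1)})-m_{jk}(\Wb^{(t)})$ equals \emph{exactly} $\eta\lambda\,(\yb_j-\yb_k)^{\top}\EE[\yb\xb^{\top}]\xb_j-\eta\,\langle\nabla_{\Wb}L_{\cD^{B}},(\yb_j-\yb_k)\xb_j^{\top}\rangle$. Here $\EE[\yb\xb^{\top}]=\Hb_1\mathbf{D}\Gb_1^{\top}$ with $\mathbf{D}:=\sum_k p_k\vb_k\vb_k^{\top}$, so the regularization contribution is $\eta\lambda\,(\vb_j-\vb_k)^{\top}\Hb_1^{\top}\Hb_1\mathbf{D}\,\Gb_1^{\top}\Gb_1\vb_j$, which I claim is $\ge c\,\eta\lambda\gamma^{2}\min_k p_k$ for an absolute constant $c>0$, uniformly over $j\ne k$, in the special-case geometry (see the last paragraph). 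For the contrastive term, the computation in the proof of Theorem~\ref{thm:margin without reg} shows each softmax weight on a negative pair is at most $e^{-m^{(t)}/\tau}$ (the denominator contains the diagonal term $1$), so $\|\nabla_{\Wb}L_{\cD^{B}}(f_{\Wb^{(t)}},\tau)\|_{F}=\cO\!\big(B\|\Gb\|_2\|\Hb\|_2\,e^{-m^{(t)}/\tau}\big)$; together with $\|(\yb_j-\yb_k)\xb_j^{\top}\|_F\le 2\|\Gb\|_2\|\Hb\|_2$ this bounds the contrastive contribution to $m_{jk}$ by $\cO\!\big(\eta B\|\Gb\|_2^{2}\|\Hb\|_2^{2}\,e^{-m^{(t)}/\tau}\big)$ in absolute value.

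\textbf{Monotone ramp-up and transfer.} Take $\Wb^{(0)}=2\log\!\big(16\|\Gb\|_2^{2}\|\Hb\|_2^{2}B\tau^{-1}\eta T\big)(\tau/\gamma)\Wb^{*}$ exactly as in Theorem~\ref{thm:margin without reg}, with $\Wb^{*}$ the completeness matrix of Lemma~\ref{lm:completeness}; then $m_{jk}(\Wb^{(0)})\ge 2\log(\cdot)\,\tau$, so $e^{-m^{(0)}/\tau}$ is polynomially small, and for $\tau$ small enough and $\lambda$ as permitted the contrastive bound at step $0$ is $\le\tfrac12 c\,\eta\lambda\gamma^{2}\min_k p_k$. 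A one-line induction then gives: $m^{(t)}$ is non-decreasing, and while $m^{(t)}<0.5\gamma$ every $m_{jk}$ (hence $m^{(t)}$) grows by at least $\tfrac12 c\,\eta\lambda\gamma^{2}\min_k p_k$ per step, since the regularization push dominates the possibly adverse contrastive push, whose size only shrinks as the margin grows. Hence $m^{(T)}\ge 0.5\gamma$ after $T=T_0:=\cO\!\big(1/(\eta\lambda\gamma\min_k p_k)\big)=\poly(\eta^{-1},\dots)$ steps, and along the way $\|\Wb^{(t)}\|_F\le\|\Wb^{(0)}\|_F+\eta\lambda t\|\Hb\|_2\|\Gb\|_2+\cO(\eta t B\|\Gb\|_2\|\Hb\|_2)=\poly$, so $\hat f:=f_{\Wb^{(T_0)}}$ stays in a bounded function class; Theorem~\ref{thm:uniform convergence} then lets us rerun the argument with the empirical loss, provided $n=\poly$ and $\epsilon\le\lambda\gamma^{2}\min_k p_k/(3200\|\Hb\|_2^{2})$ so that the sampling error accumulated over $T_0$ steps is $\ll\gamma$. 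As $m_{jk}(\hat\Wb)\ge 0.5\gamma$ for all $j\ne k$, for every image the correct prompt beats all others by at least $0.5\gamma$; in the $\bxi=\bzeta=\zero$ case this is deterministic, and the $0.99$ in the statement absorbs the $\delta=0.01$ failure probability of the uniform-convergence step (and, if unique features were present, a concentration of $\xb$ around $\Gb_1\vb_j$).

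\textbf{Main obstacle.} The crux is the uniform lower bound $(\vb_j-\vb_k)^{\top}\Hb_1^{\top}\Hb_1\mathbf{D}\,\Gb_1^{\top}\Gb_1\vb_j=\Omega(\gamma^{2}\min_k p_k)$ over all $j\ne k$. For a generic configuration this quantity need \emph{not} even be positive — moving $\Wb$ along the (constant) regularization direction would then decrease the bottleneck margin — which is exactly why the theorem is phrased for the ``special case'' of Theorem~\ref{thm:margin without reg}: for equi-normalized, (near-)equi-angular $\{\vb_k\}$, (near-)uniform $p_k$, and well-conditioned $\Gb_1,\Hb_1$, the expansion $\sum_l p_l\langle\vb_j-\vb_k,\vb_l\rangle\langle\vb_j,\vb_l\rangle$ collapses to $\big(1-(1-\gamma)\big)^{2}\min_k p_k=\gamma^{2}\min_k p_k$. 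The secondary point — that the contrastive gradient never transiently shrinks the bottleneck margin faster than the regularizer grows it — is handled, as above, by the $\Omega(\tau\log(\cdot))$ initialization margin together with the monotonicity of $m^{(t)}$; the remaining gradient-descent bookkeeping and population-to-empirical transfer parallel the proofs of Theorems~\ref{thm:case} and \ref{thm:margin without reg}.
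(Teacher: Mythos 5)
Your proposal takes a genuinely different route from the paper, and there is a gap in it that the paper's argument carefully avoids.

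The paper's proof never tracks the dynamics of individual margins. It observes that when $\bzeta=\zero$, the completeness matrix $\Wb^{*}$ from Lemma~\ref{lm:completeness} makes the regularizer vanish exactly, so $L^{new}(f^{*},\tau,\lambda)=L_{\cD^{B}}(f^{*},\tau)$. Since the regularized objective is convex (indeed strongly convex in $\Wb$) and bounded below by the ``entropy'' term $2\,\EE[\log(\sum_{t}\ind(\zb_t=\zb_1))]$, gradient descent reaches an $\epsilon$--approximate optimum within polynomially many steps, whence $\lambda\,\EE[\|\gb(\xb)-\hb(\yb)\|_2^{2}]\le\epsilon+2Be^{-\gamma/\tau}$. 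Chebyshev turns this into a $0.99$--probability pointwise bound $\|\gb(\xb)-\hb(\yb)\|_2\le\gamma/(4\|\Hb\|_2)$, and a one-line triangle-inequality calculation then delivers the margin $\gamma/2$. This argument is initialization-free and never requires the regularization gradient to push every per-pair margin $m_{jk}$ upward.

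Your step-by-step ramp-up instead hinges on the per-pair lower bound $(\vb_j-\vb_k)^{\top}\Hb_1^{\top}\Hb_1\mathbf{D}\,\Gb_1^{\top}\Gb_1\vb_j=\Omega(\gamma^{2}\min_k p_k)$ uniformly over $j\neq k$, and, as you yourself flag, this needs a near-equi-angular $\{\vb_k\}$ and well-conditioned $\Gb_1,\Hb_1$, which Definition~\ref{def:data} does not impose (it only requires $\|\vb_k\|_2=1$ and $\max_{k\ne k'}\langle\vb_k,\vb_{k'}\rangle=1-\gamma$, plus uniform $p_k$; the third-party inner products $\langle\vb_l,\vb_j\rangle$ and $\langle\vb_l,\vb_k\rangle$ for $l\neq j,k$ are unconstrained, and the $l=k$ term $-(1-\langle\vb_j,\vb_k\rangle)\langle\vb_j,\vb_k\rangle$ alone can be as negative as $-1/4$). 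Without that lower bound the monotone-ramp-up induction collapses: there is no guarantee the regularizer pushes the bottleneck margin in the right direction. A secondary issue is scope: Theorem~\ref{thm:regpositive} is stated under the conditions of Theorem~\ref{thm:case} (arbitrary initialization), and your argument only runs from the specific pre-aligned initialization $\Wb^{(0)}\propto\Wb^{*}$ borrowed from Theorem~\ref{thm:margin without reg}, which was the \emph{negative} construction.

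If you want to salvage the dynamics viewpoint you would have to replace the per-pair bound with a global potential (e.g., the regularizer value itself or $\|\Wb-\Wb^{*}\|_F^2$), at which point you essentially recover the paper's optimality-gap argument. The paper's approach buys robustness to the $\vb_k$-configuration and to initialization, at the cost of giving a high-probability rather than deterministic margin (via Chebyshev). Your approach, where it applies, would give a deterministic margin and a concrete growth rate, but only under extra geometric hypotheses not present in the theorem statement.
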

\begin{proof}
For simplicity, consider the case that we can access the population loss and its gradient, i.e., $n \rightarrow \infty$. The regularized loss then becomes, 
\begin{align*}
L^{new} = L_{\cD^{B}}(f, \tau) + \lambda \EE[\|\gb(\xb) - \hb(\yb)\|_{2}^{2}].
\end{align*}
Since the new loss is still convex and even strongly convex. By applying the same technique in the proof of the Theorem~\ref{thm:case}, within polynomial iterations, we can find $L^{new}(f, \tau, \lambda) \leq L^{new}(f^{*}, \tau, \lambda) + \epsilon$. Besides, 
\begin{align*}
L^{new}(f^{*}, \tau, \lambda) &= L_{\cD^{B}}(f^{*}, \tau)  \leq 2\EE\bigg[\log\bigg(\sum_{t \in [B]}\ind(\zb_{t} = \zb_{1})\bigg)\bigg] + 2B\exp(-\gamma/\tau) 
\end{align*} 
where the first equality is by plugging in  $\Wb^{*} = \Hb(\Hb^{\top}\Hb)^{-1}\Pb(\Gb^{\top}\Gb)^{-1}\Gb^{\top}, \gb(\xb) = \Wb\xb, \hb(\yb) = \yb$ , the inequality is by Lemma~\ref{lm:optimal bound}. Thus we have that 
\begin{align*}
L_{\cD^{B}}(f, \tau) + \lambda \EE[\|\gb(\xb) - \hb(\yb)\|_{2}^{2}] \leq   2\EE\bigg[\log\bigg(\sum_{t \in [B]}\ind(\zb_{t} = \zb_{1})\bigg)\bigg] + \epsilon', 
\end{align*}
where $\epsilon' = \epsilon + 2B\exp(-\gamma/\tau)$. By \eqref{eq:one side} and \eqref{eq:one side2}, we know that $L_{\cD^{B}}(f, \tau) \geq 2\EE\bigg[\log\bigg(\sum_{t \in [B]}\ind(\zb_{t} = \zb_{1})\bigg)\bigg]$. Therefore, we can conclude that 
\begin{align*}
\EE[\|\gb(\xb)- \hb(\yb)\|_{2}^{2}] \leq  \epsilon'/\lambda \leq \gamma^{2}\min p_{k}/(1600\|\Hb\|_{2}^2), 
\end{align*}
where the last inequality is by choose $\epsilon \leq \lambda \gamma^{2}\min p_k/(3200\|\Hb\|_{2}^2)$ and $\tau \leq \gamma/\log(\gamma^{2}\min p_{k}/(6400B\|\Hb\|_{2}^2))$.
Then by Chebyshev's inequality, for any $\zb$, with probability $1- 0.01$ we have $\|\gb(\xb) - \hb(\yb)\|_{2} \leq \sqrt{100\max p_k^{-1}\EE[\|\gb(\xb)- \hb(\yb)\|_{2}^{2}]} \leq \gamma/(4\|\Hb\|_{2})$. Then for any $\yb'$ that has the different shared feature from $\yb$ (i.e., $\zb' \not= \zb$) we have that 
\begin{align*}
&\la \gb(\xb) , \hb(\yb') \ra - \la \gb(\xb) , \hb(\yb) \ra \\
&\leq \la \hb(\yb) , \hb(\yb') \ra  - \la \hb(\yb) , \hb(\yb) \ra +  \|\gb(\xb) - \hb(\yb)\|_{2} \cdot  (\| \hb(\yb')\|_{2} + \|\hb(\yb)\|_{2}) \\
&\leq -\gamma + \gamma/2 \\
&\leq - \gamma/2,
\end{align*}
where the first inequality is by triangle inequality, the second inequality is by $\|\gb(\xb) - \hb(\yb)\|_{2}\leq \gamma/(4\|\Hb\|_{2})$ and $\|\hb(\yb')\|_{2} = \|\hb(\yb)\|_{2} \leq \|\Hb\|_{2}$ since $\bzeta = \zero$.
\end{proof}

\end{document}